\def\eqref#1{equation~\ref{#1}}
\def\1{\bm{1}}
\def\rvc{{\mathbf{c}}}
\def\rvx{{\mathbf{x}}}
\def\rvz{{\mathbf{z}}}
\DeclareMathAlphabet{\mathsfit}{\encodingdefault}{\sfdefault}{m}{sl}
\SetMathAlphabet{\mathsfit}{bold}{\encodingdefault}{\sfdefault}{bx}{n}
\newtheorem{theorem}{Theorem}
\title{Causal View of Time Series Imputation: Some Identification  Results on 

Missing Mechanism}
\author{
Ruichu Cai$^{1,2}$
\and
Kaitao Zheng$^1$\and
Junxian Huang$^1$\and
Zijian Li$^3$\footnote{corresponding author: leizigin@gmail.com}\and
Zhengming Chen$^1$\and\\
Boyan Xu$^1$\And
Zhifeng Hao$^4$\\
\affiliations
$^1$School of Computer Science, Guangdong University of Technology, Guangzhou 510006, China\\
$^2$Peng Cheng Laboratory, Shenzhen 518066, China\\
$^3$Mohamed bin Zayed University of Artificial Intelligence, Masdar City, Abu Dhabi\\
$^4$College of Science, Shantou University, Shantou 515063, China\\
\emails
cairuichu@gmail.com, 
zhengkaitao142857@qq.com,\\
\{huangjunxian459, leizigin, chenzhengming1103, hpakyim\}@gmail.com,
haozhifeng@stu.edu.cn
}
\begin{document}

\maketitle

\begin{abstract}
    Time series imputation is one of the most challenge problems and has broad applications in various fields like health care and the Internet of Things. Existing methods mainly aim to model the temporally latent dependencies and the generation process from the observed time series data. In real-world scenarios, different types of missing mechanisms, like MAR (Missing At Random), and MNAR (Missing Not At Random) can occur in time series data. However, existing methods often overlook the difference among the aforementioned missing mechanisms and use a single model for time series imputation, which can easily lead to misleading results due to mechanism mismatching. In this paper, we propose a framework for time series imputation problem by exploring \textbf{D}ifferent \textbf{M}issing \textbf{M}echanisms (\textbf{DMM} in short) and tailoring solutions accordingly. Specifically, we first analyze the data generation processes with temporal latent states and missing cause variables for different mechanisms. Sequentially, we model these generation processes via variational inference and estimate prior distributions of latent variables via normalizing flow-based neural architecture. Furthermore, we establish identifiability results under the nonlinear independent component analysis framework to show that latent variables are identifiable. Experimental results show that our method surpasses existing time series imputation techniques across various datasets with different missing mechanisms, demonstrating its effectiveness in real-world applications. 
\end{abstract}
\section{Introduction}

While data-driven deep models have achieved significant performance on time series analysis \cite{tang2021probabilistic,wu2022timesnet} and massive applications, like traffic \cite{jiang2023pdformer,cai2025disentangling}, weather \cite{wu2023interpretable}, and the Internet of Things \cite{cai2025learning}, their prosperities usually require complete data. However, the missing values of time series led by sensor failures hinder the deployment of existing algorithms to real-world scenarios. To address this challenge, time series imputation \cite{nie2023imputeformer,fang2023bayotide} is proposed. The primary goal of time series imputation is to leverage the observed data and the missing indicators to identify the distribution of time series data.

To identify the distribution from the time series data\cite{li2025on}, different approaches have been proposed to identify the distribution from the time series data with missing values\cite{li2024and}. Previously, researchers used statistical tools \cite{acuna2004treatment,van2011mice} to address the time series imputation. Recent methods based on deep neural networks can be categorized into the predictive and the generative methods. For example, the predictive models harness different neural architectures like recursive neural networks \cite{cao2018brits,che2018recurrent}, convolution neural networks \cite{wu2022timesnet}, and Transformer \cite{nie2023imputeformer,liu2023itransformer} to model the inherent dependencies of among variables. Additionally, the generative methods use varied deep generative models like variational autoencoders (\textbf{VAE}) \cite{choi2023conditional,fortuin2020gp,Cai2025LongTermIC}, generative adversarial networks (\textbf{GANs}) \cite{luo2018multivariate,zhang2021missing}, and diffusion models \cite{alcaraz2022diffusion,tashiro2021csdi,chen2023provably} to model the distribution of complete time series data. In summary, these methods model the temporal latent process and generation from latent to observed variables for missing value imputation. Please refer to Appendix A for related work on time series imputation and identification of the temporal latent process.
\begin{figure*}[t]
    \centering
\includegraphics[width=1.85\columnwidth]{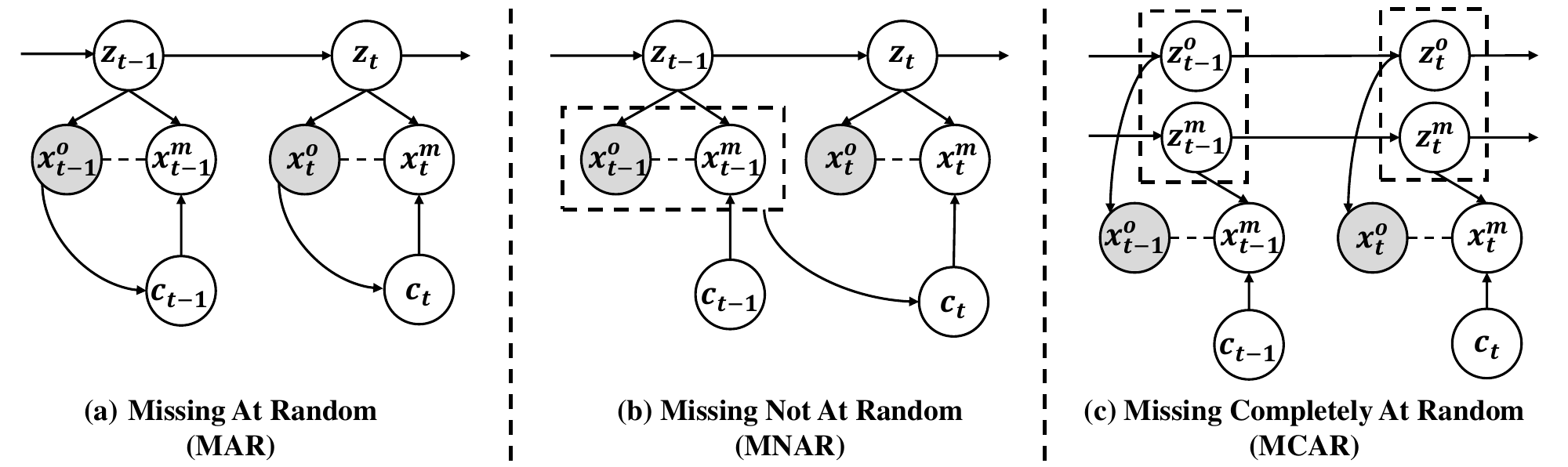}
\vspace{-3mm}
    \caption{Data generation processes of time series data under different missing mechanisms. $\rvz_t$ are temporal latent variables that describe the temporal dependencies. $\rvx_t^o$ are the observed variables, $\rvx_t^m$ are the missing data and $\rvc_t$ denotes the missing cause variables. (a) The data generation process under the missing at random mechanism, where missingness is related to the observed data but not the unobserved data. (b) The data generation process under the missing not at random mechanism, where the missingness is influenced by the observed data and missing data in the previous time step. (c) The data generation process under the missing completely at random mechanism, where missing data is led by random issues, and the latent missing variables can be considered as random noises. }
    \label{fig:missing_mechanism}
    \vspace{-4mm}
\end{figure*}

In practical applications, time series data can be affected by various types of missing data mechanisms, such as MCAR (Missing Completely At Random), MAR (Missing At Random), and MNAR (Missing Not At Random). While current methods have achieved success in time series imputation, they often employ a single model that does not account for the differences between these mechanisms. Given an example in healthcare that follows the MNAR mechanisms, patients who experience worsening conditions may not return for scheduled follow-ups, resulting in missing data for the later stages of the treatment. In this case, if a model uses a mismatched missing mechanism like MCAR and ignores the dependency between the missing format and the observed values, it is hard for it to achieve an accurate imputation performance. Therefore, it is essential for time series imputation to model the time series data according to different missing mechanisms.

To better exploit the missing mechanisms, we explore \textbf{D}ifferent \textbf{M}issing \textbf{M}echanisms and propose the corresponding methods, forming a general framework named \textbf{DMM}. We first analyze the data generation processes of time series data under different missing mechanisms, including Missing At Random (MAR), Missing Not At Random (MNAR), and Missing Completely At Random (MCAR). \cite{locatello2019challenging} find that the MCAR mechanism is not identifiable and rare in real-world scenarios. Based on the aforementioned data generation processes, we employ variational inference to model how missing data are generated and the normalizing flow-based neural architectures to enforce the identification of latent variables. Moreover, we analyze the identification results for different missing mechanisms, in which the temporal latent variables and latent missing causes can be identified in the case of MAR and MNAR. Our approach is validated through massive semisynthetic datasets on all the missing mechanisms, the experimental results show that our DMM method outperforms the state-of-the-art baselines. Please refer to Appendix D for more details on missing mechanism. 

\section{Preliminaries}
In this paper, we focus on the time series imputation problem in the presence of various types of missing mechanisms. We first formalize the generation process for the time series imputation problem, and then introduce a graphical model (termed \textit{imputation m-graphs}) to represent it.

\paragraph{Data-generating process.} We first let the time series data $X=\{\rvx_1, \rvx_2, \cdots,\rvx_T\}, \rvx_t \in \mathbb{R}^n$ are generated from latent variables $\rvz_{t} \in \mathcal{Z} \subseteq \mathbb{R}^n$ by an invertible and nonlinear mixing function $g$ as shown in Equation (\ref{equ:gen}):
\begin{equation}
\label{equ:gen}
    \rvx_t=g(\rvz_t)
\end{equation}

Moreover, the $i$-th dimension latent variable $z_{t,i}$ is time-delayed and causally related to the historical latent variables $\rvz_{t-\tau}$ with the time lag $\tau$ via a nonparametric function $f_i$, which is shown as in Equation (\ref{equ:latent Process}).
\begin{equation}
\label{equ:latent Process}
    z_{t,i}=f_i(z_{t-\tau,k}|z_{t-\tau,k}\in \textbf{Pa}(z_{t,i}), \epsilon_{t,i}) \quad\text{with} \quad\epsilon_{t,i}\sim p_{\epsilon_{t,i}},
\end{equation}
where $\textbf{Pa}(z_{t,i})$ denotes the set of latent variables that directly cause $z_{t,i}$ and $\epsilon_{t,i}$ denotes the temporally and spatially independent noise extracted from a distribution $P_{\epsilon_{t,i}}$. Here, we provide a medical example to explain this data generation process. First, we let $\rvx_t$ be the measurable index like body temperature or blood pressure. And then $\rvz_t$ can be considered as the virus concentration, which is hard to measure.

\paragraph{Graphical Notation.} To describe the time series data with missing values, given an entire time series data $\rvx_t$, we further partition the time series data into the observed variables $\rvx_t^o$ and missing variables $\rvx_t^m$, such that $\rvx_t=\rvx_t^o\cup\rvx_t^m$. To model the generation process $\rvx_t$, we use the missing graph with imputation problems (abbreviated as \textit{imputation m-graphs}) such that $\rvx_t$ can be represented by a causal graph, where the gray nodes represent observed variables, and the white nodes represent unobserved variables. Note that this graph differs from the m-graph \cite{mohan2013graphical}, where, in the imputation m-graph, the missing variable is determined by its cause variables. Since the direct causal relationships between observed and missing variables are unknown (e.g., the edge between $\rvx_t^o$ and $\rvx_t^m$ exist or not), we use dashed lines to represent these uncertain connections. Based on the imputation m-graph, one can easily distinguish different missing mechanisms for the imputation problem, leading to more general identifiability results (See the identification results section).
\begin{figure*}[t]
    \centering  
\includegraphics[width=1.8\columnwidth]{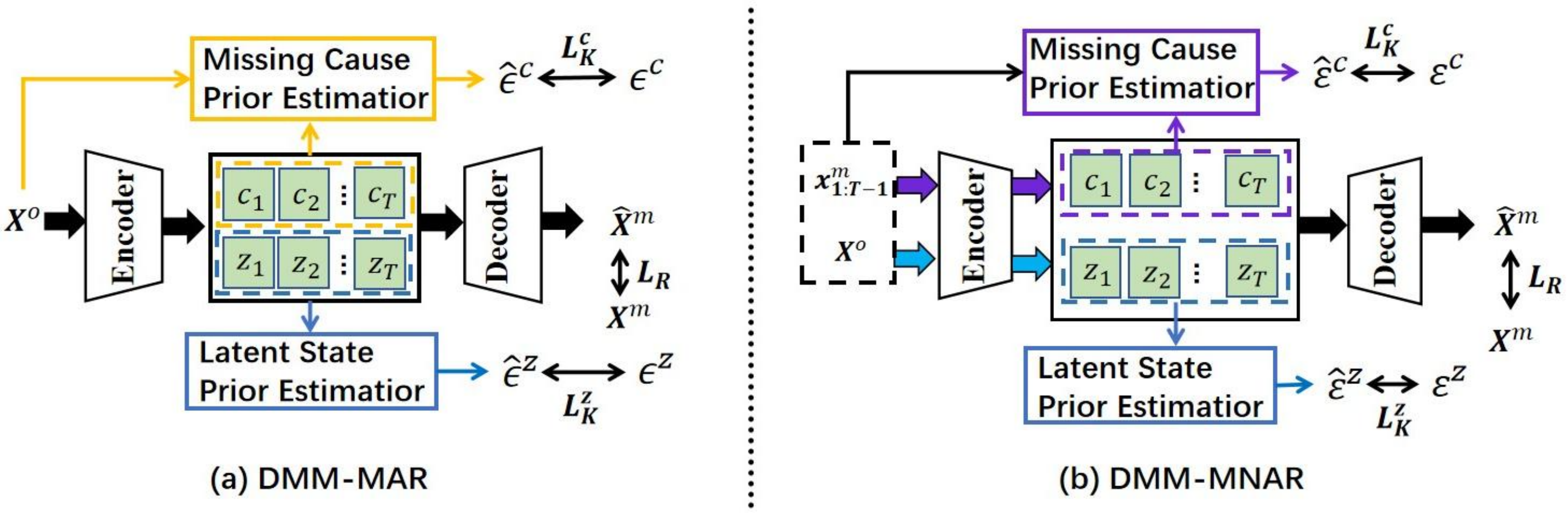}
\vspace{-3mm}
    \caption{Illustration of the DMM framework. $X^o$ are the observed variables, $X^m$ are the missing data. The latent state variables $\rvz_{1:T}$ and the missing cause variables $\rvc_{1:T}$ are extracted from the encoder. The latent state and missing cause prior networks for DMM-MAR and DMM-MNAR are used to estimate the prior distributions.}
    \label{fig:dmm_model}
    \vspace{-5mm}
\end{figure*}
\paragraph{Objectives.} In the context of time series imputation, we assume that the existence of a training set
$\{X^o_i, X^m_i\}_{i=1}^{M}$ with the size of $M$. While in the I.I.D test set, we can only access $\{X^o_i\}_{i=1}^T$ with the size of $T$, our \textit{goal} is to use the training dataset to obtain a model, such that it can identify the distribution $P(X^o, X^m)$ of test data. 

As mentioned above, existing methods may suffer from mechanism mismatching problems since they usually use one model to cover all the missing mechanisms, making it hard to identify the distribution $P(X^o, X^m)$. 
In general, all missing data problems fall into one of the following mechanisms \cite{rubin1976inference}: missing completely at random (MCAR), missing at random (MAR), and missing not at random (MNAR). Fortunately, with the imputation m-graph, these missing mechanisms can be precisely categorized by incorporating the missing cause variables $\rvc_t$, which are introduced as follows.

\subsection{Missing At Random}
When data are \textbf{M}issing \textbf{A}t \textbf{R}andom (MAR), the missingness is related to known variables but not to the values that are missing. Specifically, the missing cause variables are influenced by the observed variables, and they further lead to the missingness. Suppose the time series data are generated by the latent process shown in Eq. (\ref{equ:gen}) and Eq. (\ref{equ:latent Process}), the MAR missingness can be further represented by Figure \ref{fig:missing_mechanism}(a), where the missing cause variables $\rvc_t$ ($\rvc_t \to \rvx^{m}_{t}$) are influenced by the observed variable $\rvx^{o}_{t}$ (i.e., $\rvx^{o}_{t} \to \rvc_t$). The dashed edge in Figure \ref{fig:missing_mechanism}(a) between $\rvx^{m}_{t}$ and $\rvx^{o}_{t}$ indicates that we allow a direct causal relationship between them. 

By combining the generating process and Figure \ref{fig:missing_mechanism} (a), the joint distribution in MAR can be formalized as:
\begin{equation}
\begin{split}
    p(\rvx^o_{1:T}, \rvx^m_{1:T})&=\int_{\rvc_{1:T}} \int_{\rvz_{1:T}} P(\rvx^m_{1:T}|\rvc_{1:T}, \rvz_{1:T}, \rvx^o_{1:T})\\& P(\rvz_{1:T}|\rvx^o_{1:T}) P(\rvc_{1:T}|\rvx ^o_{1:T}) P(\rvx^o_{1:T}) d_{\rvc_{1:T}} d_{\rvz_{1:T}},
\end{split}
\end{equation}
where $\rvz_{1:T}:=\{\rvz_1,\cdots,\rvz_T\}$ and $\rvc_{1:T}:=\{\rvc_1,\cdots,\rvc_T\}$. In this case, we can identify the joint distribution by modeling 1) generative model $P(\rvx^m_{1:T}|\rvc_{1:T}, \rvz_{1:T}, \rvx^o_{1:T})$ of missing values; 2) the conditional distributions of missing cause and latent variables, i.e., $P(\rvc_{1:T}|\rvx^o_{1:T})$ and $P(\rvz_{1:T}|\rvx^o_{1:T})$.

Establishing the joint distribution for MAR allows us to perform accurate variational inference to recover the distribution $p(\rvz_t)$ and $p(\rvc_t)$, and identify $P(X^o, X^{m})$ accordingly (see implementation section).

\subsection{Missing Not At Random}
When data are \textbf{M}issing \textbf{N}ot \textbf{A}t \textbf{R}andom (MNAR), the missingness depends on unobserved data. Specifically, the missing causes are influenced by the historical missing variables, and they further lead to the current missingness. Suppose the time series data are generated by the latent process shown in Eq. (\ref{equ:gen}) and Eq. (\ref{equ:latent Process}), the MNAR missingness can be further described by Figure \ref{fig:missing_mechanism}(b), where the missing causes $\rvc_t$ ($\rvc_t \to \rvx^{m}_{t}$) are influenced by historical missing variables $\rvx^{o}_{t-1}$ and $\rvx^{m}_{t-1}$, i.e., $\rvx^{o}_{t-1} \to \rvc_t \& \rvx^{m}_{t-1} \to \rvc_t$. The dashed box in Figure \ref{fig:missing_mechanism}(b) means that both $\rvx^{o}_{t-1}$ and $\rvx^{m}_{t-1}$ are causes of $\rvc_t$. Similarly, based on the corresponding imputation m-graph, the joint distribution can be formalized as:
\begin{equation}
\begin{split}
       &\ p(\rvx^o_{1:T}, \rvx^m_{1:T})\\=&\int_{\rvc_{1:T}} \int_{\rvz_{1:T}}  P(\rvx^o_{1:T}, \rvx^m_{1:T}, \rvc_{1:T}, \rvz_{1:T}) d_{\rvc_{1:T}} d_{\rvz_{1:T}}\\=&\int_{\rvc_{1:T}} \int_{\rvz_{1:T}} P(\rvx^m_{1}|\rvc_{1}, \rvz_{1}, \rvx^o_{1})P(\rvz_{1}|\rvx^o_{1}) P(\rvc_{1}) P(\rvx^o_{1})\\& \prod_{t=2}^{T} P(\rvx^m_{t}|\rvc_{t}, \rvz_{t}, x^o_{t})P(\rvz_{t}|\rvx^o_{t}) P(\rvc_{t}|\rvx_{t-1}) P(\rvx^o_{t})d_{\rvc_{1:T}} d_{\rvz_{1:T}},
\end{split}
\end{equation}
where $\rvz_{1:T}:=\{\rvz_1,\cdots,\rvz_T\}$ and $\rvc_{1:T}:=\{\rvc_1,\cdots,\rvc_T\}$. In this case, we can identify the joint distribution by modeling 1) generative model $P(\rvx^m_{t}|\rvc_{t}, \rvz_{t}, \rvx^o_{t})$ of missing values; 2) the conditional distributions of missing cause and latent variables, i.e., $P(\rvc_{t}|\rvx_{t-1})$ and $P(\rvz_{t}|\rvx^o_{t})$. 

\subsection{Missing Completely At Random}

When data are \textbf{M}issing \textbf{C}ompletely \textbf{A}t \textbf{R}andom (MCAR), however, it is impossible to reconstruct the latent process and recover $p(\rvx_{1:T}^o, \rvx_{1:T}^m)$, since the missingness are independent of all other variables, as shown in Figure \ref{fig:missing_mechanism}(c). 
In this case, distribution $p(\rvx_{1:T}^o,\rvx_{1:T}^m)=\int_{\rvz_{1:T}^o,\rvz_{1:T}^m,\rvc_{1:T}}$ $p(\rvx_{1:T}^o|\rvz_{1:T}^o)p(\rvx_{1:T}^m|\rvz_{1:T}^m,\rvc_{1:T})p(\rvz_{1:T}^o,\rvz_{1:T}^m,\rvc_{1:T})d_{\rvz_{1:T}^m}d_{\rvz_{1:T}^o}\\d_{\rvc_{1:T}}$ are not identifiable since it is hard to identify $p(\rvz_{1:T}^o,\rvz_{1:T}^m,\rvc_{1:T})$ without further auxiliary variables \cite{locatello2019challenging}.

In real-world scenarios, this case is rare since complex relationships exist among latent variables, making that the observed and missing variables are not independent. Since the MCAR mechanism is rare in real-world scenarios, we mainly investigate the time series imputation problem under the MAR and MNAR scenarios.

\section{Implementation of DMM Framework}
Based on these data generation processes, we introduce the DMM framework as shown in Figure \ref{fig:dmm_model}, which models the data generation process of MAR and MNAR mechanisms. Specifically, the DMM framework contains two models, which we name for MAR and MNAR mechanisms DMM-MAR and DMM-MNAR, respectively. Please refer to Appendix F for implementation details.

\subsection{DMM-MAR model}
The DMM-MAR model is shown in Figure \ref{fig:dmm_model}(a), which is built on a variational inference neural architecture with prior estimators for latent states and missing cause variables.

\paragraph{Sequential Variational Backbone architecture for DMM-MAR.} We effectively leverage the variational autoencoder to model the time series data. Specifically, for the data generation process of MAR, we have the following approach:
\begin{equation}
\begin{split}ELBO_{A}=&\underbrace{\mathbb{E}_{q(\rvz_{1:T},\rvc_{1:T}|\rvx^o_{1:T})}\ln p(\rvx^m_{1:T}|\rvz_{1:T},\boldsymbol{\rvc}_{1:T})}_{\mathcal{L}_R}\\&- \underbrace{D_{KL}(q(\rvz_{1:T}|\rvx^o_{1:T})||p(\rvz_{1:T}))}_{\mathcal{L}^z_K}\\&- \underbrace{D_{KL}(q(\boldsymbol{\rvc}_{1:T}|\rvx^o_{1:T})||p(\boldsymbol{\rvc}_{1:T}))}_{\mathcal{L}^c_K},
\end{split}
\end{equation}
where $D_{KL}$ denotes the KL divergence. Specifically, $q(\rvz_{1:T}|\rvx^o_{1:T})$ and $q(\rvc_{1:T}|\rvx^o_{1:T})$ denote the encoders for the latent states $\rvz_t$ and missing cause variables $\rvc_t$, which are used to approximate the prior distribution. Technologically, these encoders can be formalized as follows:
\begin{equation}
\hat{\rvz}_{1:T}=\phi_{\rvz}^A(\rvx_{1:T}^o), \quad \hat{\rvc}_{1:T}=\phi_{\rvc}^A(\rvx_{1:T}^o),
\end{equation}
where $\phi^A_{\rvz}$ and $\phi^A_{\rvc}$ \footnote{We use the superscript symbol to denote estimated variables} denote the latent states encoder and the missing cause encoder, respectively. Moreover, $p(\rvx^m_{1:T}|\rvz_{1:T},{\rvc}_{1:T})$ denote the decoder for missing value prediction, which is formalized as follows:
\begin{equation}
\hat{\rvx}_{1:T}^{m}=F_A(\hat{\rvz}_{1:T},\hat{\rvc}_{1:T}),
\end{equation}
where $F_A$ denotes the predictor and it is implemented by Multi-layer Perceptron networks (MLPs).

\subsection{Prior Estimator for Temporal Latent States and Missing Cause Variables}

To model the prior distributions of temporal latent states and missing cause variables, we propose the latent state prior estimator and missing cause prior estimator, respectively.

As for the latent state prior estimator, we first let $\{r_i^A\}$ be a set of learned inverse transition functions that take the estimated latent variables and output the noise term, i.e., $\hat{\epsilon}^z_{t,i}=r_i^A(\hat{z}_{t,i}, \hat{\rvz}_{t-1})$ and each $r_i^A$ is modeled with MLPs. Then we devise a transformation $\psi^A_z:=\{\hat{\rvz}_{t-1},\hat{\rvz}_t\}\rightarrow\{\hat{\rvz}_{t-1}, \hat{\epsilon}^z_t\}$, and its Jacobian is ${\mathbf{J}_{\psi^A_z}=
    \begin{pmatrix}
        \mathbb{I}&0\\
        * & \text{diag}\left(\frac{\partial r_i^A}{\partial \hat{z}_{t-1,i}}\right)
    \end{pmatrix}}$,
where $*$ denotes a matrix. By applying the change of variables formula, we have the following equation:
\begin{equation}
    \ln p(\hat{\rvz}_{t-1},\hat{\rvz}_t)=\ln p(\hat{\rvz}_{t-1},\hat{\epsilon}^z_t) + \ln|\text{det}(\mathbf{J}_{\psi^A_z})|.
\end{equation}

Since we explicitly assume that the noise term in Equation (\ref{equ:latent Process}) is entirely independent with $\rvz_{t-1}$, we enforce the independence of the estimated noise $\hat{\epsilon}^z_t$ and we have:
\begin{equation}
\ln p(\hat{\rvz}_{t}|\hat{\rvz}_{t-1})=\ln p(\hat{\epsilon}^z_{t}) + \sum_{i=1}^n\ln|\frac{\partial r_i^A}{\partial \hat{z}_{t-1,i}}|.
\end{equation}

Therefore, the latent state prior can be estimated as follows:
\begin{equation}
\begin{split}
    \ln p(\hat{\rvz}_{1:t})= \ln p(\hat{\rvz}_1)+\sum_{\tau=2}^t\!\! \left( \sum_{i=1}^n \ln p(\hat{\epsilon}^z_{\tau,i}) +\sum_{i=1}^n\ln|\frac{\partial r_i^A}{\partial \hat{z}_{\tau-1,i}}| \right),
\end{split}
\end{equation}
where $p(\hat{\epsilon}^z_i)$ follow Gaussian distributions. And another prior $p(\hat{\rvz}_{t+1:T}|\hat{\rvz}_{1:t})$ follows a similar derivation.

As for the missing cause prior estimator, we methodically employ a similar derivation. Then, we specifically designate $\{s_i^A\}$ as a set of learned inverse transition functions, which take the observed variables $x^o_t$ and the missing cause $\hat{\rvc}_t$ as input, and output the noise term, i.e. $\hat{\epsilon}^c_t=s_i^A(x^o_t, \hat{c}_{t,i})$. 

Leaving $s_i^A$ be an MLP, we further devise another transformation $\psi_c^A:=\{x^o_t,\hat{\rvc}_t\} \rightarrow \{x^o_t,\hat{\epsilon}^c_t\}$ with its Jacobian is ${\mathbf{J}_{\psi^A_c}=
    \begin{pmatrix}
        \mathbb{I}&0\\
        * & \text{diag}\left(\frac{\partial s_i^A}{\partial \hat{c}_{t,i}}\right)
\end{pmatrix}}$, where $*$ denotes a matrix. Similar to the derivation of latent state prior, we have:
\begin{equation}
\ln p(\hat{\rvc}_t|x^o_t)=\ln p(\hat{\epsilon}^c_t) + \sum_{i=1}^{n_c}\ln |\frac{\partial s_i^A}{\partial \hat{c}_{t,i}}|.
\label{equ:c_A}
\end{equation}

Therefore, the missing cause prior can be estimated by maximizing the following equation, obtained by summing Equation (\ref{equ:c_A}) across time steps from 1 to t.
\begin{equation}
\begin{split}
&\ln p(\hat{\rvc}_{1:t}|x^o_{1:t})=
\sum_{\tau=1}^t\left(\sum_{i=1}^{n_c}\ln p(\hat{\epsilon}^c_{\tau,i}) + \sum_{i=1}^{n_c}\ln |\frac{\partial s_i^A}{\partial \hat{c}_{\tau,i}}|\right).
\end{split}
\end{equation}
\begin{table*}[]
\resizebox{\textwidth}{!}{%
\begin{tabular}{c|c|cccc|cccccccccccccccc}
\hline
\multirow{2}{*}{Dataset}  & \multirow{2}{*}{Ratio} & \multicolumn{2}{c}{DMM-MAR}     & \multicolumn{2}{c|}{DMM-MNAR} & \multicolumn{2}{c}{TimeCIB} & \multicolumn{2}{c}{ImputeFormer} & \multicolumn{2}{c}{TimesNet} & \multicolumn{2}{c}{SAITS} & \multicolumn{2}{c}{GPVAE} & \multicolumn{2}{c}{CSDI} & \multicolumn{2}{c}{BRITS} & \multicolumn{2}{c}{SSGAN} \\
                          &                        & MSE            & MAE            & MSE           & MAE           & MSE          & MAE          & MSE             & MAE            & MSE           & MAE          & MSE         & MAE         & MSE         & MAE         & MSE         & MAE        & MSE         & MAE         & MSE         & MAE         \\ \hline
\multirow{3}{*}{ETTh1}    & 0.2                    & \textbf{0.099} & \textbf{0.212} & 0.118         & 0.232         & 0.285        & 0.405        & 0.666           & 0.510          & 0.139         & 0.241        & 0.252       & 0.312       & 0.213       & 0.339       & 0.334       & 0.327      & 0.115       & 0.239       & 0.152       & 0.279       \\
                          & 0.4                    & \textbf{0.165} & \textbf{0.259} & 0.184         & 0.297         & 0.389        & 0.471        & 0.705           & 0.579          & 0.207         & 0.294        & 0.185       & 0.286       & 0.280       & 0.407       & 0.523       & 0.438      & 0.175       & 0.277       & 0.172       & 0.280       \\
                          & 0.6                    & \textbf{0.217} & \textbf{0.302} & 0.440         & 0.436         & 0.497        & 0.519        & 0.766           & 0.608          & 0.374         & 0.419        & 0.401       & 0.420       & 0.585       & 0.572       & 0.732       & 0.554      & 0.265       & 0.368       & 0.256       & 0.345       \\ \hline
\multirow{3}{*}{ETTh2}    & 0.2                    & \textbf{0.113} & \textbf{0.228} & 0.153         & 0.270         & 0.471        & 0.319        & 0.343           & 0.420          & 0.150         & 0.242        & 0.143       & 0.277       & 0.411       & 0.487       & 0.306       & 0.350      & 0.329       & 0.415       & 0.371       & 0.470       \\
                          & 0.4                    & \textbf{0.214} & \textbf{0.339} & 0.233         & 0.347         & 0.543        & 0.487        & 0.542           & 0.551          & 0.386         & 0.434        & 0.672       & 0.591       & 0.463       & 0.521       & 0.567       & 0.484      & 0.444       & 0.486       & 0.747       & 0.692       \\
                          & 0.6                    & \textbf{0.204} & \textbf{0.313} & 0.206         & 0.317         & 0.767        & 0.601        & 0.548           & 0.558          & 0.260         & 0.322        & 0.313       & 0.409       & 0.660       & 0.634       & 1.100       & 0.688      & 0.695       & 0.641       & 1.547       & 0.965       \\ \hline
\multirow{3}{*}{ETTm1}    & 0.2                    & \textbf{0.029} & \textbf{0.112} & 0.040         & 0.136         & 0.067        & 0.189        & 0.573           & 0.477          & 0.080         & 0.178        & 0.030       & 0.114       & 0.077       & 0.198       & 0.038       & 0.119      & 0.038       & 0.126       & 0.059       & 0.169       \\
                          & 0.4                    & \textbf{0.039} & \textbf{0.129} & 0.061         & 0.165         & 0.099        & 0.229        & 0.585           & 0.488          & 0.127         & 0.221        & 0.041       & 0.134       & 0.109       & 0.237       & 0.049       & 0.132      & 0.046       & 0.137       & 0.078       & 0.198       \\
                          & 0.6                    & \textbf{0.060} & \textbf{0.163} & 0.084         & 0.202         & 0.183        & 0.324        & 0.574           & 0.496          & 0.211         & 0.282        & 0.062       & 0.163       & 0.166       & 0.293       & 0.077       & 0.171      & 0.062       & 0.163       & 0.078       & 0.191       \\ \hline
\multirow{3}{*}{ETTm2}    & 0.2                    & \textbf{0.041} & 0.130          & 0.042         & 0.135         & 0.338        & 0.447        & 0.130           & 0.267          & 0.063         & 0.162        & 0.060       & 0.171       & 0.400       & 0.465       & 0.061       & 0.105      & 0.126       & 0.248       & 0.221       & 0.374       \\
                          & 0.4                    & \textbf{0.044} & \textbf{0.138} & 0.055         & 0.155         & 0.444        & 0.506        & 0.177           & 0.303          & 0.085         & 0.187        & 0.070       & 0.184       & 0.401       & 0.479       & 0.141       & 0.149      & 0.166       & 0.288       & 0.129       & 0.261       \\
                          & 0.6                    & \textbf{0.053} & \textbf{0.157} & 0.075         & 0.184         & 0.715        & 0.626        & 0.161           & 0.290          & 0.128         & 0.228        & 0.108       & 0.229       & 0.481       & 0.516       & 0.335       & 0.242      & 0.298       & 0.395       & 0.211       & 0.336       \\ \hline
\multirow{3}{*}{Exchange} & 0.2                    & \textbf{0.003} & \textbf{0.038} & 0.009         & 0.051         & 0.314        & 0.281        & 0.178           & 0.283          & 0.013         & 0.063        & 0.085       & 0.231       & 0.711       & 0.712       & 0.017       & 0.076      & 0.319       & 0.493       & 0.666       & 0.711       \\
                          & 0.4                    & \textbf{0.007} & \textbf{0.049} & 0.023         & 0.106         & 0.388        & 0.326        & 0.158           & 0.273          & 0.017         & 0.081        & 0.193       & 0.350       & 0.783       & 0.751       & 0.018       & 0.078      & 0.431       & 0.580       & 0.820       & 0.773       \\
                          & 0.6                    & \textbf{0.008} & \textbf{0.058} & 0.030         & 0.121         & 0.445        & 0.372        & 0.201           & 0.296          & 0.024         & 0.101        & 0.224       & 0.382       & 0.834       & 0.771       & 0.055       & 0.143      & 0.669       & 0.707       & 1.235       & 0.961       \\ \hline
\multirow{3}{*}{Weather}  & 0.2                    & \textbf{0.029} & \textbf{0.050} & 0.049         & 0.084         & 0.049        & 0.113        & 0.099           & 0.153          & 0.038         & 0.077        & 0.040       & 0.078       & 0.055       & 0.128       & 0.069       & 0.057      & 0.034       & 0.059       & 0.035       & 0.077       \\
                          & 0.4                    & \textbf{0.035} & \textbf{0.059} & 0.061         & 0.104         & 0.062        & 0.128        & 0.110           & 0.165          & 0.050         & 0.103        & 0.047       & 0.085       & 0.073       & 0.141       & 0.075       & 0.061      & 0.047       & 0.069       & 0.042       & 0.089       \\
                          & 0.6                    & \textbf{0.040} & 0.070          & 0.080         & 0.132         & 0.082        & 0.152        & 0.110           & 0.167          & 0.061         & 0.119        & 0.058       & 0.090       & 0.082       & 0.160       & 0.074       & 0.053      & 0.072       & 0.058       & 0.053       & 0.111       \\ \hline
\end{tabular}%
}
\vspace{-2mm}
\caption{Experiment results in unsupervised scenarios for various datasets with different missing ratios under MAR conditions. }
\label{table:Un_MAR}
\vspace{-3mm}
\end{table*}
\subsection{DMM-MNAR model}
To effectively address the time series imputation model under the MNAR mechanism, we devise the DMM-MNAR model, which is clearly shown in Figure \ref{fig:dmm_model}(b). 

\paragraph{Sequential Variational Backbone architecture for DMM-MNAR}
Similar to the DMM-MAR model, we employ the variational inference to model the data generation process of the MNAR mechanism, and the ELBO is
\begin{equation}
\begin{split}ELBO_{B}=&\underbrace{\mathbb{E}_{q(\rvz_{1:T},\rvc_{1:T}|\rvx_{1:T})}\ln p(\rvx^m_{1:T}|\rvz_{1:T},\boldsymbol{\rvc}_{1:T})}_{\mathcal{L}_R}\\&- \underbrace{D_{KL}(q(\rvz_{1:T}|\rvx^o_{1:T})||p(\rvz_{1:T}))}_{\mathcal{L}^z_K}\\&- \underbrace{D_{KL}(q(\boldsymbol{\rvc}_{1:T}|\rvx_{1:T-1})||p(\boldsymbol{\rvc}_{1:T}))}_{\mathcal{L}^c_K},
\end{split}
\end{equation}
where $D_{KL}$ denotes the KL divergence. Similar to DMM-MNAR, we let $q(\rvz_{1:T}|\rvx^o_{1:T})$ and $q(\rvc_{1:T}|\rvx_{1:T-1})$ denote the encoders for the latent states $\rvz_t$ and missing cause variables $\rvc_t$. They are formalized as follows:
\begin{equation}
    \hat{\rvz}_{1:T}=\phi^B_{\rvz}(\rvx_{1:T}^o), \quad \hat{\rvc}_{1:T}=\phi^B_{\rvc}(\rvx_{1:T-1}),
\end{equation}

Moreover, $p(\rvx_{1:T}|\rvz_{1:T},{\rvc}_{1:T})$ denote the decoder for missing value prediction, which is formalized as follows:
\begin{equation}
    \hat{\rvx}_{1:T}^{m}=F_B(\hat{\rvz}_{1:T},\hat{\rvc}_{1:T}),
\end{equation}
where $F_B$ denotes the predictor and it is implemented by Multi-layer Perceptron networks (MLPs).

\subsection{Prior Estimator for Temporal Latent States and Missing Cause Variables}

Similarly, we also propose the latent state prior estimator and missing cause prior estimator to model the prior distributions of temporal latent states and missing cause variables.

As for the latent state prior estimator, we first let $\{r_i^B\}$ be a set of learned inverse transition functions that take the estimated latent variables and output the noise term, i.e., $\hat{\varepsilon}^z_{t,i}=r_i^B(\hat{z}_{t,i}, \hat{\rvz}_{t-1})$ \footnote{We use the superscript symbol to denote estimated variables.} and each $r^B_i$ is modeled with MLPs. Then we devise a transformation $\psi^B_z:=\{\hat{\rvz}_{t-1},\hat{\rvz}_t\}\rightarrow\{\hat{\rvz}_{t-1}, \hat{\varepsilon}^z_t\}$, and its Jacobian is ${\mathbf{J}_{\psi^B_z}=
    \begin{pmatrix}
        \mathbb{I}&0\\
        * & \text{diag}\left(\frac{\partial r_i^B}{\partial \hat{z}_{t-1,i}}\right)
    \end{pmatrix}}$,
where $*$ denotes a matrix. By applying the change of variables formula, we have the following equation:
\begin{equation}
    \ln p(\hat{\rvz}_{t-1},\hat{\rvz}_t)=\ln p(\hat{\rvz}_{t-1},\hat{\varepsilon}^z_t) + \ln|\text{det}(\mathbf{J}_{\psi^B_z})|.
\end{equation}

Since we explicitly assume that the noise term in Equation (\ref{equ:latent Process}) is entirely independent with $\rvz_{t-1}$, we enforce the independence of the estimated noise $\hat{\varepsilon}^z_t$ and we have:
\begin{equation}
\ln p(\hat{\rvz}_{t}|\rvz_{t-1})=\ln p(\hat{\varepsilon}^z_{t}) + \sum_{i=1}^n\ln|\frac{\partial r_i^B}{\partial \hat{z}_{t-1,i}}|.
\end{equation}

Therefore, the latent state prior can be estimated as follows:
\begin{equation}
\begin{split}
    \ln p(\hat{\rvz}_{1:t})= \ln p(\hat{\rvz}_1)+\sum_{\tau=2}^t\!\! \left( \sum_{i=1}^n \ln p(\hat{\varepsilon}^z_{\tau,i})+\sum_{i=1}^n\ln|\frac{\partial r_i^B}{\partial \hat{z}_{\tau-1,i}}| \right),
\end{split}
\end{equation}
where $p(\hat{\varepsilon}^z_i)$ follow Gaussian distributions. And another prior $p(\hat{\rvz}_{t+1:T}|\hat{\rvz}_{1:t})$ follows a similar derivation.

As for the missing cause prior estimator, The prior distribution of missing cause can be formalized as:
As for the missing cause prior estimator, we employ a similar derivation and let $\{s_i^B\}$ be a set of learned inverse transition functions, which take the time series data $x_{t-1}$ and missing cause $\hat{\rvc}_t$ as input and output the noise term, i.e. $\hat{\varepsilon}^c_t=s_i^B(\rvx^o_{t-1},\hat{\rvx}^m_{t-1}, \hat{c}_{t,i})$.

Leaving $s_i^B$ be an MLP, we further devise another transformation $\psi_c^B:=\{\rvx^o_{t-1},\hat{\rvx}^m_{t-1},\hat{\rvc}_t\} \rightarrow \{\rvx^o_{t-1},\hat{\rvx}^m_{t-1},\hat{\varepsilon}^c_t\}$ with its Jacobian is ${\mathbf{J}_{\psi^B_c}=
    \begin{pmatrix}
        \mathbb{I}&0\\
        * & \text{diag}\left(\frac{\partial s_i^B}{\partial \hat{c}_{t,i}}\right)
\end{pmatrix}}$, where $*$ denotes a matrix. Similar to latent state prior derivation, we have:
\begin{equation}
\ln p(\hat{\rvc}_t|x^o_{t-1}, \hat{x}^m_{t-1})=\ln p(\hat{\varepsilon}^c_t) + \sum_{i=1}^{n_c}\ln |\frac{\partial s_i^B}{\partial \hat{c}_{t,i}}|.
\label{equ:c_B}
\end{equation}

Therefore, the missing cause prior can be estimated by maximizing the following equation, obtained by summing Equation (\ref{equ:c_B}) across time steps from 1 to t.

\begin{equation}
\begin{split}
&\ln p(\hat{\rvc}_{1:t}|x^o_{1:t-1}, \hat{x}^m_{1:t-1})=
\ln p(\hat{\rvc}_1)\\ &+\sum_{\tau=2}^t\left(\sum_{i=1}^{n_c}\ln p(\hat{\varepsilon}^c_{\tau,i}) + \sum_{i=1}^{n_c}\ln |\frac{\partial s_i^B}{\partial \hat{c}_{\tau,i}}|\right).
\end{split}
\end{equation}

\begin{table*}[]
\resizebox{\textwidth}{!}{%
\begin{tabular}{c|c|cccc|cccccccccccccccc}
\hline
\multirow{2}{*}{Dataset}  & \multirow{2}{*}{Ratio} & \multicolumn{2}{c}{DMM-MAR} & \multicolumn{2}{c|}{DMM-MNAR}   & \multicolumn{2}{c}{TimeCIB} & \multicolumn{2}{c}{ImputeFormer} & \multicolumn{2}{c}{TimesNet} & \multicolumn{2}{c}{SAITS} & \multicolumn{2}{c}{GPVAE} & \multicolumn{2}{c}{CSDI} & \multicolumn{2}{c}{BRITS} & \multicolumn{2}{c}{SSGAN} \\
                          &                        & MSE          & MAE          & MSE            & MAE            & MSE          & MAE          & MSE             & MAE            & MSE           & MAE          & MSE         & MAE         & MSE         & MAE         & MSE         & MAE        & MSE         & MAE         & MSE         & MAE         \\ \hline
\multirow{3}{*}{ETTh1}    & 0.2                    & 0.138        & 0.240        & \textbf{0.108} & \textbf{0.230} & 0.327        & 0.475        & 0.651           & 0.503          & 0.149         & 0.248        & 0.165       & 0.271       & 0.256       & 0.375       & 0.326       & 0.324      & 0.125       & 0.248       & 0.149       & 0.281       \\
                          & 0.4                    & 0.238        & 0.329        & \textbf{0.171} & \textbf{0.280} & 0.392        & 0.492        & 0.691           & 0.530          & 0.220         & 0.306        & 0.265       & 0.330       & 0.362       & 0.442       & 0.497       & 0.424      & 0.177       & 0.294       & 0.172       & 0.301       \\
                          & 0.6                    & 0.323        & 0.384        & \textbf{0.262} & \textbf{0.358} & 0.484        & 0.517        & 0.689           & 0.538          & 0.309         & 0.373        & 0.308       & 0.363       & 0.405       & 0.467       & 0.659       & 0.521      & 0.297       & 0.388       & 0.271       & 0.377       \\ \hline
\multirow{3}{*}{ETTh2}    & 0.2                    & 0.088        & 0.197        & \textbf{0.078} & \textbf{0.187} & 0.501        & 0.416        & 0.172           & 0.348          & 0.127         & 0.230        & 0.143       & 0.273       & 0.389       & 0.486       & 0.229       & 0.309      & 0.211       & 0.334       & 0.306       & 0.421       \\
                          & 0.4                    & 0.099        & 0.216        & \textbf{0.089} & \textbf{0.200} & 0.563        & 0.482        & 0.299           & 0.387          & 0.187         & 0.273        & 0.218       & 0.340       & 0.572       & 0.573       & 0.516       & 0.472      & 0.293       & 0.402       & 0.653       & 0.644       \\
                          & 0.6                    & 0.178        & 0.268        & \textbf{0.133} & \textbf{0.255} & 0.647        & 0.569        & 0.450           & 0.471          & 0.325         & 0.353        & 0.293       & 0.396       & 0.831       & 0.704       & 0.946       & 0.646      & 0.547       & 0.580       & 0.334       & 0.423       \\ \hline
\multirow{3}{*}{ETTm1}    & 0.2                    & 0.038        & 0.131        & \textbf{0.025} & \textbf{0.104} & 0.079        & 0.206        & 0.608           & 0.481          & 0.082         & 0.179        & 0.028       & 0.110       & 0.080       & 0.204       & 0.032       & 0.110      & 0.031       & 0.113       & 0.059       & 0.166       \\
                          & 0.4                    & 0.053        & 0.157        & \textbf{0.039} & \textbf{0.133} & 0.104        & 0.242        & 0.576           & 0.480          & 0.128         & 0.222        & 0.042       & 0.133       & 0.104       & 0.230       & 0.049       & 0.134      & 0.041       & 0.139       & 0.074       & 0.192       \\
                          & 0.6                    & 0.075        & 0.192        & \textbf{0.061} & \textbf{0.168} & 0.146        & 0.289        & 0.591           & 0.501          & 0.208         & 0.281        & 0.066       & 0.171       & 0.156       & 0.287       & 0.088       & 0.179      & 0.073       & 0.179       & 0.070       & 0.182       \\ \hline
\multirow{3}{*}{ETTm2}    & 0.2                    & 0.047        & 0.135        & \textbf{0.045} & \textbf{0.134} & 0.439        & 0.497        & 0.124           & 0.251          & 0.059         & 0.157        & 0.046       & 0.143       & 0.226       & 0.351       & 0.059       & 0.189      & 0.127       & 0.254       & 0.164       & 0.310       \\
                          & 0.4                    & 0.052        & 0.148        & \textbf{0.049} & \textbf{0.146} & 0.569        & 0.579        & 0.151           & 0.274          & 0.115         & 0.213        & 0.076       & 0.194       & 0.413       & 0.489       & 0.061       & 0.194      & 0.159       & 0.277       & 0.085       & 0.193       \\
                          & 0.6                    & 0.072        & 0.174        & \textbf{0.066} & \textbf{0.174} & 0.647        & 0.782        & 0.142           & 0.271          & 0.158         & 0.243        & 0.090       & 0.206       & 0.500       & 0.523       & 0.096       & 0.254      & 0.221       & 0.324       & 0.159       & 0.280       \\ \hline
\multirow{3}{*}{Exchange} & 0.2                    & 0.007        & 0.052        & \textbf{0.004} & \textbf{0.044} & 0.526        & 0.471        & 0.150           & 0.262          & 0.014         & 0.070        & 0.117       & 0.274       & 0.749       & 0.741       & 0.016       & 0.076      & 0.461       & 0.574       & 0.586       & 0.651       \\
                          & 0.4                    & 0.007        & 0.060        & \textbf{0.006} & \textbf{0.053} & 0.552        & 0.479        & 0.201           & 0.298          & 0.016         & 0.079        & 0.191       & 0.342       & 0.790       & 0.760       & 0.015       & 0.077      & 0.565       & 0.637       & 0.756       & 0.729       \\
                          & 0.6                    & 0.009        & 0.063        & \textbf{0.008} & \textbf{0.061} & 0.574        & 0.487        & 0.488           & 0.578          & 0.024         & 0.099        & 0.212       & 0.370       & 0.808       & 0.765       & 0.041       & 0.135      & 0.747       & 0.750       & 0.811       & 0.765       \\ \hline
\multirow{3}{*}{Weather}  & 0.2                    & 0.054        & 0.087        & \textbf{0.032} & \textbf{0.052} & 0.045        & 0.094        & 0.105           & 0.148          & 0.041         & 0.077        & 0.041       & 0.071       & 0.055       & 0.116       & 0.059       & 0.065      & 0.037       & 0.056       & 0.035       & 0.073       \\
                          & 0.4                    & 0.041        & 0.069        & \textbf{0.038} & \textbf{0.062} & 0.061        & 0.126        & 0.104           & 0.157          & 0.054         & 0.093        & 0.050       & 0.076       & 0.075       & 0.147       & 0.073       & 0.080      & 0.057       & 0.067       & 0.042       & 0.090       \\
                          & 0.6                    & 0.071        & 0.119        & \textbf{0.043} & \textbf{0.075} & 0.074        & 0.139        & 0.113           & 0.160          & 0.066         & 0.117        & 0.057       & 0.093       & 0.091       & 0.166       & 0.082       & 0.093      & 0.070       & 0.079       & 0.048       & 0.093       \\ \hline
\end{tabular}%
}
\vspace{-2mm}
\caption{Experiment results in unsupervised scenarios for various datasets with different missing ratios under MNAR conditions. }
\label{table:Un_MNAR}
\vspace{-4mm}
\end{table*}
\subsection{Model Summary}
The difference between the DMM-MAR and DMM-MNAR is the prior estimator. For DMM-MAR, we use the $\rvx_t^o$ to estimate the prior distribution of $\rvc_t$. For DMM-MNAR, we use the $\rvx_{t-1}^o$ and $\rvx_{t-1}^m$ to estimate the prior distribution of $\rvc_t$.

By estimating the prior distribution of latent states and missing causes, we can calculate the KL divergence in Equations (5) and (13). So we can optimize the ELBO to model the data generation processes. The total loss of the proposed two models can be formalized as follows:
\begin{equation}
    \mathcal{L}_{total} = \mathcal{L}_R + \beta \mathcal{L}^z_K + \gamma \mathcal{L}^c_K,
\end{equation}
where $\beta$ and $\gamma$ are hyperparameters. 

In real-world scenarios full of complexity, we do not know which type of missing data mechanism applies. However, we can use model selection methods by running two models on the same data and choosing the one that yields better results. We will verify this in the experimental section.
\section{Identification Results}
In this section, we aim to show that the identifiability for latent state $\rvz_t$ and missing causes $\rvc_t$ under the MAR and MNAR missing mechanisms, providing a theoretical guarantee for the DMM framework. Specifically, we say $\rvz_t$ is `identifiable' if, for each ground-truth changing latent variables $\rvz_t$, there exists a corresponding estimated component $\hat{\rvz}_t$ and an invertible
function $h^z:\mathbb{R}^n\rightarrow \mathbb{R}^n$, such that $\rvz_{t}=h^z(\hat{\rvz}_{t})$. The same applies to $\rvc_t$. Next, we first show how the $\rvz_t$ and $\rvc_t$ are identifiable under MAR.

\begin{theorem}
\textbf{(Identification of Latent States and Missing Causes under MAR.)}
Suppose that the observed
data from missing time series data is generated following the data
generation process, and we make the following assumptions:
\begin{itemize}[leftmargin=*,itemsep=5pt]   
    \item A1 \underline{(\textbf{Smooth, Positive and Conditional independent Den-}} \\
    \underline{\textbf{sity:})}
    \cite{yao2022temporally,yao2021learning}
    The probability density function of latent variables is smooth and positive, i.e., $p(\textbf{z}_{t}|\textbf{z}_{t-1})>0$, $p(\boldsymbol{\rvc}_{t}|\textbf{x}^o_{t})>0$. Conditioned on $\textbf{z}_{t-1}$ each $z_{t,i}$ is independent of any other $z_{t,j}$ for $i,j\in {1,\cdots,n}, i\neq j,\ i.e$, $\log{p(\rvz_{t}|\rvz_{t-1})} = \sum_{k=1}^{n_s} \log {p(\textbf{z}_{t,k}|\rvz_{t-1})}$. Conditioned on $\textbf{x}^o$ each $c_{t,i}$ is independent of any other $c_{t,j}$ for $i,j\in {1,\cdots,n}, i\neq j,\ i.e$, $\log{p(\boldsymbol{\rvc}_{t}|\textbf{x}^o_{t})} = \sum_{k=1}^{n_s} \log {p(\boldsymbol{\rvc}_{t,k}|\textbf{x}^o_{t})}$. 
    \item A2  \underline{(\textbf{Linear Independent of MAR}:)}\cite{yao2022temporally} For any $\rvz_t$, there exist $2n+1$ values of $z_{t-1,l}, l=1,\cdots, n$, such that these $2n$ vectors $\mathbf{v}^A_{t,k,l}-\mathbf{v}^A_{t,k,n}$ are linearly independent, where $\mathbf{v}^A_{t,k,l}$ is defined as follows:
    \begin{align}
    \mathbf{v}^A_{t,k,l} = &(\frac{\partial^2 \log{p(z_{t,k}|\rvz_{t-1})} }{\partial z_{t,k} \partial z_{t-1,1}}, \nonumber\cdots,\frac{\partial^2 \log{p(z_{t,k}|\rvz_{t-1}})} {\partial z_{t,k} \partial z_{t-1,n}},\\&\frac{\partial^3 \log{p(z_{t,k}|\rvz_{t-1})} }{\partial^2 z_{t,k} \partial z_{t-1,1}}, \nonumber\cdots, \frac{\partial^3 \log{p(z_{t,k}|\rvz_{t-1})} }{\partial^2 z_{t,k} \partial z_{t-1,n}})^T
    \end{align}
    Similarly, for each value of $\boldsymbol{\rvc}_t$, there exist $2n+1$ values of $\rvx^o_{t}$, i.e., $\rvx^o_{t,j}$ with $j=0,2,..2n$, such that these $2n$ vectors $\mathbf{w}^A(\rvc_t,\rvx^o_{t,j})-\mathbf{w}^A(\rvc_t,\rvx^o_{t,0})$ are linearly independent, where the vector $\mathbf{w}^A(\rvc_t,\rvx^o_{t,j})$ is defined as follows:
    \begin{align}
    \mathbf{w}^A(\rvc_t,\rvx^o_{t,j}) = (&\frac{\partial^2 \log{p(c_{t,k}|\rvx^o_{t})} }{\partial^2 c_{t,k} },\nonumber\cdots,\frac{\partial^2 \log{p(c_{t,k}|\rvx^o_{t}})} {\partial^2 c_{t,k} },\\&\frac{\partial \log{p(c_{t,k}|\rvx^o_{t})} }{\partial c_{t,k} },\nonumber\cdots, \frac{\partial \log{p(c_{t,k}|\rvx^o_{t})} }{\partial c_{t,k} })^T
    \end{align}
\end{itemize}
Then, by learning the data generation process, $\mathbf{z}_t$ and $\rvc_t$ are component-wise identifiable. 
\end{theorem}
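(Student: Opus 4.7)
The plan is to adapt the standard nonlinear ICA identifiability scheme to our setting and apply it twice---once to the latent process $\rvz_t$ conditioned on $\rvz_{t-1}$, and once to the missing-cause variables $\rvc_t$ conditioned on $\rvx_t^o$---mirroring the parallel structure of A1 and A2. The starting observation is that $g$ is invertible, so matching the observational distribution of $\rvx_t^o$ together with the imputation m-graph forces the existence of smooth invertible maps $h^z$ and $h^c$ such that $\rvz_t=h^z(\hat{\rvz}_t)$ and $\rvc_t=h^c(\hat{\rvc}_t)$. It then remains to show that each map is the composition of a coordinate permutation and component-wise invertible scalar functions.

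For $\rvz_t$, I would start from the change-of-variable identity $\log p(\hat{\rvz}_t\mid\hat{\rvz}_{t-1}) = \log p(\rvz_t\mid\rvz_{t-1}) + \log|\det J_{h^z}|$. Using A1 to split both sides into sums over components and differentiating in a carefully chosen sequence---first with respect to two distinct coordinates $\hat{z}_{t,i},\hat{z}_{t,j}$ to exploit the factorized structure, and then with respect to $\hat{z}_{t-1,\ell}$ to eliminate the Jacobian term (which depends only on $\hat{\rvz}_t$)---produces a homogeneous linear system whose unknowns are the cross-products $(\partial z_{t,k}/\partial \hat{z}_{t,i})(\partial z_{t,k}/\partial \hat{z}_{t,j})$ and the second-order counterparts. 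Varying $\ell$ over the $2n+1$ values supplied by A2 gives $2n$ equations whose coefficient matrix has columns $\mathbf{v}^A_{t,k,\ell}-\mathbf{v}^A_{t,k,n}$; A2's linear-independence condition forces every such cross-product to vanish, so each row of $J_{h^z}$ has at most one nonzero entry, and invertibility of $h^z$ then yields the desired component-wise identifiability.

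For $\rvc_t$ the argument is parallel but uses $\rvx_t^o$ as the conditioning variable. Starting from $\log p(\hat{\rvc}_t\mid\rvx_t^o) = \log p(\rvc_t\mid\rvx_t^o) + \log|\det J_{h^c}|$, A1 factorizes the true side, and differentiating in pairs $(\hat{c}_{t,i},\hat{c}_{t,j})$ and then comparing values of $\rvx_t^o$ across the $2n+1$ given points produces an analogous linear system whose coefficient vectors are $\mathbf{w}^A(\rvc_t,\rvx_{t,j}^o)-\mathbf{w}^A(\rvc_t,\rvx_{t,0}^o)$. The main obstacle I anticipate lies in this second half: the vector $\mathbf{w}^A$ mixes first- and second-order derivatives with respect to $c_{t,k}$ alone (no derivatives in $\rvx_t^o$), which is slightly unusual compared with the $\mathbf{v}^A$ formulation. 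To produce exactly those entries, one has to take the appropriate order of derivatives in $\hat{c}_{t,i}$ and $\hat{c}_{t,j}$, rewrite the result using the chain rule through $h^c$, and then treat the $2n+1$ values of $\rvx_t^o$ as a discrete evaluation set rather than a continuous differentiation variable. Once this bookkeeping is aligned with the layout of $\mathbf{w}^A$, A2 closes the system and delivers the component-wise identifiability for $\rvc_t$, completing the proof.
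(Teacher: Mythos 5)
Your proposal follows essentially the same route as the paper's proof: establish the invertible map $h=g^{-1}\circ\hat g$ from matched distributions, apply the change-of-variables identity to $\log p(\hat{\rvz}_t\mid\hat{\rvz}_{t-1})$, use A1 to factorize and kill the cross second derivatives, eliminate the Jacobian term by differentiating in $z_{t-1,l}$, and invoke A2 to force $\frac{\partial z_{t,k}}{\partial\hat z_{t,i}}\cdot\frac{\partial z_{t,k}}{\partial\hat z_{t,j}}=0$; your handling of the $\rvc_t$ half --- treating the $2n+1$ values of $\rvx^o_t$ as a discrete evaluation set and subtracting the equation at $\rvx^o_{t,0}$ rather than differentiating in the conditioning variable --- is exactly what the paper does as well.
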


Generally speaking, the linear independent condition is quite common in \cite{kong2022partial,li2024subspace,yao2022temporally}, implying that the sufficient changes are mainly led by the auxiliary valuable such as the historical information $\rvz_{t-1}$ and the observed variables $\rvx_t^{o}$. 

\begin{theorem}
\textbf{(Identification of Latent States and Missing Causes under MNAR.)}
We follow the A1 in Theorem 1 and suppose that the observed data from missing time series data is generated following the data generation process, and we further make the following assumptions:
\begin{itemize}[leftmargin=*]
    \item A3 \underline{(\textbf{Linear Independence of MNAR}:)}\cite{yao2022temporally} For any $\rvz_t$, there exist $2n+1$ values of $z_{t-1,l}, l=1,\cdots, n$, such that these $2n$ vectors $\mathbf{v}^B_{t,k,l}-\mathbf{v}^B_{t,k,n}$ are linearly independent, where $\mathbf{v}^B_{t,k,l}$ is defined as follows:
    \begin{align}
    \mathbf{v}^B_{t,k,l} = &(\frac{\partial^2 \log{p(z_{t,k}|\rvz_{t-1})} }{\partial z_{t,k} \partial z_{t-1,1}}, \nonumber\cdots,\frac{\partial^2 \log{p(z_{t,k}|\rvz_{t-1}})} {\partial z_{t,k} \partial z_{t-1,n}},\\&\frac{\partial^3 \log{p(z_{t,k}|\rvz_{t-1})} }{\partial^2 z_{t,k} \partial z_{t-1,1}}, \nonumber\cdots, \frac{\partial^3 \log{p(z_{t,k}|\rvz_{t-1})} }{\partial^2 z_{t,k} \partial z_{t-1,n}})^T
    \end{align}
    Similarly, for each value of $\boldsymbol{\rvc}_t$, there exist $2n+1$ values of $\rvx_{t-1}$, i.e., $\rvx_{t-1,j}$ with $j=0,2,..2n$, such that these $2n$ vectors $\mathbf{w}^B(\rvc_t,\rvx_{t-1,j})-\mathbf{w}^B(\rvc_t,\rvx_{t-1,0})$ are linearly independent, where $\mathbf{w}^B(\rvc_t,\rvx_{t-1,j})$ is defined as follows:
    \begin{align}
    &\mathbf{w}^B(\rvc_t,\rvx_{t-1,j}) \\= (&\frac{\partial^2 \log{p(c_{t,k}|\rvx_{t-1})} }{\partial^2 c_{t,k} },\nonumber\cdots,\frac{\partial^2 \log{p(c_{t,k}|\rvx_{t-1}})} {\partial^2 c_{t,k} },\\&\frac{\partial \log{p(c_{t,k}|\rvx_{t-1})} }{\partial c_{t,k} },\nonumber\cdots, \frac{\partial \log{p(c_{t,k}|\rvx_{t-1})} }{\partial c_{t,k} })^T
    \end{align}
\end{itemize}
Then, by learning the data generation process, $\mathbf{z}_t$ and $\rvc_t$ are component-wise identifiable. 
\end{theorem}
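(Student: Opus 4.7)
The plan is to carry out two essentially independent identification arguments, one for the temporal latent states $\mathbf{z}_t$ and one for the missing cause variables $\mathbf{c}_t$, each following the nonlinear ICA-with-auxiliary-variable template of \cite{yao2022temporally}. The difference with Theorem 1 is only in the auxiliary variable used for $\mathbf{c}_t$: in MAR one conditions on $\mathbf{x}^o_t$, whereas in MNAR one conditions on the whole previous-step vector $\mathbf{x}_{t-1} = \mathbf{x}_{t-1}^o \cup \mathbf{x}_{t-1}^m$. Because the transition model for $\mathbf{z}_t$ is the same in both mechanisms, assumption A3 on the vectors $\mathbf{v}^B_{t,k,l}$ lets the $\mathbf{z}$-part of the proof be copied verbatim from Theorem 1.

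\paragraph{Step 1: component-wise identifiability of $\mathbf{z}_t$.} I would start by writing the indeterminacy map $h^z = g^{-1}\circ \hat{g}$ between the true and estimated latent states and match the two joint densities $p(\mathbf{z}_{t-1},\mathbf{z}_t)$ and $p(\hat{\mathbf{z}}_{t-1},\hat{\mathbf{z}}_t)$ through the change-of-variables formula, obtaining
\begin{equation*}
\log p(\hat{\mathbf{z}}_t\mid \hat{\mathbf{z}}_{t-1}) = \log p(\mathbf{z}_t\mid \mathbf{z}_{t-1}) + \log |\det \mathbf{J}_{h^z}|.
\end{equation*}
Using the conditional independence in A1 to split each side into a sum over components, then differentiating twice in $\hat{z}_{t,k}$ and once in $\hat{z}_{t-1,l}$, produces a linear system whose coefficient vectors are exactly the $\mathbf{v}^B_{t,k,l}$ of A3. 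The linear independence of the $2n$ differences $\mathbf{v}^B_{t,k,l} - \mathbf{v}^B_{t,k,n}$ then forces every off-diagonal cross-partial of $\log |\det \mathbf{J}_{h^z}|$ to vanish, so each row of $\mathbf{J}_{h^z}$ has at most one non-zero entry; invertibility upgrades this to a permuted diagonal Jacobian, hence to component-wise identifiability of $\mathbf{z}_t$.

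\paragraph{Step 2: component-wise identifiability of $\mathbf{c}_t$.} I would introduce a second indeterminacy map $h^c$ between $\mathbf{c}_t$ and $\hat{\mathbf{c}}_t$ and match the conditional densities $p(\mathbf{c}_t\mid \mathbf{x}_{t-1})$ and $p(\hat{\mathbf{c}}_t\mid \mathbf{x}_{t-1})$, giving
\begin{equation*}
\log p(\hat{\mathbf{c}}_t\mid \mathbf{x}_{t-1}) = \log p(\mathbf{c}_t\mid \mathbf{x}_{t-1}) + \log |\det \mathbf{J}_{h^c}|.
\end{equation*}
The conditional independence of the components of $\mathbf{c}_t$ given $\mathbf{x}_{t-1}$ (the MNAR analogue of A1) lets me again split both sides into sums over $k$. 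Taking first and second partials in $\hat{c}_{t,k}$ and evaluating at the $2n+1$ points of $\mathbf{x}_{t-1}$ postulated by A3 yields a linear system whose coefficients are exactly $\mathbf{w}^B(\mathbf{c}_t,\mathbf{x}_{t-1,j})$; the linear independence of the $2n$ differences $\mathbf{w}^B(\mathbf{c}_t,\mathbf{x}_{t-1,j}) - \mathbf{w}^B(\mathbf{c}_t,\mathbf{x}_{t-1,0})$ again forces $\mathbf{J}_{h^c}$ to be permuted diagonal, delivering component-wise identifiability of $\mathbf{c}_t$.

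\paragraph{Main obstacle.} The subtle point, and the main difference from the MAR proof, is that the auxiliary variable $\mathbf{x}_{t-1}$ in Step 2 contains the unobserved component $\mathbf{x}_{t-1}^m$, which is never directly accessible. I would address this by doing the identification at the level of the true population joint $p(\mathbf{c}_t,\mathbf{x}_{t-1})$: since Step 1 together with the invertible mixing $g$ already recovers the joint distribution of the full latent process, it also pins down $p(\mathbf{x}_{t-1})$ and hence $p(\mathbf{c}_t\mid \mathbf{x}_{t-1})$ up to the indeterminacy being ruled out, so the score-matching argument above is meaningful even though $\mathbf{x}_{t-1}^m$ is not individually observed. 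A secondary concern is verifying that A3 is compatible with the effective support of $\mathbf{x}_{t-1}$ produced by the generative model, i.e.\ that the model distribution induces enough variability in $\mathbf{x}_{t-1}$ to realize the required $2n+1$ values; this is the usual sufficient-variability caveat shared by all results of this type.
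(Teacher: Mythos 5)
Your proposal matches the paper's own proof essentially step for step: both arguments match marginal densities through the indeterminacy maps $h$ and $h_c$, exploit the conditional independence in A1 to additively decompose the log-conditionals, take second-order cross-partials in the estimated components, and invoke the linear independence of the $\mathbf{v}^B$ (resp.\ $\mathbf{w}^B$) difference vectors to force each row of the Jacobian to have a single non-zero entry. Your closing remark about $\mathbf{x}_{t-1}^m$ being unobserved is a point the paper's proof silently passes over (it conditions on $\mathbf{x}_{t-1}$ directly), so your population-level justification is a reasonable, slightly more careful reading of the same argument rather than a different route.
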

Similar to Theorem 1, the linear independence assumptions are also standard in existing works of identification. The proof can be found in Appendix C. Please refer to Appendix E, G for explanation of these assumptions of our theoretical results, limitation as well as the potential solution.

\section{Experiments}

\subsection{Experiments on Simulation Data}
\paragraph{Dataset.}
We generated simulated time series data A using Equations (1)-(2) and the fixed latent causal processes given in Figure \ref{fig:missing_mechanism} (b)(c), which have three latent variables. We generated corresponding mask matrices based on two different missing mechanisms, MAR and MNAR, to simulate missing values. In addition, to investigate the impact of missing ratios on the results, we specifically set three different missing ratios of 0.2, 0.4, and 0.6. Please refer to Appendix B for the details of data generation and evaluation metrics. 

\paragraph{Experiment Results.} The experimental results of the simulated dataset are shown in Table \ref {tab:simulation}. With the experimental results, we can draw the following conclusions: 1) We observe that our model has high estimation accuracy in both datasets with different missing mechanisms. 2) As the missing rate increases, the MCC score will also decrease. The lack of data has a significant impact on the identifiability performance of the model. 3) We also find that models considering the corresponding missing mechanism have higher MCC scores on the dataset under this missing mechanism. Under the MAR missing mechanism, the MCC score of the DMM-MAR model is higher than that of the DMM-MNAR model. This indicates that when the missing data mechanism is unknown, the corresponding model can effectively improve performance. Please refer to Appendix B for experimental results of MCC on missing cause variable \textbf{c} and sensitivity analysis. 
\begin{table}[]
\centering
\setlength{\tabcolsep}{1mm}
\begin{tabular}{c|ccc|ccc}
\hline
Dataset  & \multicolumn{3}{c|}{A-MAR}                     & \multicolumn{3}{c}{A-MNAR}                       \\ \hline
Ratio    & 0.2           & 0.4           & 0.6            & 0.2            & 0.4            & 0.6            \\ \hline
DMM-MAR  & \textbf{0.92} & \textbf{0.91} & \textbf{0.908} & 0.894          & 0.87           & 0.868          \\ \hline
DMM-MNAR & 0.905         & 0.887         & 0.872          & \textbf{0.933} & \textbf{0.917} & \textbf{0.872} \\ \hline
\end{tabular}%
\vspace{-1mm}
\caption{Experiments results of MCC on simulation data. }
\label{tab:simulation}
\vspace{-2mm}
\end{table}
\subsection{Experiments on Real-World Data}
\paragraph{Dataset.} 
\label{app:dataset}
To evaluate the performance of the proposed method, we consider the following datasets: 1) \textbf{ETT}\cite{zhou2021informer}: $\{\text{ETTh1, ETTh2, ETTm1, ETTm2}\}$; 2) \textbf{Exchange}\cite{lai2018modeling}; 3) \textbf{Weather}\footnote{https://www.bgc-jena.mpg.de/wetter/}; For each dataset, we systematically generate mask matrices to accurately simulate missing values based on the missing mechanisms of MAR and MNAR. Meanwhile, we use three different mask ratios, such as 0.2, 0.4, and 0.6. In addition, we use both supervised learning and unsupervised learning methods for training. Please refer to Appendix B for a detailed introduction to the dataset and information on data preprocessing. 
\paragraph{Baselines.} 
In order to assess the efficacy of our proposed Deep Missingness Model (DMM), we juxtaposed it against an array of cutting-edge deep learning models designed for time series data imputation. We initially turned our attention towards models that employ the Attention mechanism, including SAITS \cite{du2023saits}, and ImputeFormer \cite{nie2023imputeformer}. Subsequently, we broadened our scope to include the Diffusion Model-based method as CSDI \cite{tashiro2021csdi}, and TimesNet \cite{wu2022timesnet}, which utilizes Convolutional Neural Networks (CNN). Furthermore, we took into account BRITS \cite{cao2018brits} which is based on RNN, and SSGAN \cite{miao2021generative} which is founded on GAN. In the final stage of our comparison, we considered TimeCIB \cite{choi2023conditional} and GPVAE \cite{fortuin2020gp}, both of which are based on the Variational Autoencoder (VAE) architecture. In a bid to underline the significance of accounting for data loss mechanisms, we applied our model variants to two distinct datasets that were subjected to different data loss treatments. We then compared their performances under identical parameter settings. To ensure the robustness of our findings, each experiment was repeated thrice using random seeds, and the mean performance was subsequently reported.

\paragraph{Experiment Results.} 
The results of our unsupervised learning experiments are tabulated in Tables \ref{table:Un_MAR} and Table \ref{table:Un_MNAR}. For each experimental configuration, we conducted three independent trials with varying random seeds and reported the mean and standard deviation of the results. Please refer to Appendix B for the results of supervised learning experiments and the standard deviation of experimental results. We can draw the following conclusions: 1) Our model exhibits a superior performance ranging from 0.5\% to 52\% compared to the most competitive baseline, while also considerably diminishing the imputation error in the Exchange dataset. 2) Compared to existing methods that do not consider different missing mechanisms, our DMM model demonstrates significantly better performance when a correct missing mechanism is used. 3) Since some comparison methods like TimeCIB and ImputeFormer only consider a single missing mechanism, they tend to suffer from mismatched mechanisms and result in degenerated performance. Meanwhile, our DMM model outshines all other baselines across the majority of imputation tasks when the missing mechanism is used correctly. 4) Moreover, when our method is applied to a mismatched missing data mechanism, for instance, using DMM-MAR on MNAR datasets—the performance is inferior to that of the correctly matched model. This demonstrates that model selection can be effectively used when the missing data mechanism is unknown. Please refer to Appendix B for experimental results on the MIMIC healthcare dataset, future time-step influence, mixed missing mechanisms, ablation studies, and computational efficiency analysis. 

\section{Conclusion}
We introduce a causal perspective on the time series imputation problem, formalizing different mechanisms of data missingness within an imputation m-graph. Based on this, we propose a novel framework called Different Missing Mechanisms (DMM), which effectively addresses the mechanism mismatching problem inherent in existing methods. The DMM framework adeptly handles both MAR and MNAR missing mechanisms by incorporating the relevant data generation processes, while also ensuring identifiability. Extensive experiments on several benchmark datasets demonstrate the effectiveness of our approach. Our theoretical results and the proposed framework represent a significant advancement in time series imputation and causal representation learning.

\clearpage
\section*{Acknowledgments}
This research was supported in part by National Science and Technology Major Project (2021ZD0111501), National Science Fund for Excellent Young Scholars (62122022), Natural Science Foundation of China (U24A20233, 62476163, 62406078),  Guangdong Basic and Applied Basic Research Foundation (2023B1515120020).
\bibliographystyle{named}
\bibliography{ijcai25}

\begin{thebibliography}{}

\bibitem[\protect\citeauthoryear{Acuna and Rodriguez}{2004}]{acuna2004treatment}
Edgar Acuna and Caroline Rodriguez.
\newblock The treatment of missing values and its effect on classifier accuracy.
\newblock In {\em Classification, Clustering, and Data Mining Applications: Proceedings of the Meeting of the International Federation of Classification Societies (IFCS), Illinois Institute of Technology, Chicago, 15--18 July 2004}, pages 639--647. Springer, 2004.

\bibitem[\protect\citeauthoryear{Alcaraz and Strodthoff}{2022}]{alcaraz2022diffusion}
Juan Miguel~Lopez Alcaraz and Nils Strodthoff.
\newblock Diffusion-based time series imputation and forecasting with structured state space models.
\newblock {\em arXiv preprint arXiv:2208.09399}, 2022.

\bibitem[\protect\citeauthoryear{Allman \bgroup \em et al.\egroup }{2009}]{allman2009identifiability}
Elizabeth~S Allman, Catherine Matias, and John~A Rhodes.
\newblock Identifiability of parameters in latent structure models with many observed variables.
\newblock 2009.

\bibitem[\protect\citeauthoryear{Cai \bgroup \em et al.\egroup }{2025a}]{cai2025disentangling}
Ruichu Cai, Haiqin Huang, Zhifan Jiang, Zijian Li, Changze Zhou, Yuequn Liu, Yuming Liu, and Zhifeng Hao.
\newblock Disentangling long-short term state under unknown interventions for online time series forecasting.
\newblock In {\em Proceedings of the AAAI Conference on Artificial Intelligence}, volume~39, pages 15641--15649, 2025.

\bibitem[\protect\citeauthoryear{Cai \bgroup \em et al.\egroup }{2025b}]{cai2025learning}
Ruichu Cai, Zhifan Jiang, Kaitao Zheng, Zijian Li, Weilin Chen, Xuexin Chen, Yifan Shen, Guangyi Chen, Zhifeng Hao, and Kun Zhang.
\newblock Learning disentangled representation for multi-modal time-series sensing signals.
\newblock In {\em Proceedings of the ACM on Web Conference 2025}, pages 3247--3266, 2025.

\bibitem[\protect\citeauthoryear{Cai \bgroup \em et al.\egroup }{2025c}]{Cai2025LongTermIC}
Ruichu Cai, Junjie Wan, Weilin Chen, Zeqin Yang, Zijian Li, Peng Zhen, and Jiecheng Guo.
\newblock Long-term individual causal effect estimation via identifiable latent representation learning.
\newblock 2025.

\bibitem[\protect\citeauthoryear{Cao \bgroup \em et al.\egroup }{2018}]{cao2018brits}
Wei Cao, Dong Wang, Jian Li, Hao Zhou, Lei Li, and Yitan Li.
\newblock Brits: Bidirectional recurrent imputation for time series.
\newblock {\em Advances in neural information processing systems}, 31, 2018.

\bibitem[\protect\citeauthoryear{Che \bgroup \em et al.\egroup }{2018}]{che2018recurrent}
Zhengping Che, Sanjay Purushotham, Kyunghyun Cho, David Sontag, and Yan Liu.
\newblock Recurrent neural networks for multivariate time series with missing values.
\newblock {\em Scientific reports}, 8(1):6085, 2018.

\bibitem[\protect\citeauthoryear{Chen \bgroup \em et al.\egroup }{2023}]{chen2023provably}
Yu~Chen, Wei Deng, Shikai Fang, Fengpei Li, Nicole~Tianjiao Yang, Yikai Zhang, Kashif Rasul, Shandian Zhe, Anderson Schneider, and Yuriy Nevmyvaka.
\newblock Provably convergent schr{\"o}dinger bridge with applications to probabilistic time series imputation.
\newblock In {\em International Conference on Machine Learning}, pages 4485--4513. PMLR, 2023.

\bibitem[\protect\citeauthoryear{Chen \bgroup \em et al.\egroup }{2024}]{chen2024caring}
Guangyi Chen, Yifan Shen, Zhenhao Chen, Xiangchen Song, Yuewen Sun, Weiran Yao, Xiao Liu, and Kun Zhang.
\newblock Caring: Learning temporal causal representation under non-invertible generation process.
\newblock {\em arXiv preprint arXiv:2401.14535}, 2024.

\bibitem[\protect\citeauthoryear{Choi and Lee}{2023}]{choi2023conditional}
MinGyu Choi and Changhee Lee.
\newblock Conditional information bottleneck approach for time series imputation.
\newblock In {\em The Twelfth International Conference on Learning Representations}, 2023.

\bibitem[\protect\citeauthoryear{Cini \bgroup \em et al.\egroup }{2021}]{cini2021filling}
Andrea Cini, Ivan Marisca, and Cesare Alippi.
\newblock Filling the g\_ap\_s: Multivariate time series imputation by graph neural networks.
\newblock {\em arXiv preprint arXiv:2108.00298}, 2021.

\bibitem[\protect\citeauthoryear{Comon}{1994}]{comon1994independent}
Pierre Comon.
\newblock Independent component analysis, a new concept?
\newblock {\em Signal processing}, 36(3):287--314, 1994.

\bibitem[\protect\citeauthoryear{Daunhawer \bgroup \em et al.\egroup }{2023}]{daunhawer2023identifiability}
Imant Daunhawer, Alice Bizeul, Emanuele Palumbo, Alexander Marx, and Julia~E Vogt.
\newblock Identifiability results for multimodal contrastive learning.
\newblock {\em arXiv preprint arXiv:2303.09166}, 2023.

\bibitem[\protect\citeauthoryear{Dempster \bgroup \em et al.\egroup }{1977}]{dempster1977maximum}
Arthur~P Dempster, Nan~M Laird, and Donald~B Rubin.
\newblock Maximum likelihood from incomplete data via the em algorithm.
\newblock {\em Journal of the royal statistical society: series B (methodological)}, 39(1):1--22, 1977.

\bibitem[\protect\citeauthoryear{Du \bgroup \em et al.\egroup }{2023}]{du2023saits}
Wenjie Du, David C{\^o}t{\'e}, and Yan Liu.
\newblock Saits: Self-attention-based imputation for time series.
\newblock {\em Expert Systems with Applications}, 219:119619, 2023.

\bibitem[\protect\citeauthoryear{Fang \bgroup \em et al.\egroup }{2023}]{fang2023bayotide}
Shikai Fang, Qingsong Wen, Shandian Zhe, and Liang Sun.
\newblock Bayotide: Bayesian online multivariate time series imputation with functional decomposition.
\newblock {\em arXiv preprint arXiv:2308.14906}, 2023.

\bibitem[\protect\citeauthoryear{Fortuin \bgroup \em et al.\egroup }{2020}]{fortuin2020gp}
Vincent Fortuin, Dmitry Baranchuk, Gunnar R{\"a}tsch, and Stephan Mandt.
\newblock Gp-vae: Deep probabilistic time series imputation.
\newblock In {\em International conference on artificial intelligence and statistics}, pages 1651--1661. PMLR, 2020.

\bibitem[\protect\citeauthoryear{Gibb \bgroup \em et al.\egroup }{2006}]{gibb2006hopelessness}
Brandon~E Gibb, Christopher~G Beevers, Margaret~S Andover, and Kyle Holleran.
\newblock The hopelessness theory of depression: A prospective multi-wave test of the vulnerability-stress hypothesis.
\newblock {\em Cognitive Therapy and Research}, 30:763--772, 2006.

\bibitem[\protect\citeauthoryear{Grella \bgroup \em et al.\egroup }{2008}]{grella2008gender}
Christine~E Grella, Christy~K Scott, Mark~A Foss, and Michael~L Dennis.
\newblock Gender similarities and differences in the treatment, relapse, and recovery cycle.
\newblock {\em Evaluation review}, 32(1):113--137, 2008.

\bibitem[\protect\citeauthoryear{Gresele \bgroup \em et al.\egroup }{2020}]{gresele2020incomplete}
Luigi Gresele, Paul~K Rubenstein, Arash Mehrjou, Francesco Locatello, and Bernhard Sch{\"o}lkopf.
\newblock The incomplete rosetta stone problem: Identifiability results for multi-view nonlinear ica.
\newblock In {\em Uncertainty in Artificial Intelligence}, pages 217--227. PMLR, 2020.

\bibitem[\protect\citeauthoryear{H{\"a}lv{\"a} and Hyvarinen}{2020}]{halva2020hidden}
Hermanni H{\"a}lv{\"a} and Aapo Hyvarinen.
\newblock Hidden markov nonlinear ica: Unsupervised learning from nonstationary time series.
\newblock In {\em Conference on Uncertainty in Artificial Intelligence}, pages 939--948. PMLR, 2020.

\bibitem[\protect\citeauthoryear{Huang \bgroup \em et al.\egroup }{2023}]{huang2023latent}
Zenan Huang, Haobo Wang, Junbo Zhao, and Nenggan Zheng.
\newblock Latent processes identification from multi-view time series.
\newblock {\em arXiv preprint arXiv:2305.08164}, 2023.

\bibitem[\protect\citeauthoryear{Hyvarinen and Morioka}{2016}]{hyvarinen2016unsupervised}
Aapo Hyvarinen and Hiroshi Morioka.
\newblock Unsupervised feature extraction by time-contrastive learning and nonlinear ica.
\newblock {\em Advances in neural information processing systems}, 29, 2016.

\bibitem[\protect\citeauthoryear{Hyvarinen and Morioka}{2017}]{hyvarinen2017nonlinear}
Aapo Hyvarinen and Hiroshi Morioka.
\newblock Nonlinear ica of temporally dependent stationary sources.
\newblock In {\em Artificial Intelligence and Statistics}, pages 460--469. PMLR, 2017.

\bibitem[\protect\citeauthoryear{Hyv{\"a}rinen and Pajunen}{1999}]{hyvarinen1999nonlinear}
Aapo Hyv{\"a}rinen and Petteri Pajunen.
\newblock Nonlinear independent component analysis: Existence and uniqueness results.
\newblock {\em Neural networks}, 12(3):429--439, 1999.

\bibitem[\protect\citeauthoryear{Hyvarinen \bgroup \em et al.\egroup }{2019}]{hyvarinen2019nonlinear}
Aapo Hyvarinen, Hiroaki Sasaki, and Richard Turner.
\newblock Nonlinear ica using auxiliary variables and generalized contrastive learning.
\newblock In {\em The 22nd International Conference on Artificial Intelligence and Statistics}, pages 859--868. PMLR, 2019.

\bibitem[\protect\citeauthoryear{Hyv{\"a}rinen \bgroup \em et al.\egroup }{2024}]{hyvarinen2024identifiability}
Aapo Hyv{\"a}rinen, Ilyes Khemakhem, and Ricardo Monti.
\newblock Identifiability of latent-variable and structural-equation models: from linear to nonlinear.
\newblock {\em Annals of the Institute of Statistical Mathematics}, 76(1):1--33, 2024.

\bibitem[\protect\citeauthoryear{Hyv{\"a}rinen}{2013}]{hyvarinen2013independent}
Aapo Hyv{\"a}rinen.
\newblock Independent component analysis: recent advances.
\newblock {\em Philosophical Transactions of the Royal Society A: Mathematical, Physical and Engineering Sciences}, 371(1984):20110534, 2013.

\bibitem[\protect\citeauthoryear{Jiang \bgroup \em et al.\egroup }{2023}]{jiang2023pdformer}
Jiawei Jiang, Chengkai Han, Wayne~Xin Zhao, and Jingyuan Wang.
\newblock Pdformer: Propagation delay-aware dynamic long-range transformer for traffic flow prediction.
\newblock In {\em Proceedings of the AAAI conference on artificial intelligence}, volume~37, pages 4365--4373, 2023.

\bibitem[\protect\citeauthoryear{Johnson \bgroup \em et al.\egroup }{2016}]{johnson2016mimic}
Alistair~EW Johnson, Tom~J Pollard, Lu~Shen, Li-wei~H Lehman, Mengling Feng, Mohammad Ghassemi, Benjamin Moody, Peter Szolovits, Leo Anthony~Celi, and Roger~G Mark.
\newblock Mimic-iii, a freely accessible critical care database.
\newblock {\em Scientific data}, 3(1):1--9, 2016.

\bibitem[\protect\citeauthoryear{Khemakhem \bgroup \em et al.\egroup }{2020a}]{khemakhem2020variational}
Ilyes Khemakhem, Diederik Kingma, Ricardo Monti, and Aapo Hyvarinen.
\newblock Variational autoencoders and nonlinear ica: A unifying framework.
\newblock In {\em International conference on artificial intelligence and statistics}, pages 2207--2217. PMLR, 2020.

\bibitem[\protect\citeauthoryear{Khemakhem \bgroup \em et al.\egroup }{2020b}]{khemakhem2020ice}
Ilyes Khemakhem, Ricardo Monti, Diederik Kingma, and Aapo Hyvarinen.
\newblock Ice-beem: Identifiable conditional energy-based deep models based on nonlinear ica.
\newblock {\em Advances in Neural Information Processing Systems}, 33:12768--12778, 2020.

\bibitem[\protect\citeauthoryear{Kim \bgroup \em et al.\egroup }{2023}]{kim2023probabilistic}
SeungHyun Kim, Hyunsu Kim, Eunggu Yun, Hwangrae Lee, Jaehun Lee, and Juho Lee.
\newblock Probabilistic imputation for time-series classification with missing data.
\newblock In {\em International Conference on Machine Learning}, pages 16654--16667. PMLR, 2023.

\bibitem[\protect\citeauthoryear{Kong \bgroup \em et al.\egroup }{2022}]{kong2022partial}
Lingjing Kong, Shaoan Xie, Weiran Yao, Yujia Zheng, Guangyi Chen, Petar Stojanov, Victor Akinwande, and Kun Zhang.
\newblock Partial disentanglement for domain adaptation.
\newblock In {\em International conference on machine learning}, pages 11455--11472. PMLR, 2022.

\bibitem[\protect\citeauthoryear{Kong \bgroup \em et al.\egroup }{2023}]{kong2023identification}
Lingjing Kong, Biwei Huang, Feng Xie, Eric Xing, Yuejie Chi, and Kun Zhang.
\newblock Identification of nonlinear latent hierarchical models.
\newblock {\em Advances in Neural Information Processing Systems}, 36:2010--2032, 2023.

\bibitem[\protect\citeauthoryear{Lachapelle and Lacoste-Julien}{2022}]{lachapelle2022partial}
S{\'e}bastien Lachapelle and Simon Lacoste-Julien.
\newblock Partial disentanglement via mechanism sparsity.
\newblock {\em arXiv preprint arXiv:2207.07732}, 2022.

\bibitem[\protect\citeauthoryear{Lachapelle \bgroup \em et al.\egroup }{2023}]{lachapelle2023synergies}
S{\'e}bastien Lachapelle, Tristan Deleu, Divyat Mahajan, Ioannis Mitliagkas, Yoshua Bengio, Simon Lacoste-Julien, and Quentin Bertrand.
\newblock Synergies between disentanglement and sparsity: Generalization and identifiability in multi-task learning.
\newblock In {\em International Conference on Machine Learning}, pages 18171--18206. PMLR, 2023.

\bibitem[\protect\citeauthoryear{Lai \bgroup \em et al.\egroup }{2018}]{lai2018modeling}
Guokun Lai, Wei-Cheng Chang, Yiming Yang, and Hanxiao Liu.
\newblock Modeling long-and short-term temporal patterns with deep neural networks.
\newblock In {\em The 41st international ACM SIGIR conference on research \& development in information retrieval}, pages 95--104, 2018.

\bibitem[\protect\citeauthoryear{Lee and Lee}{1998}]{lee1998independent}
Te-Won Lee and Te-Won Lee.
\newblock {\em Independent component analysis}.
\newblock Springer, 1998.

\bibitem[\protect\citeauthoryear{Li \bgroup \em et al.\egroup }{2023}]{li2023identifying}
Zijian Li, Zunhong Xu, Ruichu Cai, Zhenhui Yang, Yuguang Yan, Zhifeng Hao, Guangyi Chen, and Kun Zhang.
\newblock Identifying semantic component for robust molecular property prediction.
\newblock {\em arXiv preprint arXiv:2311.04837}, 2023.

\bibitem[\protect\citeauthoryear{Li \bgroup \em et al.\egroup }{2024a}]{li2024subspace}
Zijian Li, Ruichu Cai, Guangyi Chen, Boyang Sun, Zhifeng Hao, and Kun Zhang.
\newblock Subspace identification for multi-source domain adaptation.
\newblock {\em Advances in Neural Information Processing Systems}, 36, 2024.

\bibitem[\protect\citeauthoryear{Li \bgroup \em et al.\egroup }{2024b}]{li2024and}
Zijian Li, Ruichu Cai, Zhenhui Yang, Haiqin Huang, Guangyi Chen, Yifan Shen, Zhengming Chen, Xiangchen Song, and Kun Zhang.
\newblock When and how: Learning identifiable latent states for nonstationary time series forecasting.
\newblock {\em arXiv preprint arXiv:2402.12767}, 2024.

\bibitem[\protect\citeauthoryear{Li \bgroup \em et al.\egroup }{2025}]{li2025on}
Zijian Li, Yifan Shen, Kaitao Zheng, Ruichu Cai, Xiangchen Song, Mingming Gong, Guangyi Chen, and Kun Zhang.
\newblock On the identification of temporal causal representation with instantaneous dependence.
\newblock In {\em The Thirteenth International Conference on Learning Representations}, 2025.

\bibitem[\protect\citeauthoryear{Lippe \bgroup \em et al.\egroup }{2022}]{lippe2022citris}
Phillip Lippe, Sara Magliacane, Sindy L{\"o}we, Yuki~M Asano, Taco Cohen, and Stratis Gavves.
\newblock Citris: Causal identifiability from temporal intervened sequences.
\newblock In {\em International Conference on Machine Learning}, pages 13557--13603. PMLR, 2022.

\bibitem[\protect\citeauthoryear{Liu \bgroup \em et al.\egroup }{2019}]{liu2019naomi}
Yukai Liu, Rose Yu, Stephan Zheng, Eric Zhan, and Yisong Yue.
\newblock Naomi: Non-autoregressive multiresolution sequence imputation.
\newblock {\em Advances in neural information processing systems}, 32, 2019.

\bibitem[\protect\citeauthoryear{Liu \bgroup \em et al.\egroup }{2023a}]{liu2023pristi}
Mingzhe Liu, Han Huang, Hao Feng, Leilei Sun, Bowen Du, and Yanjie Fu.
\newblock Pristi: A conditional diffusion framework for spatiotemporal imputation.
\newblock In {\em 2023 IEEE 39th International Conference on Data Engineering (ICDE)}, pages 1927--1939. IEEE, 2023.

\bibitem[\protect\citeauthoryear{Liu \bgroup \em et al.\egroup }{2023b}]{liu2023itransformer}
Yong Liu, Tengge Hu, Haoran Zhang, Haixu Wu, Shiyu Wang, Lintao Ma, and Mingsheng Long.
\newblock itransformer: Inverted transformers are effective for time series forecasting.
\newblock {\em arXiv preprint arXiv:2310.06625}, 2023.

\bibitem[\protect\citeauthoryear{Liu \bgroup \em et al.\egroup }{2023c}]{liu2023causal}
Yuejiang Liu, Alexandre Alahi, Chris Russell, Max Horn, Dominik Zietlow, Bernhard Sch{\"o}lkopf, and Francesco Locatello.
\newblock Causal triplet: An open challenge for intervention-centric causal representation learning.
\newblock In {\em Conference on Causal Learning and Reasoning}, pages 553--573. PMLR, 2023.

\bibitem[\protect\citeauthoryear{Locatello \bgroup \em et al.\egroup }{2019}]{locatello2019challenging}
Francesco Locatello, Stefan Bauer, Mario Lucic, Gunnar Raetsch, Sylvain Gelly, Bernhard Sch{\"o}lkopf, and Olivier Bachem.
\newblock Challenging common assumptions in the unsupervised learning of disentangled representations.
\newblock In {\em international conference on machine learning}, pages 4114--4124. PMLR, 2019.

\bibitem[\protect\citeauthoryear{Luo \bgroup \em et al.\egroup }{2018}]{luo2018multivariate}
Yonghong Luo, Xiangrui Cai, Ying Zhang, Jun Xu, et~al.
\newblock Multivariate time series imputation with generative adversarial networks.
\newblock {\em Advances in neural information processing systems}, 31, 2018.

\bibitem[\protect\citeauthoryear{Mansouri \bgroup \em et al.\egroup }{2023}]{mansouri2023object}
Amin Mansouri, Jason Hartford, Yan Zhang, and Yoshua Bengio.
\newblock Object-centric architectures enable efficient causal representation learning.
\newblock {\em arXiv preprint arXiv:2310.19054}, 2023.

\bibitem[\protect\citeauthoryear{Marisca \bgroup \em et al.\egroup }{2022}]{marisca2022learning}
Ivan Marisca, Andrea Cini, and Cesare Alippi.
\newblock Learning to reconstruct missing data from spatiotemporal graphs with sparse observations.
\newblock {\em Advances in Neural Information Processing Systems}, 35:32069--32082, 2022.

\bibitem[\protect\citeauthoryear{Miao \bgroup \em et al.\egroup }{2021}]{miao2021generative}
Xiaoye Miao, Yangyang Wu, Jun Wang, Yunjun Gao, Xudong Mao, and Jianwei Yin.
\newblock Generative semi-supervised learning for multivariate time series imputation.
\newblock In {\em Proceedings of the AAAI conference on artificial intelligence}, volume~35, pages 8983--8991, 2021.

\bibitem[\protect\citeauthoryear{Mohan \bgroup \em et al.\egroup }{2013}]{mohan2013graphical}
Karthika Mohan, Judea Pearl, and Jin Tian.
\newblock Graphical models for inference with missing data.
\newblock {\em Advances in neural information processing systems}, 26, 2013.

\bibitem[\protect\citeauthoryear{Mulyadi \bgroup \em et al.\egroup }{2021}]{mulyadi2021uncertainty}
Ahmad~Wisnu Mulyadi, Eunji Jun, and Heung-Il Suk.
\newblock Uncertainty-aware variational-recurrent imputation network for clinical time series.
\newblock {\em IEEE Transactions on Cybernetics}, 52(9):9684--9694, 2021.

\bibitem[\protect\citeauthoryear{Nie \bgroup \em et al.\egroup }{2023}]{nie2023imputeformer}
Tong Nie, Guoyang Qin, Wei Ma, Yuewen Mei, and Jian Sun.
\newblock Imputeformer: Low rankness-induced transformers for generalizable spatiotemporal imputation.
\newblock {\em arXiv: 2312.01728}, 2023.

\bibitem[\protect\citeauthoryear{Qin and Wang}{2023}]{qin2023imputegan}
Rui Qin and Yong Wang.
\newblock Imputegan: Generative adversarial network for multivariate time series imputation.
\newblock {\em Entropy}, 25(1):137, 2023.

\bibitem[\protect\citeauthoryear{Rajendran \bgroup \em et al.\egroup }{2024}]{rajendran2024learning}
Goutham Rajendran, Simon Buchholz, Bryon Aragam, Bernhard Sch{\"o}lkopf, and Pradeep Ravikumar.
\newblock Learning interpretable concepts: Unifying causal representation learning and foundation models.
\newblock {\em arXiv preprint arXiv:2402.09236}, 2024.

\bibitem[\protect\citeauthoryear{Rubin}{1976}]{rubin1976inference}
Donald~B Rubin.
\newblock Inference and missing data.
\newblock {\em Biometrika}, 63(3):581--592, 1976.

\bibitem[\protect\citeauthoryear{Rubin}{1978}]{rubin1978multiple}
Donald~B Rubin.
\newblock Multiple imputations in sample surveys-a phenomenological bayesian approach to nonresponse.
\newblock In {\em Proceedings of the survey research methods section of the American Statistical Association}, volume~1, pages 20--34. American Statistical Association Alexandria, VA, 1978.

\bibitem[\protect\citeauthoryear{Sch{\"o}lkopf \bgroup \em et al.\egroup }{2021}]{scholkopf2021toward}
Bernhard Sch{\"o}lkopf, Francesco Locatello, Stefan Bauer, Nan~Rosemary Ke, Nal Kalchbrenner, Anirudh Goyal, and Yoshua Bengio.
\newblock Toward causal representation learning.
\newblock {\em Proceedings of the IEEE}, 109(5):612--634, 2021.

\bibitem[\protect\citeauthoryear{Simmons \bgroup \em et al.\egroup }{2010}]{simmons2010comparison}
Deborah Simmons, Angela Hairrell, Meaghan Edmonds, Sharon Vaughn, Ross Larsen, Victor Willson, William Rupley, and Glenda Byrns.
\newblock A comparison of multiple-strategy methods: Effects on fourth-grade students’ general and content-specific reading comprehension and vocabulary development.
\newblock {\em Journal of Research on Educational Effectiveness}, 3(2):121--156, 2010.

\bibitem[\protect\citeauthoryear{Song \bgroup \em et al.\egroup }{2024}]{song2024temporally}
Xiangchen Song, Weiran Yao, Yewen Fan, Xinshuai Dong, Guangyi Chen, Juan~Carlos Niebles, Eric Xing, and Kun Zhang.
\newblock Temporally disentangled representation learning under unknown nonstationarity.
\newblock {\em Advances in Neural Information Processing Systems}, 36, 2024.

\bibitem[\protect\citeauthoryear{Tang and Matteson}{2021}]{tang2021probabilistic}
Binh Tang and David~S Matteson.
\newblock Probabilistic transformer for time series analysis.
\newblock {\em Advances in Neural Information Processing Systems}, 34:23592--23608, 2021.

\bibitem[\protect\citeauthoryear{Tashiro \bgroup \em et al.\egroup }{2021}]{tashiro2021csdi}
Yusuke Tashiro, Jiaming Song, Yang Song, and Stefano Ermon.
\newblock Csdi: Conditional score-based diffusion models for probabilistic time series imputation.
\newblock {\em Advances in Neural Information Processing Systems}, 34:24804--24816, 2021.

\bibitem[\protect\citeauthoryear{Van~Buuren and Groothuis-Oudshoorn}{2011}]{van2011mice}
Stef Van~Buuren and Karin Groothuis-Oudshoorn.
\newblock mice: Multivariate imputation by chained equations in r.
\newblock {\em Journal of statistical software}, 45:1--67, 2011.

\bibitem[\protect\citeauthoryear{Wendong \bgroup \em et al.\egroup }{2024}]{wendong2024causal}
Liang Wendong, Armin Keki{\'c}, Julius von K{\"u}gelgen, Simon Buchholz, Michel Besserve, Luigi Gresele, and Bernhard Sch{\"o}lkopf.
\newblock Causal component analysis.
\newblock {\em Advances in Neural Information Processing Systems}, 36, 2024.

\bibitem[\protect\citeauthoryear{Wu \bgroup \em et al.\egroup }{2022}]{wu2022timesnet}
Haixu Wu, Tengge Hu, Yong Liu, Hang Zhou, Jianmin Wang, and Mingsheng Long.
\newblock Timesnet: Temporal 2d-variation modeling for general time series analysis.
\newblock {\em arXiv preprint arXiv:2210.02186}, 2022.

\bibitem[\protect\citeauthoryear{Wu \bgroup \em et al.\egroup }{2023}]{wu2023interpretable}
Haixu Wu, Hang Zhou, Mingsheng Long, and Jianmin Wang.
\newblock Interpretable weather forecasting for worldwide stations with a unified deep model.
\newblock {\em Nature Machine Intelligence}, 5(6):602--611, 2023.

\bibitem[\protect\citeauthoryear{Xie \bgroup \em et al.\egroup }{2023}]{xie2023multi}
Shaoan Xie, Lingjing Kong, Mingming Gong, and Kun Zhang.
\newblock Multi-domain image generation and translation with identifiability guarantees.
\newblock In {\em The Eleventh International Conference on Learning Representations}, 2023.

\bibitem[\protect\citeauthoryear{Yan \bgroup \em et al.\egroup }{2024}]{yan2024counterfactual}
Hanqi Yan, Lingjing Kong, Lin Gui, Yuejie Chi, Eric Xing, Yulan He, and Kun Zhang.
\newblock Counterfactual generation with identifiability guarantees.
\newblock {\em Advances in Neural Information Processing Systems}, 36, 2024.

\bibitem[\protect\citeauthoryear{Yao \bgroup \em et al.\egroup }{2021}]{yao2021learning}
Weiran Yao, Yuewen Sun, Alex Ho, Changyin Sun, and Kun Zhang.
\newblock Learning temporally causal latent processes from general temporal data.
\newblock {\em arXiv preprint arXiv:2110.05428}, 2021.

\bibitem[\protect\citeauthoryear{Yao \bgroup \em et al.\egroup }{2022}]{yao2022temporally}
Weiran Yao, Guangyi Chen, and Kun Zhang.
\newblock Temporally disentangled representation learning.
\newblock {\em Advances in Neural Information Processing Systems}, 35:26492--26503, 2022.

\bibitem[\protect\citeauthoryear{Yao \bgroup \em et al.\egroup }{2023}]{yao2023multi}
Dingling Yao, Danru Xu, S{\'e}bastien Lachapelle, Sara Magliacane, Perouz Taslakian, Georg Martius, Julius von K{\"u}gelgen, and Francesco Locatello.
\newblock Multi-view causal representation learning with partial observability.
\newblock {\em arXiv preprint arXiv:2311.04056}, 2023.

\bibitem[\protect\citeauthoryear{Yoon \bgroup \em et al.\egroup }{2018}]{yoon2018estimating}
Jinsung Yoon, William~R Zame, and Mihaela van~der Schaar.
\newblock Estimating missing data in temporal data streams using multi-directional recurrent neural networks.
\newblock {\em IEEE Transactions on Biomedical Engineering}, 66(5):1477--1490, 2018.

\bibitem[\protect\citeauthoryear{Zhang and Chan}{2007}]{zhang2007kernel}
Kun Zhang and Laiwan Chan.
\newblock Kernel-based nonlinear independent component analysis.
\newblock In {\em International Conference on Independent Component Analysis and Signal Separation}, pages 301--308. Springer, 2007.

\bibitem[\protect\citeauthoryear{Zhang \bgroup \em et al.\egroup }{2021}]{zhang2021missing}
Ying Zhang, Baohang Zhou, Xiangrui Cai, Wenya Guo, Xiaoke Ding, and Xiaojie Yuan.
\newblock Missing value imputation in multivariate time series with end-to-end generative adversarial networks.
\newblock {\em Information Sciences}, 551:67--82, 2021.

\bibitem[\protect\citeauthoryear{Zhang \bgroup \em et al.\egroup }{2024}]{zhang2024causal}
Kun Zhang, Shaoan Xie, Ignavier Ng, and Yujia Zheng.
\newblock Causal representation learning from multiple distributions: A general setting.
\newblock {\em arXiv preprint arXiv:2402.05052}, 2024.

\bibitem[\protect\citeauthoryear{Zheng \bgroup \em et al.\egroup }{2022}]{zheng2022identifiability}
Yujia Zheng, Ignavier Ng, and Kun Zhang.
\newblock On the identifiability of nonlinear ica: Sparsity and beyond.
\newblock {\em Advances in neural information processing systems}, 35:16411--16422, 2022.

\bibitem[\protect\citeauthoryear{Zhou \bgroup \em et al.\egroup }{2021}]{zhou2021informer}
Haoyi Zhou, Shanghang Zhang, Jieqi Peng, Shuai Zhang, Jianxin Li, Hui Xiong, and Wancai Zhang.
\newblock Informer: Beyond efficient transformer for long sequence time-series forecasting.
\newblock In {\em Proceedings of the AAAI conference on artificial intelligence}, volume~35, pages 11106--11115, 2021.

\end{thebibliography}

\clearpage
\appendix
\onecolumn


\section{Related Works}
\subsection{Time Series Imputation}

In the field of statistics, Rubin's seminal work in 1976 laid a critical foundation for traditional missing data processing by establishing a framework of procedures and conditions designed to mitigate the adverse effects of missing data \cite{rubin1976inference}. This theory underpins estimation methods such as Maximum Likelihood \cite{dempster1977maximum}, introduced in 1977, and Multiple Imputation \cite{rubin1978multiple}, developed by Rubin in 1978. These methods ensure convergence to consistent estimates under the Missing at Random (MAR) assumption. However, most traditional approaches to handling missing data rely on the assumptions of Missing Completely at Random (MCAR) or MAR. For instance, \cite{gibb2006hopelessness} assumed data to be MCAR in their study. Similarly, \cite{simmons2010comparison} applied Multiple Imputation under the MAR assumption. \cite{grella2008gender} employed the Maximum Observational Likelihood (IML) method to address missing data, which also relied on the MAR hypothesis. These traditional methods demonstrate robustness within their respective assumptions but may face limitations when the data deviates from these conditions, highlighting the need for more flexible approaches in modern data analysis.

Deep learning imputation methods have shown strong modelling capabilities for missing data imputation in recent years. The time series imputation method based on deep learning can be classified into several types: The RNN-based models use recurrent units to capture long-term temporal dependencies in time series \cite{che2018recurrent,yoon2018estimating}; the BRITS uses a bi-directional RNN, allowing the filled values to be updated efficiently during backpropagation \cite{cao2018brits}; in terms of CNN-based models, this type of model most tends to discover periodicity in time series data using a 2D convolution kernel by transforming a 1D time series into a set of 2D tensors based on multiple cycles \cite{wu2022timesnet}. Besides, some models select the GNN as the basement to learn the spatiotemporal representation of different channels in a multivariate time series through a message-passing graph neural network to reconstruct missing data. The model based on the self-attention mechanism \cite{cini2021filling,marisca2022learning} not only captures the complex long-term dependencies well compared to the above models but also uses the global information to make reasonable inferences about the missing values and improves the accuracy of filling the missing values.

In addition, generative models are also standard for this task. The VAE-based generative model \cite{fortuin2020gp,mulyadi2021uncertainty,kim2023probabilistic}, which approximates the actual data distribution to reconstruct the missing values by maximizing the Evidence Lower Bound of the Marginal Likelihood (ELBO), while solving the problems of reliable confidence estimation and interpretability; on contrast, the GAN-based generative model \cite{luo2018multivariate,qin2023imputegan,liu2019naomi,miao2021generative}, which makes the generator able to simulate the actual data distribution to complete the data filling, by the adversarial training between the generator and the discriminator,.
Finally, the diffusion \cite{tashiro2021csdi,alcaraz2022diffusion,liu2023pristi} based model. This model type captures complex data distributions by gradually adding and then inverting noise through a Markov chain of a series of diffusion steps to reconstruct missing values.

\subsection{Identification}
The independent component analysis (ICA) has been applied in various research to identify the casual representation \cite{rajendran2024learning,mansouri2023object,wendong2024causal}, which aims to recover the latent variable with identification guarantees \cite{yao2023multi,scholkopf2021toward,liu2023causal,gresele2020incomplete}. 
Traditional methods often presuppose a linear interaction between latent and observed variables.\cite{comon1994independent,hyvarinen2013independent,lee1998independent,zhang2007kernel}. 
However, meeting the linear mixing process in real-world scenarios takes much work. Thus, different sorts of assumptions, such as sparse generation process and auxiliary variables, are adopted to facilitate identifiability in nonlinear ICA scenarios \cite{zheng2022identifiability,hyvarinen1999nonlinear,hyvarinen2024identifiability,khemakhem2020ice,li2023identifying}. 
Expressly, identifiability is first confirmed by Aapo et al.’s research. In Ref. \cite{khemakhem2020variational,hyvarinen2016unsupervised,hyvarinen2017nonlinear,hyvarinen2019nonlinear}, they supposed that the exponential family consists of latent sources and introduced some variables like domain indexes, time indexes, and class labels as auxiliary variables. 
Besides, the research of Zhang et al. \cite{kong2022partial,xie2023multi,kong2023identification,yan2024counterfactual} shows that the component-wise identification for nonlinear ICA could be achieved without the exponential family assumption. Employ sparsity assumptions were applied in several types of research to achieve identifiability without any supervised signals \cite{zheng2022identifiability,hyvarinen1999nonlinear,hyvarinen2024identifiability,khemakhem2020ice,li2023identifying}.
For instance, \cite{lachapelle2023synergies,lachapelle2022partial} propose to use mechanism sparsity regularization to identify causal latent variables. In Ref. \cite{zhang2024causal} selected the sparse structures of latent variables. et al., aiming to achieve identifiability under distribution shift. In addition, to achieve identifiability of time series data, nonlinear ICA was also employed in Ref. \cite{hyvarinen2016unsupervised,yan2024counterfactual,huang2023latent,halva2020hidden,lippe2022citris}. \cite{hyvarinen2016unsupervised} identify non-stationary time series data identifiability by premises and capitalize of variance changes across data segments based on independent sources.
On the other hand, Permutation-based contrastive learning is employed to identify the latent variables in stationary time series data. Recently, some models applied independent noise and historical variability information, such as LEAP \cite{yao2021learning} and TDRL \cite{yao2022temporally}. At the same time, \cite{song2024temporally} identified latent variables without the observed domain variables. In terms of the identifiability of modality, multimodal comparative learning was presented the identifiability by Imant et al. \cite{daunhawer2023identifiability}. 
\cite{yao2023multi} hypothesized that multi-perspective causal representations are recognizable in the case of partial observations. In this article, based on multi-modality time series data, the fairness of multi-modality data and variability historical information was leveraged to prove the identifiability.

\section{Data Preprocessing}
\subsection{Simulation Experiment}
\textbf{Simulation Data Generation Process.} As for the temporally latent processes, we use MLPs with the activation function of LeakyReLU to model the time-delayed of temporally latent variables. For all datasets, we set sequence length as $5$ and transition lag as $1$. That is: $$z_ {t,i} = \left(LeakyReLU(W_{i,:}\cdot\mathbf{z}_ {t-1}, 0.2)+ V_{<i,i}\cdot\mathbf{z}_ {t,<i}\right)\cdot \epsilon_ {t,i} + \epsilon_ {t,i},$$ where $W_{i,:}$ is the $i$-th row of $W$ and $V_{<i,i}$ is the first $i-1$ columns in the $i$-th row of $V$. Moreover, each independent noise $\epsilon_{t,i}$ is sampled from the distribution of normal distribution.  We further let the data generation process from latent variables to observed variables be MLPs with the LeakyReLU units.

We provide a synthetic dataset A, with 3 latent variables. For dataset A, we have $W_A = 
[[1, 1, 0],[0, 1, 0],[0, 0, 1]]$. When it comes to $V$, we have and only have $V_{i-1,i}=1\ \ \forall i>0$ for dataset $A$. The total size of the dataset is 100,000, with 1,024 samples designated as the validation set. The remaining samples are the training set.\\
\textbf{Evaluation Metrics.} To evaluate the identifiability performance of our method under instantaneous dependencies, we employ the Mean Correlation Coefficient (MCC) between the ground-truth $\rvz_t$ and the estimated $\hat{\rvz}_t$. A higher MCC denotes a better identification performance the model can achieve. In addition, we also draw the estimated latent causal process to validate our method. Since the estimated transition function will be a transformation of the ground truth, we do not compare their exact values, but only the activated entries.\\
\textbf{Experiment Results.} We have provided the MCC results for missing cause variable \textbf{c} in Table \ref{tab:mcc}. Additionally, we performed a comprehensive sensitivity analysis of the loss function coefficients to facilitate a systematic comparison between temporal latent variables and missing causes, in Table \ref{tab:analysis}. 
\begin{table*}[]
\resizebox{\textwidth}{!}{%
\begin{tabular}{c|c|cccc|cccccccccccccccc}
\hline
\multirow{2}{*}{Dataset}  & \multirow{2}{*}{Ratio} & \multicolumn{2}{c}{DMM-MAR}     & \multicolumn{2}{c|}{DMM-MNAR} & \multicolumn{2}{c}{TimeCIB} & \multicolumn{2}{c}{ImputeFormer} & \multicolumn{2}{c}{TimesNet} & \multicolumn{2}{c}{SAITS} & \multicolumn{2}{c}{GPVAE} & \multicolumn{2}{c}{CSDI} & \multicolumn{2}{c}{BRITS} & \multicolumn{2}{c}{SSGAN} \\
                          &                        & MSE            & MAE            & MSE           & MAE           & MSE          & MAE          & MSE             & MAE            & MSE           & MAE          & MSE         & MAE         & MSE         & MAE         & MSE         & MAE        & MSE         & MAE         & MSE         & MAE         \\ \hline
\multirow{3}{*}{ETTh1}    & 0.2                    & \textbf{0.121} & \textbf{0.225} & 0.138         & 0.248         & 0.208        & 0.339        & 0.262           & 0.244          & 0.158         & 0.255        & 0.164       & 0.274       & 0.160       & 0.285       & 0.520       & 0.367      & 0.117       & 0.250       & 0.149       & 0.281       \\
                          & 0.4                    & \textbf{0.166} & \textbf{0.270} & 0.167         & 0.275         & 0.295        & 0.397        & 0.293           & 0.359          & 0.244         & 0.307        & 0.177       & 0.280       & 0.242       & 0.348       & 0.800       & 0.508      & 0.142       & 0.275       & 0.158       & 0.293       \\
                          & 0.6                    & \textbf{0.197} & \textbf{0.297} & 0.210         & 0.313         & 0.389        & 0.444        & 0.304           & 0.374          & 0.346         & 0.365        & 0.214       & 0.305       & 0.344       & 0.412       & 0.855       & 0.550      & 0.249       & 0.353       & 0.219       & 0.338       \\ \hline
\multirow{3}{*}{ETTh2}    & 0.2                    & \textbf{0.105} & \textbf{0.220} & 0.135         & 0.252         & 0.266        & 0.226        & 0.199           & 0.313          & 0.124         & 0.230        & 0.164       & 0.288       & 0.357       & 0.438       & 0.530       & 0.415      & 0.248       & 0.373       & 0.185       & 0.329       \\
                          & 0.4                    & \textbf{0.130} & \textbf{0.258} & 0.159         & 0.282         & 0.353        & 0.308        & 0.224           & 0.334          & 0.206         & 0.291        & 0.165       & 0.291       & 0.413       & 0.471       & 0.480       & 0.403      & 0.372       & 0.480       & 0.268       & 0.385       \\
                          & 0.6                    & \textbf{0.186} & \textbf{0.301} & 0.307         & 0.381         & 0.443        & 0.353        & 0.221           & 0.339          & 0.269         & 0.331        & 0.391       & 0.412       & 0.512       & 0.528       & 0.770       & 0.522      & 0.367       & 0.464       & 0.527       & 0.571       \\ \hline
\multirow{3}{*}{ETTm1}    & 0.2                    & \textbf{0.040} & \textbf{0.130} & 0.042         & 0.137         & 0.092        & 0.217        & 0.092           & 0.207          & 0.092         & 0.190        & 0.050       & 0.145       & 0.078       & 0.197       & 0.044       & 0.136      & 0.041       & 0.136       & 0.046       & 0.147       \\
                          & 0.4                    & \textbf{0.055} & \textbf{0.154} & 0.056         & 0.159         & 0.121        & 0.254        & 0.075           & 0.179          & 0.163         & 0.245        & 0.057       & 0.178       & 0.109       & 0.234       & 0.070       & 0.155      & 0.047       & 0.145       & 0.078       & 0.198       \\
                          & 0.6                    & \textbf{0.059} & \textbf{0.159} & 0.067         & 0.160         & 0.167        & 0.307        & 0.109           & 0.213          & 0.263         & 0.310        & 0.064       & 0.165       & 0.155       & 0.283       & 0.120       & 0.177      & 0.063       & 0.170       & 0.078       & 0.194       \\ \hline
\multirow{3}{*}{ETTm2}    & 0.2                    & \textbf{0.030} & \textbf{0.108} & 0.076         & 0.176         & 0.188        & 0.249        & 0.078           & 0.195          & 0.059         & 0.159        & 0.064       & 0.165       & 0.200       & 0.338       & 0.042       & 0.126      & 0.105       & 0.230       & 0.100       & 0.221       \\
                          & 0.4                    & \textbf{0.037} & \textbf{0.127} & 0.076         & 0.179         & 0.261        & 0.261        & 0.075           & 0.183          & 0.097         & 0.204        & 0.050       & 0.146       & 0.275       & 0.378       & 0.038       & 0.140      & 0.116       & 0.251       & 0.107       & 0.231       \\
                          & 0.6                    & \textbf{0.047} & \textbf{0.145} & 0.105         & 0.212         & 0.350        & 0.348        & 0.095           & 0.218          & 0.148         & 0.246        & 0.079       & 0.202       & 0.395       & 0.463       & 0.073       & 0.168      & 0.183       & 0.316       & 0.172       & 0.298       \\ \hline
\multirow{3}{*}{Exchange} & 0.2                    & \textbf{0.003} & \textbf{0.035} & 0.004         & 0.037         & 0.208        & 0.200        & 0.075           & 0.194          & 0.027         & 0.114        & 0.079       & 0.200       & 0.125       & 0.219       & 0.018       & 0.087      & 0.508       & 0.608       & 0.361       & 0.523       \\
                          & 0.4                    & \textbf{0.007} & \textbf{0.051} & 0.080         & 0.218         & 0.216        & 0.297        & 0.078           & 0.198          & 0.034         & 0.130        & 0.126       & 0.269       & 0.419       & 0.362       & 0.063       & 0.144      & 0.633       & 0.680       & 0.399       & 0.525       \\
                          & 0.6                    & \textbf{0.006} & \textbf{0.052} & 0.014         & 0.076         & 0.295        & 0.432        & 0.087           & 0.211          & 0.039         & 0.143        & 0.146       & 0.297       & 0.508       & 0.401       & 0.055       & 0.150      & 0.783       & 0.750       & 0.576       & 0.658       \\ \hline
\multirow{3}{*}{Weather}  & 0.2                    & \textbf{0.036} & \textbf{0.062} & 0.039         & 0.064         & 0.046        & 0.108        & 0.065           & 0.121          & 0.041         & 0.091        & 0.041       & 0.076       & 0.048       & 0.099       & 0.061       & 0.065      & 0.035       & 0.077       & 0.040       & 0.072       \\
                          & 0.4                    & \textbf{0.037} & \textbf{0.067} & 0.040         & 0.076         & 0.052        & 0.119        & 0.058           & 0.115          & 0.051         & 0.111        & 0.046       & 0.087       & 0.056       & 0.113       & 0.064       & 0.069      & 0.037       & 0.080       & 0.041       & 0.082       \\
                          & 0.6                    & \textbf{0.043} & \textbf{0.073} & 0.046         & 0.085         & 0.066        & 0.138        & 0.063           & 0.121          & 0.066         & 0.131        & 0.056       & 0.101       & 0.069       & 0.132       & 0.070       & 0.075      & 0.046       & 0.097       & 0.053       & 0.099       \\ \hline
\end{tabular}%
}
\vspace{-3mm}
\caption{Experiment results in supervised scenarios for various datasets with different missing ratios under Missing At Random (MAR) conditions. }
\label{table:MAR}
\vspace{-4mm}
\end{table*}
\subsection{Real-world Experiments}
\textbf{Dataset.} To evaluate the performance of the proposed method, we consider the following datasets: 1) \textbf{ETT} \cite{zhou2021informer} is an electricity transformer temperature dataset collected from two separated counties in China, including four different datasets $\{\text{ETTh1, ETTh2, ETTm1, ETTm2}\}$; 2) \textbf{Exchange} \cite{lai2018modeling} is the daily exchange rate dataset from of eight foreign countries including Australia, British, Canada, Switzerland, China, Japan, New Zealand, and Singapore ranging from 1990 to 2016; 3) \textbf{Weather}\footnote{https://www.bgc-jena.mpg.de/wetter/} is recorded at the Weather Station at the Max Planck Institute for Biogeochemistry in Jena, Germany. 4) \textbf{MIMIC-III}\footnote{https://mimic.mit.edu/docs/gettingstarted/} \cite{johnson2016mimic} is a published dataset with de-identified health-related data associated with more than forty thousand patients who stayed in critical care units of the Beth Israel Deaconess Medical Center between 2001 and 2012. For each dataset, we generate mask matrices to simulate missing values based on the missing mechanisms of MAR and MNAR. Meanwhile, we use three different mask ratios, such as 0.2, 0.4, and 0.6. In addition, we use both supervised learning and unsupervised learning methods for training. Supervised learning provides missing data during the model training phase to calculate the loss and optimize the model, while unsupervised learning does not provide missing data. However, in order to improve the filling performance of the model, the observed data is partially masked during unsupervised learning and provided during model training to calculate the loss. \\ 
\textbf{Data Preprocessing.} To facilitate the training of the dataset, we introduce a mask matrix $R$, to denote the missing positions in the time series data. Consequently, the representation of the time series data, as defined by $R$, is as follows:
\begin{equation}
\begin{split}
    X=R*X^o+(1-R)*X^m
\end{split}
\end{equation}
Depending on the two distinct data missing mechanisms, we can generate various mask matrices $R$.\\
\textbf{Generation of mask matrix R under MAR.} 
Randomly select a certain proportion of data from the complete data X as the estimated observable data $\hat{X}^o$, and the remaining data is the estimated missing data $\hat{X}^m$. At this time, the estimated mask matrix is denoted as $\hat{R}$. After passing $\hat{X}^o$ through an MLP, a mask matrix $\hat{R}^o$ is generated. Since the missing data depends on the complete variables rather than the missing data itself, the positions where $\hat{R}$ and $\hat{R}^o$ are both non-zero can be considered as missing data that does not depend on themselves. The mask matrix can be generated as follows:
\begin{equation}
\begin{split}
    R=1-(1-\hat{R})\odot (1-\hat{R}^o)
\end{split}
\end{equation}
\textbf{Generation of mask matrix R under MNAR.} Missing Not At Random refers to situations other than Missing Completely At Random and Missing At Random. Specifically, data loss is necessarily related to missing variables and may also be related to complete data. According to the causal diagram, we know that the missing cause at the current moment is caused by the temporal data of the previous moments. Therefore, we input the data of the previous moments into an MLP to construct the mask matrix $R_t$ for this moment, that is, $R_t=m(X_{t-1})$. The MLP is not randomly initialized but manually verified. We use the MLP to estimate missingness probabilities for the rest and generate the mask by ranking these values to match the desired missing rate. We repeat the process if the mask doesn't meet the criteria. \\
\textbf{Experiment Results.} For each experimental configuration, we conducted three independent trials with varying random seeds and reported the mean and standard deviation of the results. The results of our supervised learning experiments are tabulated in Table \ref{table:MAR} and Table \ref{table:MNAR}. We reported the standard deviation of all experiments in Table \ref{table:std_MAR}, Table \ref{table:std_MNAR}, Table \ref{table:std_un_MAR} and Table \ref{table:std_un_MNAR}. In some real-world scenarios, missingness may depend on future information. While our original assumption focused on past dependencies, we have added a new experiment on the Exchange dataset under an MNAR setting where the mask depends on past, present, and future data (see Table \ref{tab:future}). Furthermore, we provide experimental results obtained from the MIMIC-III medical dataset, which are illustrated in Table \ref{tab:medical}.\\
\textbf{Mixed Missing Mechanisms.}
We added experiments under mixed missingness settings with different MCAR:MAR:MNAR ratios (Table \ref{tab:mix}). Our model performs well across settings, though performance slightly drops when MCAR dominates due to its randomness.
\subsection{Ablation Study} 
We constructed two different model variants for both MAR and MNAR scenarios: a) DMM-$L^z_K$: We eliminate the latent state prior along with the associated Kullback-Leibler (KL) divergence term from the model. b) DMM-$L^c_K$: We eliminate the missing cause prior and its associated Kullback-Leibler (KL) divergence term from our model. Experiment results on the ETTh1 dataset are shown in Figure \ref{fig:ablation}. Our experiments revealed that the elimination of either the latent state prior or the missing cause prior led to a decline in model performance. This underscores the importance of these priors in the imputation process, suggesting that they are capable of encapsulating temporal information. 
\begin{figure}
    \centering
    \includegraphics[width=1\linewidth]{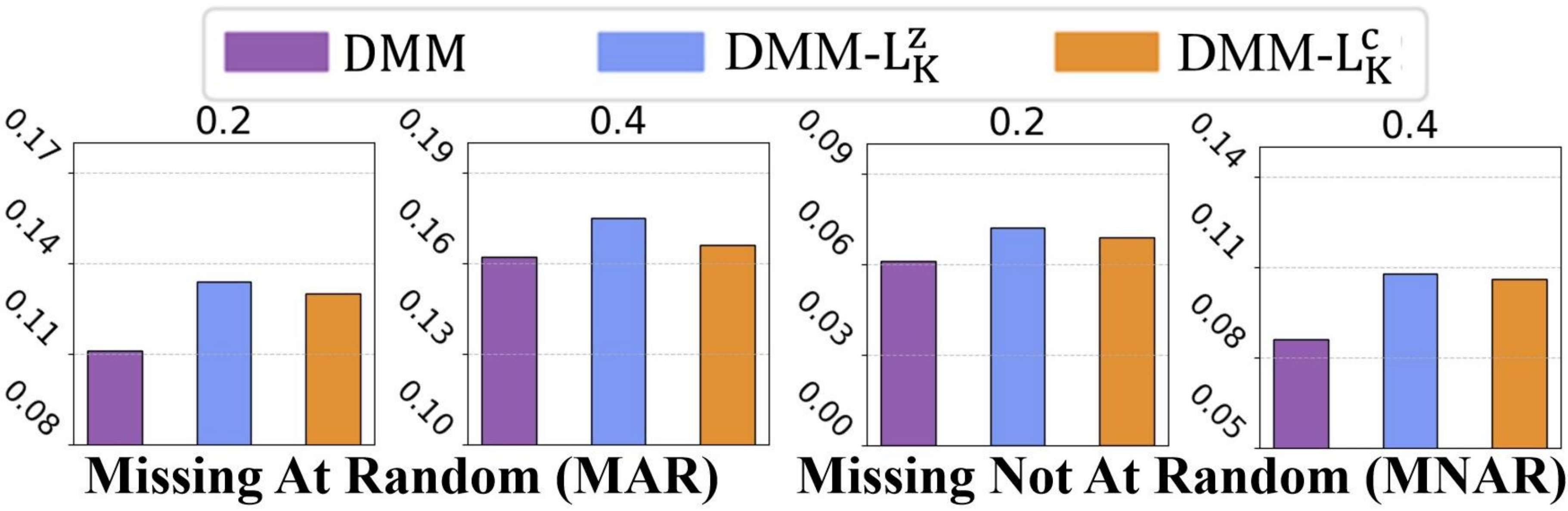}
    \vspace{-4mm}
    \caption{ Ablation study on the ETTh1 datasets.}
    \label{fig:ablation}
    \vspace{-3mm}
\end{figure}
\subsection{Model Efficiency}
We evaluate the performance of our model and the baseline model on the ETTh2 dataset from three aspects: imputation performance, training speed, and memory footprint, as shown in Figure 6. Compared with other models for time-series imputation, we can find that the proposed DMM has the best model performance and relatively good model efficiency. 

\begin{figure}[]
    \vspace{-3mm}
    \centering
    \includegraphics[width=0.5\linewidth]{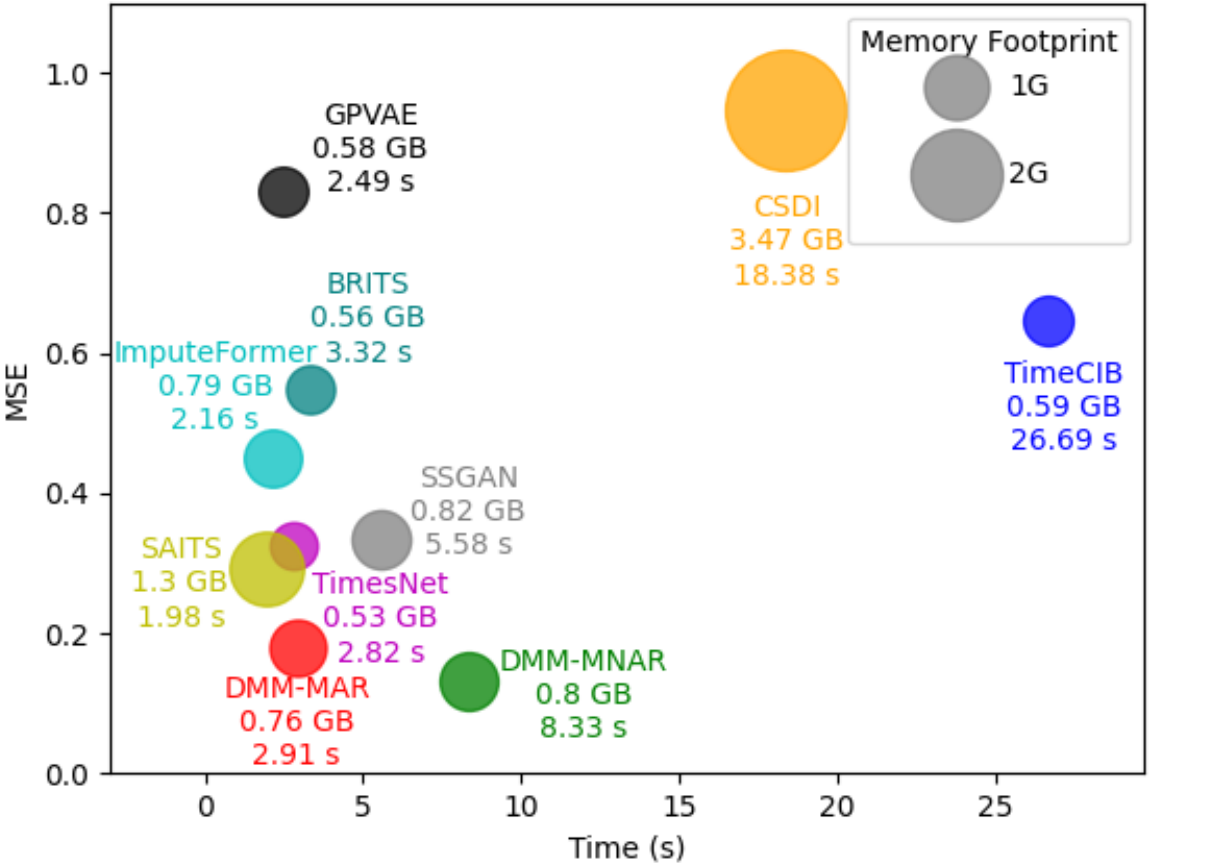}
    \vspace{-3mm}
    \caption{Computational efficiency of ETTh2 dataset with a missing rate of 0.6 under unsupervised conditions in MNAR}
    \vspace{-4mm}
    \label{fig:efficiency}
\end{figure}

\section{Proof}
\label{Proof}
\renewcommand{\thetheorem}{\arabic{theorem}}
\setcounter{theorem}{0}
\begin{theorem}
\textbf{(Identification of Latent States and Missing Causes under MAR.)}
Suppose that the observed
data from missing time series data is generated following the data
generation process, and we make the following assumptions:
\begin{itemize}[leftmargin=*,itemsep=5pt]   
    \item A1 \underline{(\textbf{Smooth, Positive and Conditional independent Den-}} \\
    \underline{\textbf{sity:})}
    \cite{yao2022temporally,yao2021learning}
    The probability density function of latent variables is smooth and positive, i.e., $p(\textbf{z}_{t}|\textbf{z}_{t-1})>0$, $p(\boldsymbol{\rvc}_{t}|\textbf{x}^o_{t})>0$. Conditioned on $\textbf{z}_{t-1}$ each $z_{t,i}$ is independent of any other $z_{t,j}$ for $i,j\in {1,\cdots,n}, i\neq j,\ i.e$, $\log{p(\rvz_{t}|\rvz_{t-1})} = \sum_{k=1}^{n_s} \log {p(\textbf{z}_{t,k}|\rvz_{t-1})}$. Conditioned on $\textbf{x}^o$ each $c_{t,i}$ is independent of any other $c_{t,j}$ for $i,j\in {1,\cdots,n}, i\neq j,\ i.e$, $\log{p(\boldsymbol{\rvc}_{t}|\textbf{x}^o_{t})} = \sum_{k=1}^{n_s} \log {p(\boldsymbol{\rvc}_{t,k}|\textbf{x}^o_{t})}$. 
    \item A2  \underline{(\textbf{Linear Independent of MAR:})}\cite{yao2022temporally} For any $\rvz_t$, there exist $2n+1$ values of $z_{t-1,l}, l=1,\cdots, n$, such that these $2n$ vectors $\mathbf{v}^A_{t,k,l}-\mathbf{v}^A_{t,k,n}$ are linearly independent, where $\mathbf{v}^A_{t,k,l}$ is defined as follows:
    \begin{align}
    \mathbf{v}^A_{t,k,l} = (\frac{\partial^2 \log{p(z_{t,k}|\rvz_{t-1})} }{\partial z_{t,k} \partial z_{t-1,1}}, \nonumber\cdots,\frac{\partial^2 \log{p(z_{t,k}|\rvz_{t-1}})} {\partial z_{t,k} \partial z_{t-1,n}},\frac{\partial^3 \log{p(z_{t,k}|\rvz_{t-1})} }{\partial^2 z_{t,k} \partial z_{t-1,1}}, \nonumber\cdots, \frac{\partial^3 \log{p(z_{t,k}|\rvz_{t-1})} }{\partial^2 z_{t,k} \partial z_{t-1,n}})^T
    \end{align}
    Similarly, for each value of $\boldsymbol{\rvc}_t$, there exist $2n+1$ values of $\rvx^o_{t}$, i.e., $\rvx^o_{t,j}$ with $j=0,2,..2n$, such that these $2n$ vectors $\mathbf{w}^A(\rvc_t,\rvx^o_{t,j})-\mathbf{w}^A(\rvc_t,\rvx^o_{t,0})$ are linearly independent, where the vector $\mathbf{w}^A(\rvc_t,\rvx^o_{t,j})$ is defined as follows:
    \begin{align}
    \mathbf{w}^A(\rvc_t,\rvx^o_{t,j}) = (\frac{\partial^2 \log{p(c_{t,k}|\rvx^o_{t})} }{\partial^2 c_{t,k} },\nonumber\cdots,\frac{\partial^2 \log{p(c_{t,k}|\rvx^o_{t}})} {\partial^2 c_{t,k} },\frac{\partial \log{p(c_{t,k}|\rvx^o_{t})} }{\partial c_{t,k} },\nonumber\cdots, \frac{\partial \log{p(c_{t,k}|\rvx^o_{t})} }{\partial c_{t,k} })^T
    \end{align}
\end{itemize}
Then, by learning the data generation process, $\mathbf{z}_t$ and $\rvc_t$ are component-wise identifiable. 
\end{theorem}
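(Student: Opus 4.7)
The plan is to reduce the statement to the now-standard nonlinear ICA identification machinery developed in \cite{yao2022temporally,yao2021learning}, treating $\rvz_{t-1}$ as the auxiliary variable providing sufficient changes for $\rvz_t$, and $\rvx_t^o$ as the auxiliary variable providing sufficient changes for $\rvc_t$. Concretely, I would first argue that matching the observational distribution $p(\rvx^o_{1:T},\rvx^m_{1:T}) = \hat p(\rvx^o_{1:T},\rvx^m_{1:T})$, combined with the invertibility of the mixing function $g$ and the MAR m-graph in Figure~\ref{fig:missing_mechanism}(a), forces the existence of smooth invertible maps $h^z$ and $h^c$ between the ground-truth and estimated latents so that $\rvz_t=h^z(\hat\rvz_t)$ and $\rvc_t=h^c(\hat\rvc_t,\rvx^o_t)$ hold almost surely.

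For the component-wise identification of $\rvz_t$, I would write down the change-of-variables identity
\[
\log p(\hat\rvz_t \mid \hat\rvz_{t-1}) \;=\; \log p(\rvz_t \mid \rvz_{t-1}) + \log \lvert\det \mathbf{J}_{h^z}\rvert,
\]
invoke the conditional independence from A1 to factorize both sides, and then take the second-order partial $\partial^2/\partial \hat z_{t,i}\,\partial \hat z_{t,j}$ for $i\neq j$. Because the RHS factorizes, the cross terms on the ground-truth side drop out, leaving a linear system in the entries of $\mathbf{J}_{h^z}$ whose coefficients are exactly the score quantities $\partial^2\log p(z_{t,k}\mid\rvz_{t-1})/\partial z_{t,k}\partial z_{t-1,\ell}$ and $\partial^3\log p(z_{t,k}\mid\rvz_{t-1})/\partial^2 z_{t,k}\partial z_{t-1,\ell}$. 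Differentiating once more with respect to $z_{t-1,\ell}$ at the $2n{+}1$ distinct values given by A2 and subtracting pairs produces a linear system whose coefficient matrix consists of the vectors $\mathbf{v}^A_{t,k,l}-\mathbf{v}^A_{t,k,n}$. Linear independence then forces all off-diagonal entries of a suitably transformed Jacobian to vanish, so each $\hat z_{t,k}$ depends on only one ground-truth coordinate, which is component-wise identifiability.

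The argument for $\rvc_t$ is structurally identical after substituting $\rvx_t^o$ for $\rvz_{t-1}$ and the second half of A1 and A2 for the first. Under MAR the conditional $p(\rvc_t\mid\rvx_t^o)$ is well defined and factorizes, so applying the same cross-derivative trick to $\log p(\hat\rvc_t\mid\rvx_t^o)$ and evaluating at the $2n{+}1$ distinct values $\rvx_{t,j}^o$ from A2 yields a linear system whose coefficient matrix is the set of differences $\mathbf{w}^A(\rvc_t,\rvx^o_{t,j})-\mathbf{w}^A(\rvc_t,\rvx^o_{t,0})$. Linear independence again forces the off-diagonal structure to collapse, giving the component-wise bijection between $\hat\rvc_t$ and $\rvc_t$ conditional on $\rvx_t^o$.

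The main obstacle will be justifying the reduction step that yields $h^z$ and $h^c$ as \emph{separate} maps rather than one entangled map on $(\rvz_t,\rvc_t)$. Unlike $\rvz_t$, the variable $\rvc_t$ is purely latent and enters the observables only through the path $\rvc_t\to\rvx_t^m$, so a priori nothing prevents an identified model from splitting the $(\hat\rvz_t,\hat\rvc_t)$ representation differently from the ground truth. To handle this I expect to need a preliminary lemma using the MAR factorization $P(\rvz_{1:T}\mid\rvx^o_{1:T})P(\rvc_{1:T}\mid\rvx^o_{1:T})$ from the joint distribution derived in the MAR subsection, which implies $\rvz_t\perp\!\!\!\perp\rvc_t\mid\rvx_t^o$; combined with the distinct conditioning structures (only $\rvz_t$ depends on $\rvz_{t-1}$ while only $\rvc_t$ depends nontrivially on $\rvx_t^o$ through its score), this yields a block-triangular Jacobian for the joint map and lets the two identification arguments proceed independently on each block.
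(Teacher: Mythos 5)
Your proposal follows essentially the same route as the paper's proof: matched marginals plus invertibility of the mixing give an invertible map $h=g^{-1}\circ\hat g$ between true and estimated latents, the change-of-variables identity is combined with the conditional-independence factorization of A1, cross second derivatives in $\hat z_{t,i},\hat z_{t,j}$ are taken, and the $2n+1$ values from A2 yield a linear system whose independence forces each row of the Jacobian to have a single non-zero entry, with the $\rvc_t$ argument obtained by substituting $\rvx^o_t$ for $\rvz_{t-1}$. The one place you are more careful than the paper is the separation of $h^z$ from $h^c$ --- the paper simply asserts $p(h_c(\hat{\rvc}_t))=p(\rvc_t)$ ``similarly'' without the block-triangular disentangling lemma you propose, so your extra step fills a gap the paper leaves implicit rather than constituting a different approach.
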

\begin{proof}
We start from the matched marginal distribution to develop the relation between $\textbf{z}_{t}$ and $\hat{\textbf{z}}_{t}$ as follows:    
\begin{align}  
    p(\hat{\mathbf{x}}_{t}) = p(\mathbf{x}_{t})  \Longleftrightarrow p(\hat{g}(\hat{\mathbf{z}}_{t})) = p(g(\mathbf{z}_{t}))
    \Longleftrightarrow 
    \nonumber
    p(g^{-1}\circ \hat{g}(\hat{\mathbf{z}}_{t}))
    = p(\mathbf{z}_{t})|\textbf{J}^A_{g^{-1}}|\Longleftrightarrow p(h(\hat{\mathbf{z}}_{t})) = p(\mathbf{z}_{t}),
\end{align} 
where $\hat{g}^{-1}: \mathcal{X}\rightarrow \mathcal{Z}$ denotes the estimated invertible generation function, and $h:=g^{-1}\circ \hat{g}$ is the transformation between the true latent variables and the estimated one. $|\mathbf{J}^A_{g^{-1}}|$ denotes the absolute value of Jacobian matrix determinant of $g^{-1}$. Note that as both $\hat{g}^{-1}$ and $g$ are invertible, $|\mathbf{J}^A_{g^{-1}}|\neq 0$ and $h$ is invertible.\\
The Jacobian matrix of the mapping from $(\rvx_{t-1}, \hat{\rvz}_t)$ to $(\rvx_{t-1}, \rvz_{t})$ is
\[
\begin{bmatrix}
  \mathbf{I} & \mathbf{0} \\
  * & \mathbf{J}^A_h \\
\end{bmatrix},
\]
where $*$ denotes a matrix, and the determinant of this Jacobian matrix is $|\mathbf{J}^A_h|$. Since $\rvx_{t-1}$ do not contain any information of $\hat{\rvz}_{t}$, the right-top element is $\mathbf{0}$.
Therefore $p(\hat{\rvz}_t, \rvx_{t-1})=p(\rvz_t, \rvx_{t-1})\cdot |\mathbf{J}^A_h|$. Dividing both sides of this equation by $p(\rvx_{t-1})$ gives
\begin{equation}
    p(\hat{\rvz}_t|\rvx_{t-1})=p(\rvz_t|\rvx_{t-1})\cdot |\mathbf{J}^A_h|
\end{equation}
Since $p(\rvz_t|\rvz_{t-1})=p(\rvz_t|g(\rvz_{t-1}))=p(\rvz_t|\rvx_{t-1})$ and similarly $p(\hat{\rvz}_{t}|\hat{\rvz}_{t-1})=p(\hat{\rvz}_{t}|\rvx_{t-1})$, we have
\begin{equation}
\label{equ:c_1}
\begin{split}
    \log p(\hat{\rvz}_t|\hat{\rvz}_{t-1})=\log p(\rvz_t|\rvz_{t-1}) + \log |\mathbf{J}^A_h|=\sum_{k=1}^n \log p(z_{t,k}|\rvz_{t-1}) + \log |\mathbf{J}^A_h|
\end{split}
\end{equation}
For $i,j\in \{1, \cdots, n\}$, $i\ne j$,  conditioned on $\textbf{z}_{t-1}$ each $z_{t,i}$ is independent of any other $z_{t,j}$, we have:
\begin{equation}
\label{equ:c_2}
\begin{split}
    \frac{\partial^2 \log p(\hat{\rvz}_t|\hat{\rvz}_{t-1})}{\partial \hat{z}_{t,i} \partial \hat{z}_{t,j}}=0
\end{split}
\end{equation}
Sequentially, the first-order derivative w.r.t. $\hat{z}_{t,i}$ is 
\begin{equation}
\begin{split}
    \frac{\partial \log p(\hat{\rvz}_t|\hat{\rvz}_{t-1})}{\partial \hat{z}_{t,i}}=\sum_{k=1}^n \frac{\partial \log p(z_{t,k}|\rvz_{t-1})}{\partial z_{t,k}}\cdot\frac{\partial z_{t,k}}{\partial \hat{z}_{t,i}} + \frac{\partial \log |\mathbf{J}^A_h|}{\partial \hat{z}_{t,i}}
\end{split}
\end{equation}
Sequentially, we conduct the second-order derivative w.r.t $\hat{z}_{t,j}, \hat{z}_{t,j}$ and have
\begin{equation}
\begin{split}
    \frac{\partial^2 \log p(\hat{\rvz}_t|\hat{\rvz}_{t-1})}{\partial \hat{z}_{t,i} \partial \hat{z}_{t,j}}=\sum_{k=1}^n( \frac{\partial^2 \log p(z_{t,k}|\rvz_{t-1})}{\partial^2 z_{t,k}}\cdot\frac{\partial z_{t,k}}{\partial \hat{z}_{t,i}}\cdot\frac{\partial z_{t,k}}{\partial \hat{z}_{t,j}}+\frac{\partial \log p(z_{t,k}|\rvz_{t-1})}{\partial z_{t,k}}\cdot\frac{\partial^2 z_{t,k}}{\partial \hat{z}_{t,i}\partial \hat{z}_{t,j}}) + \frac{\partial^2 \log |\mathbf{J}^A_h|}{\partial \hat{z}_{t,i}\partial \hat{z}_{t,j}}
\end{split}
\end{equation}
According to Equation \ref{equ:c_2} and the fact that ht does not depend on $z_{t-1,l}$ for each $l=1,\cdots,n$, that is
\begin{equation}
\begin{split}
    \sum_{k=1}^n( \frac{\partial^3 \log p(z_{t,k}|\rvz_{t-1})}{\partial^2 z_{t,k}\partial z_{t-1,l}}\cdot\frac{\partial z_{t,k}}{\partial \hat{z}_{t,i}}\cdot\frac{\partial z_{t,k}}{\partial \hat{z}_{t,j}}+\frac{\partial^2 \log p(z_{t,k}|\rvz_{t-1})}{\partial z_{t,k}\partial z_{t-1,l}}\cdot\frac{\partial^2 z_{t,k}}{\partial \hat{z}_{t,i}\partial \hat{z}_{t,j}})=0
\end{split}
\end{equation}
\begin{table*}[]
\resizebox{\textwidth}{!}{%
\begin{tabular}{c|c|cccc|cccccccccccccccc}
\hline
\multirow{2}{*}{Dataset}  & \multirow{2}{*}{Ratio} & \multicolumn{2}{c}{DMM-MAR} & \multicolumn{2}{c|}{DMM-MNAR}   & \multicolumn{2}{c}{TimeCIB} & \multicolumn{2}{c}{ImputeFormer} & \multicolumn{2}{c}{TimesNet} & \multicolumn{2}{c}{SAITS} & \multicolumn{2}{c}{GPVAE} & \multicolumn{2}{c}{CSDI} & \multicolumn{2}{c}{BRITS} & \multicolumn{2}{c}{SSGAN} \\
                          &                        & MSE          & MAE          & MSE            & MAE            & MSE          & MAE          & MSE             & MAE            & MSE           & MAE          & MSE         & MAE         & MSE         & MAE         & MSE         & MAE        & MSE         & MAE         & MSE         & MAE         \\ \hline
\multirow{3}{*}{ETTh1}    & 0.2                    & 0.090        & 0.190        & \textbf{0.061} & \textbf{0.164} & 0.165        & 0.306        & 0.282           & 0.346          & 0.089         & 0.199        & 0.075       & 0.173       & 0.142       & 0.282       & 0.221       & 0.249      & 0.121       & 0.250       & 0.144       & 0.276       \\
                          & 0.4                    & 0.106        & 0.209        & \textbf{0.086} & \textbf{0.195} & 0.216        & 0.349        & 0.377           & 0.398          & 0.144         & 0.249        & 0.096       & 0.207       & 0.186       & 0.319       & 0.415       & 0.340      & 0.148       & 0.275       & 0.168       & 0.294       \\
                          & 0.6                    & 0.156        & 0.259        & \textbf{0.144} & \textbf{0.252} & 0.284        & 0.394        & 0.502           & 0.456          & 0.223         & 0.305        & 0.150       & 0.258       & 0.276       & 0.385       & 0.613       & 0.439      & 0.247       & 0.365       & 0.241       & 0.355       \\ \hline
\multirow{3}{*}{ETTh2}    & 0.2                    & 0.083        & 0.189        & \textbf{0.081} & \textbf{0.185} & 0.192        & 0.217        & 0.183           & 0.309          & 0.096         & 0.218        & 0.100       & 0.224       & 0.361       & 0.458       & 0.241       & 0.277      & 0.187       & 0.332       & 0.203       & 0.346       \\
                          & 0.4                    & 0.096        & 0.207        & \textbf{0.091} & \textbf{0.203} & 0.213        & 0.329        & 0.207           & 0.328          & 0.146         & 0.252        & 0.150       & 0.270       & 0.370       & 0.451       & 0.226       & 0.289      & 0.235       & 0.361       & 0.285       & 0.382       \\
                          & 0.6                    & 0.116        & 0.228        & \textbf{0.112} & \textbf{0.226} & 0.247        & 0.302        & 0.220           & 0.336          & 0.205         & 0.296        & 0.200       & 0.328       & 0.474       & 0.508       & 0.388       & 0.373      & 0.378       & 0.459       & 0.318       & 0.424       \\ \hline
\multirow{3}{*}{ETTm1}    & 0.2                    & 0.031        & 0.120        & \textbf{0.026} & \textbf{0.101} & 0.079        & 0.206        & 0.076           & 0.189          & 0.060         & 0.158        & 0.039       & 0.136       & 0.063       & 0.180       & 0.028       & 0.105      & 0.040       & 0.133       & 0.061       & 0.168       \\
                          & 0.4                    & 0.032        & 0.117        & \textbf{0.029} & \textbf{0.111} & 0.104        & 0.242        & 0.135           & 0.252          & 0.101         & 0.203        & 0.056       & 0.165       & 0.085       & 0.211       & 0.038       & 0.116      & 0.047       & 0.146       & 0.075       & 0.191       \\
                          & 0.6                    & 0.045        & 0.141        & \textbf{0.042} & \textbf{0.132} & 0.146        & 0.289        & 0.095           & 0.207          & 0.167         & 0.255        & 0.088       & 0.209       & 0.123       & 0.256       & 0.065       & 0.143      & 0.063       & 0.172       & 0.074       & 0.191       \\ \hline
\multirow{3}{*}{ETTm2}    & 0.2                    & 0.039        & 0.124        & 0.037          & \textbf{0.123} & 0.154        & 0.178        & 0.109           & 0.244          & 0.047         & 0.140        & 0.095       & 0.217       & 0.109       & 0.245       & 0.034       & 0.148      & 0.103       & 0.235       & 0.108       & 0.243       \\
                          & 0.4                    & 0.048        & 0.144        & \textbf{0.048} & \textbf{0.143} & 0.186        & 0.193        & 0.133           & 0.268          & 0.075         & 0.179        & 0.179       & 0.291       & 0.115       & 0.247       & 0.052       & 0.152      & 0.118       & 0.251       & 0.107       & 0.241       \\
                          & 0.6                    & 0.061        & 0.161        & \textbf{0.054} & \textbf{0.151} & 0.237        & 0.202        & 0.137           & 0.270          & 0.110         & 0.221        & 0.224       & 0.326       & 0.260       & 0.347       & 0.076       & 0.166      & 0.209       & 0.329       & 0.189       & 0.325       \\ \hline
\multirow{3}{*}{Exchange} & 0.2                    & 0.004        & 0.036        & \textbf{0.002} & \textbf{0.034} & 0.207        & 0.291        & 0.110           & 0.249          & 0.025         & 0.112        & 0.138       & 0.283       & 0.159       & 0.157       & 0.016       & 0.086      & 0.465       & 0.577       & 0.338       & 0.504       \\
                          & 0.4                    & 0.009        & 0.053        & \textbf{0.007} & \textbf{0.052} & 0.260        & 0.320        & 0.105           & 0.252          & 0.032         & 0.128        & 0.140       & 0.304       & 0.189       & 0.173       & 0.024       & 0.104      & 0.645       & 0.673       & 0.405       & 0.552       \\
                          & 0.6                    & 0.006        & 0.059        & \textbf{0.006} & \textbf{0.054} & 0.376        & 0.416        & 0.104           & 0.247          & 0.039         & 0.141        & 0.224       & 0.378       & 0.214       & 0.226       & 0.029       & 0.117      & 0.829       & 0.786       & 0.684       & 0.724       \\ \hline
\multirow{3}{*}{Weather}  & 0.2                    & 0.046        & 0.078        & \textbf{0.030} & \textbf{0.048} & 0.045        & 0.103        & 0.072           & 0.131          & 0.037         & 0.080        & 0.035       & 0.065       & 0.051       & 0.099       & 0.059       & 0.069      & 0.036       & 0.077       & 0.039       & 0.069       \\
                          & 0.4                    & 0.038        & 0.063        & \textbf{0.036} & \textbf{0.059} & 0.050        & 0.113        & 0.068           & 0.130          & 0.043         & 0.093        & 0.039       & 0.074       & 0.054       & 0.113       & 0.062       & 0.067      & 0.039       & 0.081       & 0.042       & 0.082       \\
                          & 0.6                    & 0.052        & 0.092        & \textbf{0.042} & \textbf{0.071} & 0.061        & 0.131        & 0.072           & 0.138          & 0.054         & 0.111        & 0.045       & 0.089       & 0.065       & 0.128       & 0.071       & 0.081      & 0.047       & 0.093       & 0.049       & 0.097       \\ \hline
\end{tabular}%
}
\vspace{-3mm}
\caption{Experiment results in supervised scenarios for various datasets with different missing ratios under Missing Not At Random (MNAR) conditions. }
\label{table:MNAR}
\vspace{-3mm}
\end{table*}
For each value of $\rvz_t$, $\mathbf{v}^A_{t,k,l}-\mathbf{v}^A_{t,k,n}$ are linearly independents, so $\frac{\partial z_{t,k}}{\partial \hat{z}_{t,i}}\cdot\frac{\partial z_{t,k}}{\partial \hat{z}_{t,j}}=0$. That is, in each row of $\mathbf{J}^A_h$ there is only one non-zero entry. Since h is invertible, then $\rvz_t$ is component-wise identifiable.\\
Similarly, we have $p(h_c(\hat{\rvc}_{t})) = p(\rvc_{t})$, then 
\begin{equation}
\label{equ:c_3}
\begin{split}
    \log p(\hat{\rvc}_t|\rvx^o_{t})=\log p(\rvc_t|\rvx^o_{t}) + \log |\mathbf{J}^A_{h_c}|=\sum_{k=1}^n \log p(c_{t,k}|\rvx^o_{t}) + \log |\mathbf{J}^A_{h_c}|
\end{split}
\end{equation}
For $i,j\in \{1, \cdots, n\}$, $i\ne j$,  conditioned on $\rvx^o_{t}$ each $c_{t,i}$ is independent of any other $c_{t,j}$, we have:
\begin{equation}
\begin{split}
    \frac{\partial^2 \log p(\hat{\rvc}_t|\rvx^o_{t})}{\partial \hat{c}_{t,i} \partial \hat{c}_{t,j}}=0
\end{split}
\end{equation}
Sequentially, the first-order derivative w.r.t. $\hat{c}_{t,i}$ is 
\begin{equation}
\begin{split}
    \frac{\partial \log p(\hat{\rvc}_t|\rvx^o_{t})}{\partial \hat{c}_{t,i}}=\sum_{k=1}^n \frac{\partial \log p(c_{t,k}|\rvx^o_{t})}{\partial c_{t,k}}\cdot\frac{\partial c_{t,k}}{\partial \hat{c}_{t,i}} + \frac{\partial \log |\mathbf{J}^A_h|}{\partial \hat{c}_{t,i}}
\end{split}
\end{equation}
Sequentially, we conduct the second-order derivative w.r.t $\hat{c}_{t,j}, \hat{c}_{t,j}$ and have
\begin{equation}
\begin{split}
    \frac{\partial^2 \log p(\hat{\rvc}_t|\rvx^o_{t})}{\partial \hat{c}_{t,i} \partial \hat{c}_{t,j}}=\sum_{k=1}^n( \frac{\partial^2 \log p(c_{t,k}|\rvx^o_{t})}{\partial^2 c_{t,k}}\cdot\frac{\partial c_{t,k}}{\partial \hat{c}_{t,i}}\cdot\frac{\partial c_{t,k}}{\partial \hat{c}_{t,j}}+\frac{\partial \log p(c_{t,k}|\rvx^o_{t})}{\partial c_{t,k}}\cdot\frac{\partial^2 c_{t,k}}{\partial \hat{c}_{t,i}\partial \hat{c}_{t,j}}) + \frac{\partial^2 \log |\mathbf{J}^A_{h_c}|}{\partial \hat{c}_{t,i}\partial \hat{c}_{t,j}}
\end{split}
\end{equation}
Therefore, there exist $2n+1$ values of $\rvx^o_{t}$, i.e., $\rvx^o_{t,l}$ with $l=0,2,..2n$, we have $2n+1$ such equations. Subtracting each equation corresponding to $\rvx^o_{t,1}\cdots \rvx^o_{t,2n}$ with the equation 
 corresponding to $\rvx^o_{t,0}$ results in $2n$ equations:
\begin{equation}
\begin{split}
    \sum_{k=1}^n((\frac{\partial^2 \log p(c_{t,k}|\rvx^o_{t,l})}{\partial^2 c_{t,k}}-\frac{\partial^2 \log p(c_{t,k}|\rvx^o_{t,0})}{\partial^2 c_{t,k}})\cdot\frac{\partial c_{t,k}}{\partial \hat{c}_{t,i}}\cdot\frac{\partial c_{t,k}}{\partial \hat{c}_{t,j}}+(\frac{\partial \log p(c_{t,k}|\rvx^o_{t,l})}{\partial c_{t,k}}-\frac{\partial \log p(c_{t,k}|\rvx^o_{t,0})}{\partial c_{t,k}})\cdot\frac{\partial^2 c_{t,k}}{\partial \hat{c}_{t,i}\partial \hat{c}_{t,j}})=0
\end{split}
\end{equation}
under the linear independence condition in Assumption A2, we have $\frac{\partial c_{t,k}}{\partial \hat{c}_{t,i}}\cdot\frac{\partial c_{t,k}}{\partial \hat{c}_{t,j}}=0$. That is, in each row of $\mathbf{J}^A_{h_c}$ there is only one non-zero entry. Since $h_c$ is invertible, then $\rvc_t$ is component-wise identifiable. 
\end{proof}

\begin{theorem}
\textbf{(Identification of Latent States and Missing Causes under MNAR.)}
We follow the A1 in Theorem 1 and suppose that the observed data from missing time series data is generated following the data generation process, and we further make the following assumptions:
    \begin{itemize}[leftmargin=*,itemsep=5pt] 
    \item A3 \underline{(\textbf{Linear Independence of MNAR:})}\cite{yao2022temporally} For any $\rvz_t$, there exist $2n+1$ values of $z_{t-1,l}, l=1,\cdots, n$, such that these $2n$ vectors $\mathbf{v}^B_{t,k,l}-\mathbf{v}^B_{t,k,n}$ are linearly independent, where $\mathbf{v}^B_{t,k,l}$ is defined as follows:
    \begin{align}
    \mathbf{v}^B_{t,k,l} = (\frac{\partial^2 \log{p(z_{t,k}|\rvz_{t-1})} }{\partial z_{t,k} \partial z_{t-1,1}}, \nonumber\cdots,\frac{\partial^2 \log{p(z_{t,k}|\rvz_{t-1}})} {\partial z_{t,k} \partial z_{t-1,n}},\frac{\partial^3 \log{p(z_{t,k}|\rvz_{t-1})} }{\partial^2 z_{t,k} \partial z_{t-1,1}}, \nonumber\cdots, \frac{\partial^3 \log{p(z_{t,k}|\rvz_{t-1})} }{\partial^2 z_{t,k} \partial z_{t-1,n}})^T
    \end{align}
    Similarly, for each value of $\boldsymbol{\rvc}_t$, there exist $2n+1$ values of $\rvx_{t-1}$, i.e., $\rvx_{t-1,j}$ with $j=0,2,..2n$, such that these $2n$ vectors $\mathbf{w}^B(\rvc_t,\rvx_{t-1,j})-\mathbf{w}^B(\rvc_t,\rvx_{t-1,0})$ are linearly independent, where the vector $\mathbf{w}^B(\rvc_t,\rvx_{t-1,j})$ is defined as follows:
    \begin{align}
    &\mathbf{w}^B(\rvc_t,\rvx_{t-1,j}) = (\frac{\partial^2 \log{p(c_{t,k}|\rvx_{t-1})} }{\partial^2 c_{t,k} },\nonumber\cdots,\frac{\partial^2 \log{p(c_{t,k}|\rvx_{t-1}})} {\partial^2 c_{t,k} },\frac{\partial \log{p(c_{t,k}|\rvx_{t-1})} }{\partial c_{t,k} },\nonumber\cdots, \frac{\partial \log{p(c_{t,k}|\rvx_{t-1})} }{\partial c_{t,k} })^T
    \end{align}
\end{itemize}
Then, by learning the data generation process, $\mathbf{z}_t$ and $\rvc_t$ are component-wise identifiable. 
\end{theorem}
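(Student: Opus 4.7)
The plan is to mirror the structure of the proof of Theorem 1 almost step for step, adapting only the conditioning sets to reflect the MNAR graph: for the missing causes $\rvc_t$ the conditioning on $\rvx^o_t$ is replaced by conditioning on $\rvx_{t-1}=(\rvx^o_{t-1},\rvx^m_{t-1})$, while the latent-state argument is essentially unchanged because the transition $p(\rvz_t\mid \rvz_{t-1})$ is the same in both missing mechanisms. Concretely, I would start from the matched marginal $p(\hat{\rvx}_t)=p(\rvx_t)$ and use the invertibility of the mixing function $g$ to produce an invertible transformation $h=g^{-1}\circ\hat{g}$ with $\rvz_t=h(\hat{\rvz}_t)$, yielding a nonzero Jacobian determinant $|\mathbf{J}^B_h|$. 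A parallel construction gives $\rvc_t=h_c(\hat{\rvc}_t)$ with invertible $h_c$ and nonzero $|\mathbf{J}^B_{h_c}|$.

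For the identification of $\rvz_t$, I would consider the map $(\rvx_{t-1},\hat{\rvz}_t)\mapsto(\rvx_{t-1},\rvz_t)$, whose Jacobian is block-triangular with determinant $|\mathbf{J}^B_h|$, and deduce
\begin{equation*}
\log p(\hat{\rvz}_t\mid\hat{\rvz}_{t-1}) = \sum_{k=1}^n \log p(z_{t,k}\mid\rvz_{t-1}) + \log|\mathbf{J}^B_h|,
\end{equation*}
using $p(\rvz_t\mid\rvz_{t-1})=p(\rvz_t\mid\rvx_{t-1})$ since $g$ is invertible, together with A1. Taking $\partial^2/\partial\hat{z}_{t,i}\partial\hat{z}_{t,j}$ with $i\ne j$ annihilates the left-hand side by A1, and then differentiating once more with respect to each $z_{t-1,l}$ (which does not affect $h$) produces a system of equations whose coefficients are exactly $\mathbf{v}^B_{t,k,l}$. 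Subtracting the equation at index $n$ from the equations at indices $1,\dots,2n$ and invoking the linear independence in A3 forces $\frac{\partial z_{t,k}}{\partial \hat{z}_{t,i}}\cdot\frac{\partial z_{t,k}}{\partial \hat{z}_{t,j}}=0$ for every $k$ and $i\ne j$; combined with invertibility of $h$, this yields one nonzero entry per row of $\mathbf{J}^B_h$ and hence component-wise identifiability of $\rvz_t$.

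For the identification of $\rvc_t$, the argument is analogous but the auxiliary variable is now $\rvx_{t-1}$ rather than $\rvx^o_t$. Using A1 with the MNAR-appropriate conditioning, I would write
\begin{equation*}
\log p(\hat{\rvc}_t\mid\rvx_{t-1}) = \sum_{k=1}^{n_c}\log p(c_{t,k}\mid\rvx_{t-1}) + \log|\mathbf{J}^B_{h_c}|,
\end{equation*}
and then apply the cross derivative $\partial^2/\partial\hat{c}_{t,i}\partial\hat{c}_{t,j}$ for $i\ne j$. Evaluating this identity at the $2n+1$ values $\rvx_{t-1,0},\dots,\rvx_{t-1,2n}$ guaranteed by A3 and subtracting the equation at $\rvx_{t-1,0}$ from the remaining $2n$ equations yields a linear system whose coefficient vectors are exactly $\mathbf{w}^B(\rvc_t,\rvx_{t-1,j})-\mathbf{w}^B(\rvc_t,\rvx_{t-1,0})$. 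The linear independence stipulated in A3 then forces $\frac{\partial c_{t,k}}{\partial\hat{c}_{t,i}}\cdot\frac{\partial c_{t,k}}{\partial\hat{c}_{t,j}}=0$, so each row of $\mathbf{J}^B_{h_c}$ has a single nonzero entry, and invertibility of $h_c$ completes the component-wise identification.

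The only real subtlety, and what I expect to be the main obstacle, is justifying the step $p(\rvz_t\mid\rvz_{t-1})=p(\rvz_t\mid\rvx_{t-1})$ and the analogous manipulation for $\rvc_t$ in the MNAR setting, since $\rvx_{t-1}$ now contains the latent missing component $\rvx^m_{t-1}$ whose generation involves $\rvc_{t-1}$. I would handle this by appealing to the imputation m-graph in Figure 1(b), which implies $\rvz_t\perp \rvc_{t-1}\mid \rvz_{t-1}$ and renders the conditioning on $\rvx_{t-1}$ equivalent to conditioning on $(\rvz_{t-1},\rvc_{t-1})$ modulo the invertible map $g$, so that the partials with respect to $z_{t-1,l}$ used in A3 act on the transition density exactly as in the MAR case. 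Once this graphical reduction is in place, the remainder of the derivation is a verbatim adaptation of the proof of Theorem 1 with $\mathbf{v}^A,\mathbf{w}^A$ replaced by $\mathbf{v}^B,\mathbf{w}^B$.
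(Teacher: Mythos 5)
Your proposal follows essentially the same route as the paper's Appendix C proof: matched marginals give an invertible $h=g^{-1}\circ\hat g$ (and $h_c$), the block-triangular Jacobian of $(\rvx_{t-1},\hat{\rvz}_t)\mapsto(\rvx_{t-1},\rvz_t)$ yields the log-density identity, cross second derivatives vanish by A1, and differentiating with respect to $z_{t-1,l}$ (respectively evaluating at the $2n+1$ values of $\rvx_{t-1}$ and subtracting) plus the linear independence in A3 forces one nonzero entry per row of each Jacobian. Your extra care in justifying $p(\rvz_t\mid\rvz_{t-1})=p(\rvz_t\mid\rvx_{t-1})$ via the m-graph is a reasonable elaboration of a step the paper simply asserts through the invertibility of $g$, but it does not change the argument.
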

\begin{proof}
We start from the matched marginal distribution to develop the relation between $\textbf{z}_{t}$ and $\hat{\textbf{z}}_{t}$ as follows:    
\begin{align}  
    p(\hat{\mathbf{x}}_{t}) = p(\mathbf{x}_{t})  \Longleftrightarrow p(\hat{g}(\hat{\mathbf{z}}_{t})) = p(g(\mathbf{z}_{t}))
    \Longleftrightarrow 
    \nonumber
    p(g^{-1}\circ \hat{g}(\hat{\mathbf{z}}_{t}))
    = p(\mathbf{z}_{t})|\textbf{J}^B_{g^{-1}}|\Longleftrightarrow p(h(\hat{\mathbf{z}}_{t})) = p(\mathbf{z}_{t}),
\end{align} 
where $\hat{g}^{-1}: \mathcal{X}\rightarrow \mathcal{Z}$ denotes the estimated invertible generation function, and $h:=g^{-1}\circ \hat{g}$ is the transformation between the true latent variables and the estimated one. $|\mathbf{J}^B_{g^{-1}}|$ denotes the absolute value of Jacobian matrix determinant of $g^{-1}$. Note that as both $\hat{g}^{-1}$ and $g$ are invertible, $|\mathbf{J}^B_{g^{-1}}|\neq 0$ and $h$ is invertible. \\The Jacobian matrix of the mapping from $(\rvx_{t-1}, \hat{\rvz}_t)$ to $(\rvx_{t-1}, \rvz_{t})$ is
\[
\begin{bmatrix}
  \mathbf{I} & \mathbf{0} \\
  * & \mathbf{J}^B_h \\
\end{bmatrix},
\]
where $*$ denotes a matrix, and the determinant of this Jacobian matrix is $|\mathbf{J}^B_h|$. Since $\rvx_{t-1}$ do not contain any information of $\hat{\rvz}_{t}$, the right-top element is $\mathbf{0}$.
Therefore $p(\hat{\rvz}_t, \rvx_{t-1})=p(\rvz_t, \rvx_{t-1})\cdot |\mathbf{J}^B_h|$. Dividing both sides of this equation by $p(\rvx_{t-1})$ gives
\begin{equation}
    p(\hat{\rvz}_t|\rvx_{t-1})=p(\rvz_t|\rvx_{t-1})\cdot |\mathbf{J}^B_h|
\end{equation}
Since $p(\rvz_t|\rvz_{t-1})=p(\rvz_t|g(\rvz_{t-1}))=p(\rvz_t|\rvx_{t-1})$ and similarly $p(\hat{\rvz}_{t}|\hat{\rvz}_{t-1})=p(\hat{\rvz}_{t}|\rvx_{t-1})$, we have
\begin{equation}
\label{equ:c_4}
\begin{split}
    \log p(\hat{\rvz}_t|\hat{\rvz}_{t-1})=\log p(\rvz_t|\rvz_{t-1}) + \log |\mathbf{J}^B_h|=\sum_{k=1}^n \log p(z_{t,k}|\rvz_{t-1}) + \log |\mathbf{J}^B_h|
\end{split}
\end{equation}
For $i,j\in \{1, \cdots, n\}$, $i\ne j$,  conditioned on $\textbf{z}_{t-1}$ each $z_{t,i}$ is independent of any other $z_{t,j}$, we have:
\begin{equation}
\label{equ:c_5}
\begin{split}
    \frac{\partial^2 \log p(\hat{\rvz}_t|\hat{\rvz}_{t-1})}{\partial \hat{z}_{t,i} \partial \hat{z}_{t,j}}=0
\end{split}
\end{equation}
Sequentially, the first-order derivative w.r.t. $\hat{z}_{t,i}$ is 
\begin{equation}
\begin{split}
    \frac{\partial \log p(\hat{\rvz}_t|\hat{\rvz}_{t-1})}{\partial \hat{z}_{t,i}}=\sum_{k=1}^n \frac{\partial \log p(z_{t,k}|\rvz_{t-1})}{\partial z_{t,k}}\cdot\frac{\partial z_{t,k}}{\partial \hat{z}_{t,i}} + \frac{\partial \log |\mathbf{J}^B_h|}{\partial \hat{z}_{t,i}}
\end{split}
\end{equation}
Sequentially, we conduct the second-order derivative w.r.t $\hat{z}_{t,j}, \hat{z}_{t,j}$ and have
\begin{equation}
\begin{split}
    \frac{\partial^2 \log p(\hat{\rvz}_t|\hat{\rvz}_{t-1})}{\partial \hat{z}_{t,i} \partial \hat{z}_{t,j}}=\sum_{k=1}^n( \frac{\partial^2 \log p(z_{t,k}|\rvz_{t-1})}{\partial^2 z_{t,k}}\cdot\frac{\partial z_{t,k}}{\partial \hat{z}_{t,i}}\cdot\frac{\partial z_{t,k}}{\partial \hat{z}_{t,j}}+\frac{\partial \log p(z_{t,k}|\rvz_{t-1})}{\partial z_{t,k}}\cdot\frac{\partial^2 z_{t,k}}{\partial \hat{z}_{t,i}\partial \hat{z}_{t,j}}) + \frac{\partial^2 \log |\mathbf{J}^B_h|}{\partial \hat{z}_{t,i}\partial \hat{z}_{t,j}}
\end{split}
\end{equation}
According to Equation \ref{equ:c_5} and the fact that ht does not depend on $z_{t-1,l}$ for each $l=1,\cdots,n$, that is
\begin{equation}
\begin{split}
    \sum_{k=1}^n( \frac{\partial^2 \log p(z_{t,k}|\rvz_{t-1})}{\partial^2 z_{t,k}\partial^ z_{t-1,l}}\cdot\frac{\partial z_{t,k}}{\partial \hat{z}_{t,i}}\cdot\frac{\partial z_{t,k}}{\partial \hat{z}_{t,j}}+\frac{\partial \log p(z_{t,k}|\rvz_{t-1})}{\partial z_{t,k}\partial z_{t-1,l}}\cdot\frac{\partial^2 z_{t,k}}{\partial \hat{z}_{t,i}\partial \hat{z}_{t,j}})=0
\end{split}
\end{equation}
For each value of $\rvz_t$, $\mathbf{v}^B_{t,k,l}-\mathbf{v}^B_{t,k,n}$ are linearly independent, so $\frac{\partial z_{t,k}}{\partial \hat{z}_{t,i}}\cdot\frac{\partial z_{t,k}}{\partial \hat{z}_{t,j}}=0$. That is, in each row of $\mathbf{J}^B_h$ there is only one non-zero entry. Since h is invertible, then $\rvz_t$ is component-wise identifiable.\\
Similarly, we have $p(h_c(\hat{\rvc}_{t})) = p(\rvc_{t})$, then 
\begin{equation}
\label{equ:c_6}
\begin{split}
    \log p(\hat{\rvc}_t|\rvx_{t-1})=\log p(\rvc_t|\rvx_{t-1}) + \log |\mathbf{J}^B_{h_c}|=\sum_{k=1}^n \log p(c_{t,k}|\rvx_{t-1}) + \log |\mathbf{J}^B_{h_c}|
\end{split}
\end{equation}
\begin{table*}[]
\renewcommand{\arraystretch}{0.65}
\centering
\caption{DMM-MAR Architecture details. BS: batch size, T: length of time series, $|\rvx_t|$: the dimension of $\rvx_t$.}
\label{tab:table 1}
\begin{tabular}{c|c|c}
\hline
Configuration                      & Description             & Output                           \\ \hline
$\phi^A$                           & Latent Variable Encoder &                                  \\ \hline
Input:$\rvx^o_{1:T}$               & Obsetved time series    & Batch Size$\times T\times |x_t|$ \\
Dense                              & Conv1d                  & Batch Size$\times m\times |x_t|$ \\
Dense                              & Conv1d                  & Batch Size$\times T\times |x_t|$ \\
Dense                              & d neurons               & Batch Size$\times T\times d$     \\ \hline
$F^A$                              & Decoder                 &                                  \\ \hline
Input:$\rvz_{1:T},\rvc_{1:T}$      & Latent Variable         & Batch Size$\times T\times d$     \\
Dense                              & $|x_{t}|$ neurons       & Batch Size$\times T\times |x_t|$ \\ \hline
r                                  & Modular Prior Networks  &                                  \\ \hline
Input:$\rvz_{1:T}$ or $\rvc_{1:T}$ & Latent Variable         & Batch Size$\times (n+1)$         \\
Dense                              & 128 neurons,LeakyReLU   & $(n+1)\times $128                \\
Dense                              & 128 neurons,LeakyReLU   & 128$\times $128                  \\
Dense                              & 128 neurons,LeakyReLU   & 128$\times $128                  \\
Dense                              & 1 neuron                & Batch Size$\times$1              \\
JacobianCompute                    & Compute log(det(J))     & Batch Size                       \\ \hline
\end{tabular}
\end{table*}
For $i,j\in \{1, \cdots, n\}$, $i\ne j$,  conditioned on $\rvx_{t-1}$ each $c_{t,i}$ is independent of any other $c_{t,j}$, we have:
\begin{equation}
\begin{split}
    \frac{\partial^2 \log p(\hat{\rvc}_t|\rvx_{t-1})}{\partial \hat{c}_{t,i} \partial \hat{c}_{t,j}}=0
\end{split}
\end{equation}
Sequentially, the first-order derivative w.r.t. $\hat{c}_{t,i}$ is 
\begin{equation}
\begin{split}
    \frac{\partial \log p(\hat{\rvc}_t|\rvx_{t-1})}{\partial \hat{c}_{t,i}}=\sum_{k=1}^n \frac{\partial \log p(c_{t,k}|\rvx_{t-1})}{\partial c_{t,k}}\cdot\frac{\partial c_{t,k}}{\partial \hat{c}_{t,i}} + \frac{\partial \log |\mathbf{J}^B_h|}{\partial \hat{c}_{t,i}}
\end{split}
\end{equation}
Sequentially, we conduct the second-order derivative w.r.t $\hat{c}_{t,j}, \hat{c}_{t,j}$ and have
\begin{equation}
\begin{split}
    \frac{\partial^2 \log p(\hat{\rvc}_t|\rvx_{t-1})}{\partial \hat{c}_{t,i} \partial \hat{c}_{t,j}}=\sum_{k=1}^n( \frac{\partial^2 \log p(c_{t,k}|\rvx_{t-1})}{\partial^2 c_{t,k}}\cdot\frac{\partial c_{t,k}}{\partial \hat{c}_{t,i}}\cdot\frac{\partial c_{t,k}}{\partial \hat{c}_{t,j}}+\frac{\partial \log p(c_{t,k}|\rvx_{t-1})}{\partial c_{t,k}}\cdot\frac{\partial^2 c_{t,k}}{\partial \hat{c}_{t,i}\partial \hat{c}_{t,j}}) + \frac{\partial^2 \log |\mathbf{J}^B_{h_c}|}{\partial \hat{c}_{t,i}\partial \hat{c}_{t,j}}
\end{split}
\end{equation}
Therefore, there exist $2n+1$ values of $\rvx_{t-1}$, i.e., $\rvx_{t-1,l}$ with $j=0,2,..2n$, we have $2n+1$ such equations. Subtracting each equation corresponding to $\rvx_{t-1,1}\cdots \rvx_{t-1,2n}$ with the equation corresponding to $\rvx_{t-1,0}$ results in $2n$ equations:
\begin{equation}
\begin{split}
    \sum_{k=1}^n((\frac{\partial^2 \log p(c_{t,k}|\rvx_{t-1,l})}{\partial^2 c_{t,k}}-\frac{\partial^2 \log p(c_{t,k}|\rvx_{t-1,0})}{\partial^2 c_{t,k}})\cdot\frac{\partial c_{t,k}}{\partial \hat{c}_{t,i}}\cdot\frac{\partial c_{t,k}}{\partial \hat{c}_{t,j}}\\+(\frac{\partial \log p(c_{t,k}|\rvx_{t-1,l})}{\partial c_{t,k}}-\frac{\partial \log p(c_{t,k}|\rvx_{t-1,0})}{\partial c_{t,k}})\cdot\frac{\partial^2 c_{t,k}}{\partial \hat{c}_{t,i}\partial \hat{c}_{t,j}})=0
\end{split}
\end{equation}
under the linear independence condition in Assumption A3, we have $\frac{\partial c_{t,k}}{\partial \hat{c}_{t,i}}\cdot\frac{\partial c_{t,k}}{\partial \hat{c}_{t,j}}=0$. That is, in each row of $\mathbf{J}^B_{h_c}$ there is only one non-zero entry. Since $h_c$ is invertible, then $\rvc_t$ is component-wise identifiable.

\end{proof}

\section{More Details on Missing mechanism}
\label{More Details on Missing mechanism}

Since the missing mechanism in the imputation m-graph differs from the definition of standard m-graph \cite{mohan2013graphical}, we give more detail illustration for them.

\subsection{Missing At Random (MAR) in Imputation M-graphs}

To better understand the data generation process of missing at random, we suppose that $\rvx_t$ denotes the measurable index like blood pressure and blood glucose. And $\rvz_t$ denotes the virus concentration. A doctor may find that the blood pressure is within the normal range and does not record the values of blood glucose, resulting in the missingness. In this case, the missing cause variables denote the doctor's willingness to conduct further examinations.

\subsection{Missing Not At Random (MNAR)}

We also provide a medical example. When the examination indicators of a patient exceed the normal range, he may choose to discontinue treatment at this point, resulting in missing subsequent data. In this case, the missing cause denotes the patient's willingness to undergo further treatment. 

\subsection{Missing Completely At Random (MCAR)}

When patients are unable to go to the hospital for testing in a timely manner due to work or personal reasons, resulting in missing indicators. This type of missing is not dependent on data, but is caused by completely random factors. In this case, the missing cause denotes that the unexpected situation did not occur and the patient went to the hospital for examination on time.

\section{Explanation of Assumptions}
\label{Explanation of Assumptions}
\subsection{Discussion of the Identification Results}
We would like to highlight that the theoretical results provide sufficient conditions for the identification of our model. That implies: 1) our model can be correctly identified when all the assumptions hold. 2) at the same time, even if some of the above assumptions do not hold, our method may still learn the correct model. From an application perspective, these assumptions rigorously defined a subset of applicable scenarios of our model. Thus, we provide detailed explanations of the assumptions, how they relate to real-world scenarios, and in which scenarios they are satisfied. 

\subsection{Smooth, Positive and Conditional independent Density.} 
This assumption is common in the existing identification results \cite{yao2022temporally,yao2021learning}. In real-world scenarios, smooth and positive density implies continuous changes in historical information, such as temperature variations in weather data. To achieve this, we should sample as much data as possible to learn the transition probabilities more accurately. Moreover, The conditional independent assumption is also common in identifying temporal latent processes  \cite{li2024subspace}. Intuitively, it means there are no immediate relations among latent variables. To satisfy this assumption, we can sample data at high frequency to avoid instantaneous dependencies caused by subsampling. 
\subsection{Linear Independence.}
The second assumption is also common in \cite{kong2022partial,yao2022temporally}, meaning that the influence from each latent source to observation is independent. The linear independence assumption is standard in the identification of nonlinear ICA \cite{allman2009identifiability}. It implies that linear independence is necessary for a unique solution to a system of equations. Though this assumption is untestable, we may investigate whether it is satisfied through the prior knowledge of the applications.  

\section{Implementation Details}
\label{Implementation Details}

\begin{table*}[]
\renewcommand{\arraystretch}{0.65}
\centering
\caption{DMM-MNAR Architecture details. BS: batch size, $\tau$: Segment the T-length time series data into segments, with each segment being $\tau$ in length, $|\rvx_t|$: the dimension of $\rvx_t$.}
\label{tab:table 2}
\begin{tabular}{c|c|c}
\hline
Configuration                                   & Description             & Output                               \\ \hline
$\phi^B_z$                                      & Latent Variable Encoder &                                      \\ \hline
Input:$\rvx^o_{t:t+\tau}$                       & Obsetved time series    & Batch Size$\times \tau \times |x_t|$ \\
Dense                                           & Conv1d                  & Batch Size$\times m\times |x_t|$     \\
Dense                                           & Conv1d                  & Batch Size$\times \tau\times |x_t|$  \\
Dense                                           & d neurons               & Batch Size$\times \tau \times d$     \\ \hline
$\phi^B_c$                                      & Latent Variable Encoder &                                      \\ \hline
Input:$\rvx^o_{t-\tau:t-1},\rvx^m_{t-\tau:t-1}$ & Time series             & Batch Size$\times \tau \times |x_t|$ \\
Dense                                           & Conv1d                  & Batch Size$\times m\times |x_t|$     \\
Dense                                           & Conv1d                  & Batch Size$\times \tau \times |x_t|$ \\
Dense                                           & d neurons               & Batch Size$\times \tau \times d$     \\ \hline
$F^B$                                           & Decoder                 &                                      \\ \hline
Input:$\rvz_{t:t+\tau},\rvc_{t:t+\tau}$         & Latent Variable         & Batch Size$\times \tau \times d$     \\
Dense                                           & $|x_{t}|$ neurons       & Batch Size$\times \tau \times |x_t|$ \\ \hline
r                                               & Modular Prior Networks  &                                      \\ \hline
Input:$\rvz_{1:T}$ or $\rvc_{1:T}$              & Latent Variable         & Batch Size$\times (n+1)$             \\
Dense                                           & 128 neurons,LeakyReLU   & $(n+1)\times $128                    \\
Dense                                           & 128 neurons,LeakyReLU   & 128$\times $128                      \\
Dense                                           & 128 neurons,LeakyReLU   & 128$\times $128                      \\
Dense                                           & 1 neuron                & Batch Size$\times$1                  \\
JacobianCompute                                 & Compute log(det(J))     & Batch Size                           \\ \hline
\end{tabular}
\end{table*}
\subsection{Prior Likelihood Derivation} 
In this section, for the data generation process of MAR, we derive the prior of $p(\hat{\rvz}_{1:t})$ and $p(\hat{\rvc}_{1:t})$ as follows:\\
We first consider the prior of $\ln p(\rvz_{1:t})$. We start with an illustrative example of stationary latent causal processes with two time-delay latent variables, i.e. $\rvz_t=[z_{t,1}, z_{t,2}]$ with maximum time lag $L=1$, i.e., $z_{t,i}=f^A_i(\rvz_{t-1}, \epsilon^z_{t,i})$ with mutually independent noises. Then we write this latent process as a transformation map $\mathbf{f^A}$ (note that we overload the notation $f$ for transition functions and for the transformation map):
\begin{equation}
\begin{gathered}\nonumber
    \begin{bmatrix}
    \begin{array}{c}
        z_{t-1,1} \\ 
        z_{t-1,2} \\
        z_{t,1}   \\
        z_{t,2}
    \end{array}
    \end{bmatrix}=\mathbf{f^A}\left(
    \begin{bmatrix}
    \begin{array}{c}
        z_{t-1,1} \\ 
        z_{t-1,2} \\
        \epsilon^z_{t,1}   \\
        \epsilon^z_{t,2}
    \end{array}
    \end{bmatrix}\right).
\end{gathered}
\end{equation}
By applying the change of variables formula to the map $\mathbf{f^A}$, we can evaluate the joint distribution of the latent variables $p(z_{t-1,1},z_{t-1,2},z_{t,1}, z_{t,2})$ as 
\begin{equation}
\label{equ:p1}
    p(z_{t-1,1},z_{t-1,2},z_{t,1}, z_{t,2})=\frac{p(z_{t-1,1}, z_{t-1,2}, \epsilon^z_{t,1}, \epsilon^z_{t,2})}{|\text{det }\mathbf{J}_{\mathbf{f^A}}|},
\end{equation}
where $\mathbf{J}_{\mathbf{f^A}}$ is the Jacobian matrix of the map $\mathbf{f^A}$, which is naturally a low-triangular matrix:
\begin{equation}
\begin{gathered}\nonumber
    \mathbf{J}_{\mathbf{f^A}}=\begin{bmatrix}
    \begin{array}{cccc}
        1 & 0 & 0 & 0 \\
        0 & 1 & 0 & 0 \\
        \frac{\partial z_{t,1}}{\partial z_{t-1,1}} & \frac{\partial z_{t,1}}{\partial z_{t-1,2}} & 
        \frac{\partial z_{t,1}}{\partial \epsilon^z_{t,1}} & 0 \\
        \frac{\partial z_{t,2}}{\partial z_{t-1, 1}} &\frac{\partial z_{t,2}}{\partial z_{t-1,2}} & 0 & \frac{\partial z_{t,2}}{\partial \epsilon^z_{t,2}}
    \end{array}
    \end{bmatrix}.
\end{gathered}
\end{equation}
Given that this Jacobian is triangular, we can efficiently compute its determinant as $\prod_i \frac{\partial z_{t,i}}{\epsilon^z_{t,i}}$. Furthermore, because the noise terms are mutually independent, and hence $\epsilon^z_{t,i} \perp \epsilon^z_{t,j}$ for $j\neq i$ and $\epsilon^z_{t} \perp \rvz_{t-1}$, so we can with the RHS of Equation (\ref{equ:p1}) as follows
\begin{equation}
\label{equ:p2}
\begin{split}
    p(z_{t-1,1}, z_{t-1,2}, z_{t,1}, z_{t,2})=p(z_{t-1,1}, z_{t-1,2}) \times \frac{p(\epsilon^z_{t,1}, \epsilon^z_{t,2})}{|\mathbf{J}_{\mathbf{f^A}}|}=p(z_{t-1,1}, z_{t-1,2}) \times \frac{\prod_i p(\epsilon^z_{t,i})}{|\mathbf{J}_{\mathbf{f^A}}|}.
\end{split}
\end{equation}
Finally, we generalize this example and derive the prior likelihood below. Let $\{r^A_i\}_{i=1,2,3,\cdots}$ be a set of learned inverse transition functions that take the estimated latent causal variables, and output the noise terms, i.e., $\hat{\epsilon}^z_{t,i}=r^A_i(\hat{z}_{t,i}, \{ \hat{\rvz}_{t-\tau}\})$. Then we design a transformation $\mathbf{A}\rightarrow \mathbf{B}$ with low-triangular Jacobian as follows:
\begin{equation}
\begin{gathered}
    \underbrace{[\hat{\rvz}_{t-L},\cdots,{\hat{\rvz}}_{t-1},{\hat{\rvz}}_{t}]^{\top}}_{\mathbf{A}} \text{  mapped to  } \underbrace{[{\hat{\rvz}}_{t-L},\cdots,{\hat{\rvz}}_{t-1},{\hat{\epsilon}^z}_{t,i}]^{\top}}_{\mathbf{B}}, \text{ with } \mathbf{J}_{\mathbf{A}\rightarrow\mathbf{B}}=
    \begin{bmatrix}
    \begin{array}{cc}
        \mathbb{I}_{n\times L} & 0\\
                    * & \text{diag}\left(\frac{\partial r^A_{i,j}}{\partial {\hat{z}}_{t,j}}\right)
    \end{array}
    \end{bmatrix}.
\end{gathered}
\end{equation}
Similar to Equation (\ref{equ:p2}), we can obtain the joint distribution of the estimated dynamics subspace as:
\begin{equation}
\begin{split}
    \ln p(\mathbf{A})=\underbrace{\ln p({\hat{\rvz}}_{t-L},\cdots, {\hat{\rvz}}_{t-1}) + \sum^{n}_{i=1}\ln p({\hat{\epsilon}^z}_{t,i})}_{\text{Because of mutually independent noise assumption}}+\ln (|\text{det}(\mathbf{J}_{\mathbf{A}\rightarrow\mathbf{B}})|)
\end{split}
\end{equation}
Finally, we have:
\begin{equation}
    \ln p({\hat{\rvz}}_t|\{{\hat{\rvz}}_{t-\tau}\}_{\tau=1}^L)=\sum_{i=1}^{n}\ln p({\hat{\epsilon}^z_{t,i}}) + \sum_{i=1}^{n}\ln |\frac{\partial r^A_i}{\partial {\hat{z}}_{t,i}}|
\end{equation} 
Since the prior of $p(\hat{\rvz}_{t+1:T}|\hat{\rvz}_{1:t})=\prod_{i=t+1}^{T} p(\hat{\rvz}_{i}|\hat{\rvz}_{i-1})$ with the assumption of first-order Markov assumption, we can estimate $p(\hat{\rvz}_{t+1:T}|\hat{\rvz}_{1:t})$ in a similar way.
We then consider the prior of $\ln p(\hat{\rvc}_{1:t})$. Similar to the derivation of $\ln p(\hat{\rvz}_{1:t})$, we let $\{c_i\}_{i=1,2,3,\cdots}$ be a set of learned inverse transition functions that take the estimated latent variables as input and output the noise terms, i.e. $\hat{\epsilon}_t^c=s_i^A(x^o_t,\hat{c}_{t,i})$. Similarly, we design a transformation $\textbf{A}\rightarrow\textbf{B}$ with low-triangular Jacobian as follows:
\begin{equation}
\begin{split}
    \underbrace{[\rvx^o_t, \hat{\rvc}_t]^{\top}}_{\textbf{A}} \quad\quad \text{mapped to} \quad\quad \underbrace{[\rvx^o_t, \hat{\epsilon}_t^c]^{\top}}_{\textbf{B}},  \text{with }\quad \mathbf{J}_{\textbf{A}\rightarrow\textbf{B}}=
    \begin{bmatrix}
    \begin{array}{cc}
        \mathbb{I} & 0\\
                    * & \text{diag}\left(\frac{\partial s^A_{i}}{\partial {\hat{\rvc}}_{t,i}}\right)
    \end{array}
    \end{bmatrix}.
\end{split}
\end{equation}
Since the noise $\hat{\epsilon}_t^c$ is independent of $\rvx^o_t$ we have 
\begin{equation}
\ln p(\hat{\rvc}_t|\rvx^o_t)=\ln p(\hat{\epsilon}^c_t) + \sum_{i=1}^{n}\ln |\frac{\partial s_i^A}{\partial \hat{c}_{t,i}}|.
\label{equ:c_xo}
\end{equation}
Therefore, the missing cause prior can be estimated by maximizing the following equation, obtained by summing Equation (\ref{equ:c_xo}) across time steps from 1 to t.
\begin{equation}
\begin{split}
&\ln p(\hat{\rvc}_{1:t}|x^o_{1:t})=
\sum_{\tau=1}^t\left(\sum_{i=1}^{n}\ln p(\hat{\epsilon}^c_{\tau,i}) + \sum_{i=1}^{n}\ln |\frac{\partial s_i^A}{\partial \hat{c}_{\tau,i}}|\right).
\end{split}
\end{equation}
Similarly, for the data generation process of MNAR, we derive the prior of $p(\hat{\rvz}_{1:t})$ and $p(\hat{\rvc}_{1:t})$ as follows:\\
We first consider the prior of $\ln p(\rvz_{1:t})$. We start with an illustrative example of stationary latent causal processes with two time-delay latent variables, i.e. $\rvz_t=[z_{t,1}, z_{t,2}]$ with maximum time lag $L=1$, i.e., $z_{t,i}=f^B_i(\rvz_{t-1}, \varepsilon^z_{t,i})$ with mutually independent noises. Then we write this latent process as a transformation map $\mathbf{f^B}$ (note that we overload the notation $f$ for transition functions and for the transformation map):
\begin{equation}
\begin{gathered}\nonumber
    \begin{bmatrix}
    \begin{array}{c}
        z_{t-1,1} \\ 
        z_{t-1,2} \\
        z_{t,1}   \\
        z_{t,2}
    \end{array}
    \end{bmatrix}=\mathbf{f^B}\left(
    \begin{bmatrix}
    \begin{array}{c}
        z_{t-1,1} \\ 
        z_{t-1,2} \\
        \epsilon^z_{t,1}   \\
        \epsilon^z_{t,2}
    \end{array}
    \end{bmatrix}\right).
\end{gathered}
\end{equation}

By applying the change of variables formula to the map $\mathbf{f^B}$, we can evaluate the joint distribution of the latent variables $p(z_{t-1,1},z_{t-1,2},z_{t,1}, z_{t,2})$ as: 
\begin{equation}
\label{equ:p3}
    p(z_{t-1,1},z_{t-1,2},z_{t,1}, z_{t,2})=\frac{p(z_{t-1,1}, z_{t-1,2}, \varepsilon^z_{t,1}, \varepsilon_{t,2})}{|\text{det }\mathbf{J}_{\mathbf{f^B}}|},
\end{equation}
where $\mathbf{J}_{\mathbf{f^B}}$ is the Jacobian matrix of the map $\mathbf{f^B}$, which is naturally a low-triangular matrix:
\begin{equation}
\begin{gathered}\nonumber
    \mathbf{J}_{\mathbf{f^B}}=\begin{bmatrix}
    \begin{array}{cccc}
        1 & 0 & 0 & 0 \\
        0 & 1 & 0 & 0 \\
        \frac{\partial z_{t,1}}{\partial z_{t-1,1}} & \frac{\partial z_{t,1}}{\partial z_{t-1,2}} & 
        \frac{\partial z_{t,1}}{\partial \varepsilon^z_{t,1}} & 0 \\
        \frac{\partial z_{t,2}}{\partial z_{t-1, 1}} &\frac{\partial z_{t,2}}{\partial z_{t-1,2}} & 0 & \frac{\partial z_{t,2}}{\partial \varepsilon^z_{t,2}}
    \end{array}
    \end{bmatrix}.
\end{gathered}
\end{equation}
Given that this Jacobian is triangular, we can efficiently compute its determinant as $\prod_i \frac{\partial z_{t,i}}{\varepsilon^z_{t,i}}$. Furthermore, because the noise terms are mutually independent, and hence $\varepsilon^z_{t,i} \perp \varepsilon^z_{t,j}$ for $j\neq i$ and $\varepsilon^z_{t} \perp \rvz_{t-1}$, so we can with the RHS of Equation (\ref{equ:p1}) as follows
\begin{equation}
\label{equ:p4}
\begin{split}
    p(z_{t-1,1}, z_{t-1,2}, z_{t,1}, z_{t,2})=p(z_{t-1,1}, z_{t-1,2}) \times \frac{p(\varepsilon^z_{t,1}, \varepsilon^z_{t,2})}{|\mathbf{J}_{\mathbf{f^B}}|}=p(z_{t-1,1}, z_{t-1,2}) \times \frac{\prod_i p(\varepsilon^z_{t,i})}{|\mathbf{J}_{\mathbf{f^B}}|}.
\end{split}
\end{equation}
Finally, we generalize this example and derive the prior likelihood below. Let $\{r^B_i\}_{i=1,2,3,\cdots}$ be a set of learned inverse transition functions that take the estimated latent causal variables, and output the noise terms, i.e., $\hat{\varepsilon}^z_{t,i}=r^B_i(\hat{z}_{t,i}, \{ \hat{\rvz}_{t-\tau}\})$. Then we design a transformation $\mathbf{A}\rightarrow \mathbf{B}$ with low-triangular Jacobian as follows:
\begin{equation}
\begin{gathered}
    \underbrace{[\hat{\rvz}_{t-L},\cdots,{\hat{\rvz}}_{t-1},{\hat{\rvz}}_{t}]^{\top}}_{\mathbf{A}} \text{  mapped to  } \underbrace{[{\hat{\rvz}}_{t-L},\cdots,{\hat{\rvz}}_{t-1},{\hat{\varepsilon}^z}_{t,i}]^{\top}}_{\mathbf{B}}, \text{ with } \mathbf{J}_{\mathbf{A}\rightarrow\mathbf{B}}=
    \begin{bmatrix}
    \begin{array}{cc}
        \mathbb{I}_{n\times L} & 0\\
                    * & \text{diag}\left(\frac{\partial r^B_{i,j}}{\partial {\hat{z}}_{t,j}}\right)
    \end{array}
    \end{bmatrix}.
\end{gathered}
\end{equation}
Similar to Equation (\ref{equ:p2}), we can obtain the joint distribution of the estimated dynamics subspace as:
\begin{equation}
\begin{split}
    \ln p(\mathbf{A})=\underbrace{\ln p({\hat{\rvz}}_{t-L},\cdots, {\hat{\rvz}}_{t-1}) + \sum^{n}_{i=1}\ln p({\hat{\varepsilon}^z}_{t,i})}_{\text{Because of mutually independent noise assumption}}+\ln (|\text{det}(\mathbf{J}_{\mathbf{A}\rightarrow\mathbf{B}})|)
\end{split}
\end{equation}
Finally, we have:
\begin{equation}
    \ln p({\hat{\rvz}}_t|\{{\hat{\rvz}}_{t-\tau}\}_{\tau=1}^L)=\sum_{i=1}^{n}\ln p({\hat{\varepsilon}^z_{t,i}}) + \sum_{i=1}^{n}\ln |\frac{\partial r^B_i}{\partial {\hat{z}}_{t,i}}|
\end{equation} 
Since the prior of $p(\hat{\rvz}_{t+1:T}|\hat{\rvz}_{1:t})=\prod_{i=t+1}^{T} p(\hat{\rvz}_{i}|\hat{\rvz}_{i-1})$ with the assumption of first-order Markov assumption, we can estimate $p(\hat{\rvz}_{t+1:T}|\hat{\rvz}_{1:t})$ in a similar way.
We then consider the prior of $\ln p(\hat{\rvc}_{1:t})$. Similar to the derivation of $\ln p(\hat{\rvz}_{1:t})$, we let $\{c_i\}_{i=1,2,3,\cdots}$ be a set of learned inverse transition functions that take the estimated latent variables as input and output the noise terms, i.e. $\hat{\varepsilon}_t^c=s_i^B(\rvx^o_{t-1},\hat{\rvx}^m_{t-1},\hat{c}_{t,i})$. Similarly, we design a transformation $\textbf{A}\rightarrow\textbf{B}$ with low-triangular Jacobian as follows:
\begin{equation}
\begin{split}
    \underbrace{[\rvx^o_{t-1},\hat{\rvx}^m_{t-1}, \hat{\rvc}_t]^{\top}}_{\textbf{A}} \quad\quad \text{mapped to} \quad\quad \underbrace{[\rvx^o_{t-1},\hat{\rvx}^m_{t-1}, \hat{\varepsilon}_t^c]^{\top}}_{\textbf{B}}, \text{with }\quad \mathbf{J}_{\textbf{A}\rightarrow\textbf{B}}=
    \begin{bmatrix}
    \begin{array}{cc}
        \mathbb{I} & 0\\
                    * & \text{diag}\left(\frac{\partial s^B_{i}}{\partial {\hat{\rvc}}_{t,i}}\right)
    \end{array}
    \end{bmatrix}.
\end{split}
\end{equation}
Since the noise $\hat{\varepsilon}_t^c$ is independent of $\rvx^o_t$ we have 
\begin{equation}
\ln p(\hat{\rvc}_t|x^o_{t-1}, \hat{x}^m_{t-1})=\ln p(\hat{\varepsilon}^c_t) + \sum_{i=1}^{n_c}\ln |\frac{\partial s_i^B}{\partial \hat{c}_{t,i}}|.
\label{equ:c_xt}
\end{equation}

Therefore, the missing cause prior can be estimated by maximizing the following equation, obtained by summing Equation (\ref{equ:c_xt}) across time steps from 1 to t.

\begin{equation}
\begin{split}
\ln p(\hat{\rvc}_{1:t}|x^o_{1:t-1}, \hat{x}^m_{1:t-1})=
\ln p(\hat{\rvc}_1)+\sum_{\tau=2}^t\left(\sum_{i=1}^{n_c}\ln p(\hat{\varepsilon}^c_{\tau,i}) + \sum_{i=1}^{n_c}\ln |\frac{\partial s_i^B}{\partial \hat{c}_{\tau,i}}|\right).
\end{split}
\end{equation}
\subsection{Evident Lower Bound}\label{app:elbo}
In this subsection, we show the evident lower bound. For the data generation process of MAR, we have: 
\begin{equation}
\begin{split}
\ln p(x^m_{1:T})&=\ln \frac{p(x^m_{1:T},\rvz_{1:T},\rvc_{1:T})q(\rvz_{1:T},\rvc_{1:T}|x^o_{1:T})}{p(\rvz_{1:T},\rvc_{1:T}|x^m_{1:T})q(\rvz_{1:T},\rvc_{1:T}|x^o_{1:T})}\ge \ln \frac{p(x^m_{1:T},\rvz_{1:T},\rvc_{1:T})}{q(\rvz_{1:T},\rvc_{1:T}|x^o_{1:T})}=\ln \frac{p(x^m_{1:T}|\rvz_{1:T},\rvc_{1:T})p(\rvc_{1:T}|\rvz_{1:T})p(\rvz_{1:T})}{q(\rvz_{1:T}|x^o_{1:T})q(\rvc_{1:T}|x^o_{1:T})}\\ &=\underbrace{\mathbb{E}_{q(\rvz_{1:T},\rvc_{1:T}|\rvx^o_{1:T})}\ln p(\rvx^m_{1:T}|\rvz_{1:T},\boldsymbol{\rvc}_{1:T})}_{\mathcal{L}_R}- \underbrace{D_{KL}(q(\rvz_{1:T}|\rvx^o_{1:T})||p(\rvz_{1:T}))}_{\mathcal{L}^z_K}- \underbrace{D_{KL}(q(\boldsymbol{\rvc}_{1:T}|\rvx^o_{1:T})||p(\rvc_{1:T}|\rvz_{1:T}))}_{\mathcal{L}^c_K},
\end{split}
\end{equation}
Similar to the DMM-MAR model, we employ the variational inference to model the data generation process of the MNAR mechanism, and the ELBO is
\begin{equation}
\begin{split}
\ln p(x^m_{1:T})&=\ln \frac{p(x^m_{1:T},\rvz_{1:T},\rvc_{1:T})}{p(\rvz_{1:T},\rvc_{1:T}|x^m_{1:T})}\ge \ln \frac{p(x^m_{1:T},\rvz_{1:T},\rvc_{1:T})}{q(\rvz_{1:T}|x^o_{1:T})q(\rvc_{1:T}|x_{1:T-1})}=\ln \frac{p(x^m_{1:T}|\rvz_{1:T},\rvc_{1:T})p(\rvc_{1:T}|\rvz_{1:T})p(\rvz_{1:T})}{q(\rvz_{1:T}|x^o_{1:T})q(\rvc_{1:T}|x_{1:T-1})}\\&=\underbrace{\mathbb{E}_{q(\rvz_{1:T},\rvc_{1:T}|\rvx_{1:T})}\ln p(\rvx^m_{1:T}|\rvz_{1:T},\boldsymbol{\rvc}_{1:T})}_{\mathcal{L}_R}-\underbrace{D_{KL}(q(\rvz_{1:T}|\rvx^o_{1:T})||p(\rvz_{1:T}))}_{\mathcal{L}^z_K}-\underbrace{D_{KL}(q(\boldsymbol{\rvc}_{1:T}|\rvx_{1:T-1})||p(\boldsymbol{\rvc}_{1:T}|\rvz_{1:T}))}_{\mathcal{L}^c_K},
\end{split}
\end{equation}

\subsection{Model Details}
We choose CNN as the encoder backbone of our model on real-world datasets. Specifically, given the CNN extract the hidden feature, we apply a variational inference block and then a MLP-based decoder. Architecture details of the proposed method are shown in Table \ref{tab:table 1} and Table \ref{tab:table 2}. 

\section{Limitation}
One limitation of our method is the invertible mixing process. In a few real-world applications, certain violations of the assumption can be discernible to a certain degree, such as occlusion and visual persistence in videos. For example, when an object is blocked by another, it is improbable to completely recover the corresponding latent variable from the observation. Moreover, the blur resulting from high - speed moving objects can be regarded as a mixing process that combines latent variables from different time steps, thus further causing non-invertibility. By circumventing such cases, the assumption can be ensured to some extent. Even if the assumption fails to hold, there remain potential approaches for identifying latent causal structures. For example, recent work \cite{chen2024caring} indicates that contextual information can be harnessed to regain the information lost as a result of the non-invertible mixing process. Since this method is orthogonal to the contributions of this paper, we did not incorporate this method into our approach.

\newpage

\begin{table}[]
\resizebox{\columnwidth}{!}{%
\begin{tabular}{c|c|cccc|cccccccccccccccc}
\hline
\multirow{2}{*}{Dataset}  & \multirow{2}{*}{Ratio} & \multicolumn{2}{c}{DMM\_MAR} & \multicolumn{2}{c|}{DMM\_MNAR} & \multicolumn{2}{c}{TimeCIB} & \multicolumn{2}{c}{ImputeFormer} & \multicolumn{2}{c}{TimesNet} & \multicolumn{2}{c}{SAITS} & \multicolumn{2}{c}{GPVAE} & \multicolumn{2}{c}{CSDI} & \multicolumn{2}{c}{BRITS} & \multicolumn{2}{c}{SSGAN} \\
                          &                        & MSE           & MAE          & MSE            & MAE           & MSE          & MAE          & MSE             & MAE            & MSE           & MAE          & MSE         & MAE         & MSE         & MAE         & MSE         & MAE        & MSE         & MAE         & MSE         & MAE         \\ \hline
\multirow{3}{*}{ETTh1}    & 0.2                    & 0.003         & 0.002        & 0.012          & 0.013         & 0.012        & 0.005        & 0.015           & 0.008          & 0.034         & 0.014        & 0.023       & 0.022       & 0.025       & 0.013       & 0.017       & 0.003      & 0.007       & 0.004       & 0.001       & 0.001       \\
                          & 0.4                    & 0.019         & 0.014        & 0.036          & 0.012         & 0.004        & 0.001        & 0.011           & 0.014          & 0.028         & 0.011        & 0.033       & 0.014       & 0.023       & 0.014       & 0.003       & 0.001      & 0.008       & 0.005       & 0.012       & 0.009       \\
                          & 0.6                    & 0.002         & 0.009        & 0.010          & 0.006         & 0.008        & 0.009        & 0.016           & 0.016          & 0.036         & 0.018        & 0.017       & 0.007       & 0.028       & 0.013       & 0.005       & 0.001      & 0.008       & 0.011       & 0.011       & 0.008       \\ \hline
\multirow{3}{*}{ETTh2}    & 0.2                    & 0.004         & 0.001        & 0.009          & 0.004         & 0.136        & 0.064        & 0.035           & 0.022          & 0.120         & 0.016        & 0.042       & 0.024       & 0.030       & 0.005       & 0.494       & 0.092      & 0.018       & 0.016       & 0.058       & 0.042       \\
                          & 0.4                    & 0.005         & 0.003        & 0.008          & 0.006         & 0.032        & 0.019        & 0.022           & 0.010          & 0.260         & 0.049        & 0.023       & 0.013       & 0.033       & 0.008       & 0.031       & 0.013      & 0.039       & 0.021       & 0.010       & 0.013       \\
                          & 0.6                    & 0.006         & 0.001        & 0.005          & 0.001         & 0.107        & 0.053        & 0.056           & 0.017          & 0.168         & 0.042        & 0.022       & 0.006       & 0.059       & 0.006       & 0.079       & 0.030      & 0.093       & 0.059       & 0.062       & 0.037       \\ \hline
\multirow{3}{*}{ETTm1}    & 0.2                    & 0.008         & 0.003        & 0.017          & 0.006         & 0.001        & 0.008        & 0.009           & 0.009          & 0.008         & 0.005        & 0.012       & 0.004       & 0.038       & 0.015       & 0.023       & 0.004      & 0.001       & 0.001       & 0.001       & 0.004       \\
                          & 0.4                    & 0.009         & 0.003        & 0.023          & 0.008         & 0.001        & 0.004        & 0.025           & 0.017          & 0.059         & 0.016        & 0.005       & 0.006       & 0.037       & 0.016       & 0.012       & 0.002      & 0.002       & 0.003       & 0.006       & 0.003       \\
                          & 0.6                    & 0.019         & 0.006        & 0.013          & 0.004         & 0.008        & 0.005        & 0.014           & 0.007          & 0.048         & 0.014        & 0.013       & 0.012       & 0.036       & 0.016       & 0.512       & 0.221      & 0.004       & 0.007       & 0.013       & 0.010       \\ \hline
\multirow{3}{*}{ETTm2}    & 0.2                    & 0.012         & 0.007        & 0.002          & 0.001         & 0.079        & 0.064        & 0.015           & 0.009          & 0.164         & 0.027        & 0.030       & 0.018       & 0.086       & 0.013       & 0.075       & 0.037      & 0.005       & 0.010       & 0.005       & 0.012       \\
                          & 0.4                    & 0.007         & 0.014        & 0.005          & 0.002         & 0.004        & 0.006        & 0.025           & 0.012          & 0.159         & 0.032        & 0.039       & 0.020       & 0.058       & 0.006       & 0.002       & 0.001      & 0.025       & 0.025       & 0.025       & 0.027       \\
                          & 0.6                    & 0.022         & 0.009        & 0.018          & 0.007         & 0.061        & 0.035        & 0.021           & 0.018          & 0.226         & 0.046        & 0.036       & 0.021       & 0.058       & 0.006       & 0.003       & 0.004      & 0.018       & 0.018       & 0.025       & 0.025       \\ \hline
\multirow{3}{*}{Exchange} & 0.2                    & 0.018         & 0.022        & 0.013          & 0.005         & 0.079        & 0.024        & 0.012           & 0.003          & 0.081         & 0.011        & 0.044       & 0.035       & 0.111       & 0.027       & 0.061       & 0.039      & 0.035       & 0.026       & 0.073       & 0.051       \\
                          & 0.4                    & 0.010         & 0.004        & 0.011          & 0.005         & 0.102        & 0.041        & 0.019           & 0.012          & 0.056         & 0.011        & 0.002       & 0.023       & 0.096       & 0.024       & 0.131       & 0.045      & 0.047       & 0.021       & 0.038       & 0.040       \\
                          & 0.6                    & 0.008         & 0.001        & 0.009          & 0.003         & 0.057        & 0.024        & 0.008           & 0.012          & 0.158         & 0.032        & 0.051       & 0.042       & 0.079       & 0.022       & 0.152       & 0.064      & 0.045       & 0.028       & 0.209       & 0.082       \\ \hline
\multirow{3}{*}{Weather}  & 0.2                    & 0.003         & 0.002        & 0.003          & 0.004         & 0.001        & 0.001        & 0.004           & 0.003          & 0.015         & 0.006        & 0.004       & 0.001       & 0.012       & 0.009       & 0.005       & 0.001      & 0.001       & 0.004       & 0.001       & 0.001       \\
                          & 0.4                    & 0.003         & 0.013        & 0.002          & 0.001         & 0.003        & 0.007        & 0.005           & 0.001          & 0.050         & 0.015        & 0.005       & 0.002       & 0.012       & 0.008       & 0.001       & 0.001      & 0.002       & 0.004       & 0.003       & 0.005       \\
                          & 0.6                    & 0.016         & 0.020        & 0.004          & 0.001         & 0.002        & 0.002        & 0.007           & 0.003          & 0.035         & 0.011        & 0.004       & 0.005       & 0.010       & 0.008       & 0.002       & 0.001      & 0.001       & 0.001       & 0.003       & 0.002       \\ \hline
\end{tabular}%
}
\vspace{-3mm}
\caption{The standard deviation in supervised scenarios for various datasets with different missing ratios under MAR conditions. }
\label{table:std_MAR}
\end{table}

\begin{table}[]
\resizebox{\columnwidth}{!}{%
\begin{tabular}{c|c|cccc|cccccccccccccccc}
\hline
\multirow{2}{*}{Dataset}  & \multirow{2}{*}{Ratio} & \multicolumn{2}{c}{DMM\_MAR} & \multicolumn{2}{c|}{DMM\_MNAR} & \multicolumn{2}{c}{TimeCIB} & \multicolumn{2}{c}{ImputeFormer} & \multicolumn{2}{c}{TimesNet} & \multicolumn{2}{c}{SAITS} & \multicolumn{2}{c}{GPVAE} & \multicolumn{2}{c}{CSDI} & \multicolumn{2}{c}{BRITS} & \multicolumn{2}{c}{SSGAN} \\
                          &                        & MSE           & MAE          & MSE            & MAE           & MSE          & MAE          & MSE             & MAE            & MSE           & MAE          & MSE         & MAE         & MSE         & MAE         & MSE         & MAE        & MSE         & MAE         & MSE         & MAE         \\ \hline
\multirow{3}{*}{ETTh1}    & 0.2                    & 0.006         & 0.005        & 0.026          & 0.012         & 0.007        & 0.006        & 0.021           & 0.002          & 0.053         & 0.012        & 0.049       & 0.019       & 0.040       & 0.019       & 0.008       & 0.006      & 0.004       & 0.003       & 0.020       & 0.017       \\
                          & 0.4                    & 0.003         & 0.001        & 0.009          & 0.004         & 0.007        & 0.007        & 0.011           & 0.003          & 0.016         & 0.009        & 0.036       & 0.011       & 0.044       & 0.020       & 0.022       & 0.019      & 0.012       & 0.013       & 0.024       & 0.004       \\
                          & 0.6                    & 0.006         & 0.010        & 0.051          & 0.015         & 0.020        & 0.019        & 0.013           & 0.010          & 0.013         & 0.007        & 0.019       & 0.008       & 0.037       & 0.018       & 0.006       & 0.006      & 0.010       & 0.007       & 0.011       & 0.019       \\ \hline
\multirow{3}{*}{ETTh2}    & 0.2                    & 0.026         & 0.011        & 0.033          & 0.014         & 0.124        & 0.066        & 0.033           & 0.007          & 0.161         & 0.037        & 0.011       & 0.015       & 0.015       & 0.009       & 0.019       & 0.011      & 0.061       & 0.048       & 0.024       & 0.015       \\
                          & 0.4                    & 0.012         & 0.003        & 0.010          & 0.002         & 0.047        & 0.023        & 0.037           & 0.030          & 0.290         & 0.050        & 0.057       & 0.035       & 0.040       & 0.005       & 0.108       & 0.048      & 0.053       & 0.046       & 0.106       & 0.078       \\
                          & 0.6                    & 0.006         & 0.002        & 0.008          & 0.001         & 0.131        & 0.071        & 0.015           & 0.009          & 0.029         & 0.016        & 0.109       & 0.045       & 0.044       & 0.004       & 0.171       & 0.099      & 0.078       & 0.056       & 0.118       & 0.074       \\ \hline
\multirow{3}{*}{ETTm1}    & 0.2                    & 0.002         & 0.001        & 0.026          & 0.019         & 0.002        & 0.003        & 0.022           & 0.015          & 0.026         & 0.003        & 0.013       & 0.004       & 0.039       & 0.018       & 0.003       & 0.004      & 0.001       & 0.002       & 0.003       & 0.006       \\
                          & 0.4                    & 0.011         & 0.008        & 0.005          & 0.002         & 0.003        & 0.005        & 0.016           & 0.010          & 0.057         & 0.010        & 0.010       & 0.004       & 0.037       & 0.015       & 0.004       & 0.004      & 0.002       & 0.002       & 0.002       & 0.005       \\
                          & 0.6                    & 0.032         & 0.016        & 0.002          & 0.006         & 0.004        & 0.005        & 0.017           & 0.015          & 0.058         & 0.015        & 0.013       & 0.002       & 0.042       & 0.017       & 0.002       & 0.001      & 0.004       & 0.003       & 0.003       & 0.002       \\ \hline
\multirow{3}{*}{ETTm2}    & 0.2                    & 0.010         & 0.004        & 0.001          & 0.002         & 0.044        & 0.047        & 0.020           & 0.010          & 0.038         & 0.014        & 0.011       & 0.013       & 0.043       & 0.002       & 0.169       & 0.110      & 0.010       & 0.016       & 0.017       & 0.022       \\
                          & 0.4                    & 0.001         & 0.012        & 0.006          & 0.002         & 0.068        & 0.037        & 0.005           & 0.006          & 0.056         & 0.008        & 0.028       & 0.009       & 0.040       & 0.004       & 0.123       & 0.073      & 0.011       & 0.009       & 0.011       & 0.013       \\
                          & 0.6                    & 0.007         & 0.001        & 0.007          & 0.001         & 0.065        & 0.046        & 0.030           & 0.007          & 0.030         & 0.003        & 0.173       & 0.101       & 0.031       & 0.005       & 0.072       & 0.037      & 0.003       & 0.004       & 0.048       & 0.042       \\ \hline
\multirow{3}{*}{Exchange} & 0.2                    & 0.001         & 0.001        & 0.003          & 0.002         & 0.113        & 0.039        & 0.004           & 0.009          & 0.164         & 0.035        & 0.039       & 0.030       & 0.046       & 0.012       & 0.042       & 0.016      & 0.063       & 0.051       & 0.043       & 0.037       \\
                          & 0.4                    & 0.010         & 0.018        & 0.003          & 0.001         & 0.107        & 0.042        & 0.021           & 0.021          & 0.170         & 0.035        & 0.067       & 0.043       & 0.076       & 0.021       & 0.070       & 0.037      & 0.038       & 0.025       & 0.067       & 0.047       \\
                          & 0.6                    & 0.042         & 0.010        & 0.043          & 0.012         & 0.083        & 0.034        & 0.020           & 0.020          & 0.373         & 0.083        & 0.063       & 0.042       & 0.070       & 0.020       & 0.051       & 0.026      & 0.009       & 0.001       & 0.016       & 0.001       \\ \hline
\multirow{3}{*}{Weather}  & 0.2                    & 0.002         & 0.005        & 0.023          & 0.042         & 0.001        & 0.002        & 0.001           & 0.001          & 0.022         & 0.008        & 0.003       & 0.001       & 0.014       & 0.008       & 0.002       & 0.001      & 0.003       & 0.004       & 0.001       & 0.002       \\
                          & 0.4                    & 0.006         & 0.002        & 0.023          & 0.044         & 0.003        & 0.004        & 0.001           & 0.002          & 0.010         & 0.002        & 0.013       & 0.003       & 0.010       & 0.008       & 0.001       & 0.001      & 0.003       & 0.005       & 0.002       & 0.001       \\
                          & 0.6                    & 0.001         & 0.002        & 0.023          & 0.038         & 0.003        & 0.003        & 0.003           & 0.004          & 0.031         & 0.011        & 0.005       & 0.003       & 0.013       & 0.008       & 0.001       & 0.002      & 0.009       & 0.003       & 0.004       & 0.001       \\ \hline
\end{tabular}%
}
\vspace{-3mm}
\caption{The standard deviation in supervised scenarios for various datasets with different missing ratios under MNAR conditions. }
\label{table:std_MNAR}
\end{table}

\begin{table}[]
\resizebox{\columnwidth}{!}{%
\begin{tabular}{c|c|cccc|cccccccccccccccc}
\hline
\multirow{2}{*}{Dataset}  & \multirow{2}{*}{Ratio} & \multicolumn{2}{c}{DMM\_MAR} & \multicolumn{2}{c|}{DMM\_MNAR} & \multicolumn{2}{c}{TimeCIB} & \multicolumn{2}{c}{ImputeFormer} & \multicolumn{2}{c}{TimesNet} & \multicolumn{2}{c}{SAITS} & \multicolumn{2}{c}{GPVAE} & \multicolumn{2}{c}{CSDI} & \multicolumn{2}{c}{BRITS} & \multicolumn{2}{c}{SSGAN} \\
                          &                        & MSE           & MAE          & MSE            & MAE           & MSE          & MAE          & MSE             & MAE            & MSE           & MAE          & MSE         & MAE         & MSE         & MAE         & MSE         & MAE        & MSE         & MAE         & MSE         & MAE         \\ \hline
\multirow{3}{*}{ETTh1}    & 0.2                    & 0.009         & 0.006        & 0.007          & 0.010         & 0.026        & 0.010        & 0.132           & 0.074          & 0.015         & 0.009        & 0.090       & 0.084       & 0.035       & 0.029       & 0.030       & 0.014      & 0.007       & 0.008       & 0.001       & 0.004       \\
                          & 0.4                    & 0.005         & 0.011        & 0.009          & 0.016         & 0.024        & 0.097        & 0.085           & 0.038          & 0.010         & 0.010        & 0.126       & 0.091       & 0.287       & 0.133       & 0.024       & 0.008      & 0.012       & 0.013       & 0.010       & 0.014       \\
                          & 0.6                    & 0.014         & 0.002        & 0.006          & 0.010         & 0.021        & 0.011        & 0.013           & 0.009          & 0.009         & 0.003        & 0.008       & 0.004       & 0.078       & 0.051       & 0.036       & 0.011      & 0.020       & 0.018       & 0.013       & 0.010       \\ \hline
\multirow{3}{*}{ETTh2}    & 0.2                    & 0.001         & 0.003        & 0.016          & 0.011         & 0.050        & 0.012        & 0.058           & 0.043          & 0.021         & 0.011        & 0.020       & 0.025       & 0.747       & 0.331       & 0.189       & 0.071      & 0.042       & 0.042       & 0.017       & 0.015       \\
                          & 0.4                    & 0.017         & 0.007        & 0.002          & 0.004         & 0.089        & 0.027        & 0.013           & 0.091          & 0.100         & 0.075        & 0.496       & 0.270       & 0.147       & 0.041       & 0.256       & 0.088      & 0.039       & 0.032       & 0.112       & 0.072       \\
                          & 0.6                    & 0.020         & 0.014        & 0.003          & 0.004         & 0.085        & 0.021        & 0.012           & 0.083          & 0.016         & 0.010        & 0.029       & 0.015       & 0.198       & 0.086       & 0.213       & 0.057      & 0.073       & 0.043       & 0.487       & 0.229       \\ \hline
\multirow{3}{*}{ETTm1}    & 0.2                    & 0.002         & 0.004        & 0.002          & 0.001         & 0.009        & 0.015        & 0.138           & 0.066          & 0.003         & 0.002        & 0.001       & 0.002       & 0.002       & 0.001       & 0.003       & 0.002      & 0.002       & 0.003       & 0.002       & 0.001       \\
                          & 0.4                    & 0.002         & 0.004        & 0.001          & 0.001         & 0.351        & 0.023        & 0.095           & 0.046          & 0.002         & 0.001        & 0.001       & 0.001       & 0.005       & 0.006       & 0.001       & 0.001      & 0.002       & 0.004       & 0.003       & 0.002       \\
                          & 0.6                    & 0.001         & 0.003        & 0.004          & 0.006         & 0.020        & 0.017        & 0.034           & 0.019          & 0.002         & 0.001        & 0.004       & 0.006       & 0.008       & 0.008       & 0.012       & 0.007      & 0.003       & 0.007       & 0.002       & 0.004       \\ \hline
\multirow{3}{*}{ETTm2}    & 0.2                    & 0.003         & 0.005        & 0.002          & 0.003         & 0.057        & 0.026        & 0.006           & 0.012          & 0.005         & 0.006        & 0.009       & 0.020       & 0.075       & 0.034       & 0.012       & 0.007      & 0.005       & 0.009       & 0.017       & 0.029       \\
                          & 0.4                    & 0.003         & 0.005        & 0.004          & 0.005         & 0.250        & 0.124        & 0.018           & 0.019          & 0.006         & 0.005        & 0.004       & 0.007       & 0.379       & 0.174       & 0.044       & 0.018      & 0.009       & 0.014       & 0.010       & 0.018       \\
                          & 0.6                    & 0.003         & 0.003        & 0.003          & 0.003         & 0.097        & 0.060        & 0.010           & 0.011          & 0.008         & 0.005        & 0.010       & 0.012       & 0.082       & 0.054       & 0.105       & 0.038      & 0.035       & 0.035       & 0.031       & 0.033       \\ \hline
\multirow{3}{*}{Exchange} & 0.2                    & 0.002         & 0.002        & 0.002          & 0.003         & 0.055        & 0.017        & 0.024           & 0.016          & 0.002         & 0.002        & 0.017       & 0.033       & 0.039       & 0.030       & 0.117       & 0.110      & 0.029       & 0.031       & 0.030       & 0.028       \\
                          & 0.4                    & 0.001         & 0.001        & 0.003          & 0.009         & 0.047        & 0.020        & 0.020           & 0.015          & 0.001         & 0.002        & 0.077       & 0.073       & 0.015       & 0.013       & 0.010       & 0.015      & 0.006       & 0.013       & 0.034       & 0.025       \\
                          & 0.6                    & 0.001         & 0.002        & 0.001          & 0.002         & 0.044        & 0.013        & 0.039           & 0.026          & 0.002         & 0.005        & 0.060       & 0.068       & 0.018       & 0.011       & 0.017       & 0.024      & 0.020       & 0.011       & 0.162       & 0.074       \\ \hline
\multirow{3}{*}{Weather}  & 0.2                    & 0.002         & 0.003        & 0.002          & 0.002         & 0.003        & 0.005        & 0.004           & 0.006          & 0.001         & 0.002        & 0.001       & 0.003       & 0.005       & 0.003       & 0.009       & 0.010      & 0.001       & 0.001       & 0.001       & 0.002       \\
                          & 0.4                    & 0.001         & 0.002        & 0.002          & 0.002         & 0.003        & 0.006        & 0.007           & 0.009          & 0.002         & 0.002        & 0.001       & 0.006       & 0.010       & 0.017       & 0.008       & 0.009      & 0.004       & 0.001       & 0.001       & 0.001       \\
                          & 0.6                    & 0.003         & 0.003        & 0.005          & 0.006         & 0.002        & 0.002        & 0.005           & 0.006          & 0.002         & 0.004        & 0.004       & 0.002       & 0.012       & 0.012       & 0.004       & 0.002      & 0.002       & 0.001       & 0.002       & 0.007       \\ \hline
\end{tabular}%
}
\vspace{-3mm}
\caption{The standard deviation in unsupervised scenarios for various datasets with different missing ratios under MAR conditions. }
\label{table:std_un_MAR}
\end{table}

\begin{table}[]
\resizebox{\columnwidth}{!}{%
\begin{tabular}{c|c|cccc|cccccccccccccccc}
\hline
\multirow{2}{*}{Dataset}  & \multirow{2}{*}{Ratio} & \multicolumn{2}{c}{DMM\_MAR} & \multicolumn{2}{c|}{DMM\_MNAR} & \multicolumn{2}{c}{TimeCIB} & \multicolumn{2}{c}{ImputeFormer} & \multicolumn{2}{c}{TimesNet} & \multicolumn{2}{c}{SAITS} & \multicolumn{2}{c}{GPVAE} & \multicolumn{2}{c}{CSDI} & \multicolumn{2}{c}{BRITS} & \multicolumn{2}{c}{SSGAN} \\
                          &                        & MSE           & MAE          & MSE            & MAE           & MSE          & MAE          & MSE             & MAE            & MSE           & MAE          & MSE         & MAE         & MSE         & MAE         & MSE         & MAE        & MSE         & MAE         & MSE         & MAE         \\ \hline
\multirow{3}{*}{ETTh1}    & 0.2                    & 0.004         & 0.003        & 0.002          & 0.002         & 0.040        & 0.022        & 0.019           & 0.043          & 0.002         & 0.001        & 0.092       & 0.068       & 0.014       & 0.008       & 0.028       & 0.012      & 0.002       & 0.005       & 0.004       & 0.003       \\
                          & 0.4                    & 0.012         & 0.005        & 0.006          & 0.004         & 0.013        & 0.039        & 0.036           & 0.030          & 0.012         & 0.006        & 0.003       & 0.008       & 0.022       & 0.003       & 0.029       & 0.014      & 0.006       & 0.007       & 0.001       & 0.004       \\
                          & 0.6                    & 0.018         & 0.018        & 0.033          & 0.023         & 0.038        & 0.024        & 0.046           & 0.033          & 0.034         & 0.029        & 0.090       & 0.039       & 0.012       & 0.015       & 0.010       & 0.007      & 0.018       & 0.018       & 0.018       & 0.007       \\ \hline
\multirow{3}{*}{ETTh2}    & 0.2                    & 0.007         & 0.007        & 0.006          & 0.008         & 0.074        & 0.020        & 0.805           & 0.336          & 0.007         & 0.005        & 0.019       & 0.019       & 0.489       & 0.201       & 0.078       & 0.044      & 0.052       & 0.045       & 0.029       & 0.029       \\
                          & 0.4                    & 0.004         & 0.005        & 0.002          & 0.002         & 0.082        & 0.025        & 0.030           & 0.020          & 0.009         & 0.003        & 0.017       & 0.007       & 0.343       & 0.157       & 0.170       & 0.033      & 0.097       & 0.061       & 0.023       & 0.006       \\
                          & 0.6                    & 0.020         & 0.011        & 0.018          & 0.012         & 0.082        & 0.021        & 0.085           & 0.052          & 0.040         & 0.020        & 0.029       & 0.012       & 0.157       & 0.079       & 0.048       & 0.018      & 0.076       & 0.038       & 0.209       & 0.124       \\ \hline
\multirow{3}{*}{ETTm1}    & 0.2                    & 0.001         & 0.001        & 0.001          & 0.003         & 0.018        & 0.011        & 0.031           & 0.062          & 0.001         & 0.001        & 0.001       & 0.003       & 0.021       & 0.010       & 0.003       & 0.004      & 0.001       & 0.001       & 0.001       & 0.001       \\
                          & 0.4                    & 0.003         & 0.009        & 0.002          & 0.003         & 0.005        & 0.008        & 0.043           & 0.038          & 0.002         & 0.001        & 0.002       & 0.007       & 0.011       & 0.003       & 0.002       & 0.003      & 0.001       & 0.002       & 0.013       & 0.022       \\
                          & 0.6                    & 0.001         & 0.003        & 0.002          & 0.005         & 0.004        & 0.002        & 0.037           & 0.017          & 0.006         & 0.002        & 0.003       & 0.007       & 0.013       & 0.023       & 0.009       & 0.004      & 0.004       & 0.005       & 0.002       & 0.002       \\ \hline
\multirow{3}{*}{ETTm2}    & 0.2                    & 0.005         & 0.005        & 0.002          & 0.003         & 0.163        & 0.079        & 0.009           & 0.009          & 0.003         & 0.004        & 0.001       & 0.002       & 0.158       & 0.100       & 0.001       & 0.002      & 0.007       & 0.006       & 0.029       & 0.027       \\
                          & 0.4                    & 0.003         & 0.003        & 0.002          & 0.002         & 0.096        & 0.030        & 0.011           & 0.009          & 0.014         & 0.012        & 0.016       & 0.031       & 0.032       & 0.018       & 0.411       & 0.227      & 0.018       & 0.013       & 0.007       & 0.014       \\
                          & 0.6                    & 0.007         & 0.006        & 0.003          & 0.001         & 0.044        & 0.015        & 0.002           & 0.004          & 0.013         & 0.007        & 0.010       & 0.015       & 0.027       & 0.022       & 0.138       & 0.050      & 0.047       & 0.034       & 0.033       & 0.028       \\ \hline
\multirow{3}{*}{Exchange} & 0.2                    & 0.002         & 0.001        & 0.002          & 0.001         & 0.082        & 0.033        & 0.010           & 0.006          & 0.002         & 0.004        & 0.023       & 0.031       & 0.030       & 0.021       & 0.143       & 0.128      & 0.047       & 0.035       & 0.038       & 0.027       \\
                          & 0.4                    & 0.001         & 0.001        & 0.001          & 0.002         & 0.018        & 0.014        & 0.017           & 0.009          & 0.002         & 0.002        & 0.037       & 0.035       & 0.029       & 0.021       & 0.046       & 0.041      & 0.058       & 0.042       & 0.029       & 0.015       \\
                          & 0.6                    & 0.001         & 0.002        & 0.001          & 0.001         & 0.041        & 0.017        & 1.427           & 0.568          & 0.002         & 0.002        & 0.054       & 0.058       & 0.024       & 0.010       & 0.294       & 0.197      & 0.046       & 0.023       & 0.284       & 0.113       \\ \hline
\multirow{3}{*}{Weather}  & 0.2                    & 0.002         & 0.001        & 0.001          & 0.002         & 0.009        & 0.019        & 0.010           & 0.010          & 0.001         & 0.002        & 0.003       & 0.007       & 0.002       & 0.002       & 0.007       & 0.004      & 0.004       & 0.001       & 0.001       & 0.003       \\
                          & 0.4                    & 0.001         & 0.001        & 0.001          & 0.001         & 0.010        & 0.023        & 0.010           & 0.018          & 0.004         & 0.002        & 0.005       & 0.014       & 0.032       & 0.046       & 0.012       & 0.015      & 0.006       & 0.001       & 0.001       & 0.002       \\
                          & 0.6                    & 0.004         & 0.004        & 0.006          & 0.007         & 0.051        & 0.069        & 0.016           & 0.019          & 0.003         & 0.002        & 0.004       & 0.012       & 0.010       & 0.010       & 0.001       & 0.002      & 0.003       & 0.001       & 0.005       & 0.009       \\ \hline
\end{tabular}%
}
\vspace{-3mm}
\caption{The standard deviation in unsupervised scenarios for various datasets with different missing ratios under MNAR conditions. }
\label{table:std_un_MNAR}
\vspace{-3mm}
\end{table}

\begin{table}[]
\resizebox{\columnwidth}{!}{%
\begin{tabular}{c|ccc|ccc}
\hline
Dataset   & \multicolumn{3}{c|}{A-MAR}                       & \multicolumn{3}{c}{A-MNAR}                       \\ \hline
Ratio     & 0.2            & 0.4            & 0.6            & 0.2            & 0.4            & 0.6            \\ \hline
DMM\_MAR  & \textbf{0.939} & \textbf{0.920} & \textbf{0.937} & 0.865          & 0.842          & 0.801          \\ \hline
DMM\_MNAR & 0.891          & 0.879          & 0.868          & \textbf{0.960} & \textbf{0.869} & \textbf{0.855} \\ \hline
\end{tabular}%
}
\vspace{-3mm}
\caption{Experiments results of MCC on missing cause variable \textbf{c}.}
\label{tab:mcc}
\end{table}

\begin{table}[]
\resizebox{\columnwidth}{!}{%
\begin{tabular}{c|c|cccc|cccccccccccccccc}
\hline
\multirow{2}{*}{Dataset}  & \multirow{2}{*}{Ratio} & \multicolumn{2}{c}{DMM-MAR} & \multicolumn{2}{c|}{DMM-MNAR}   & \multicolumn{2}{c}{TimeCIB} & \multicolumn{2}{c}{ImputeFormer} & \multicolumn{2}{c}{TimesNet} & \multicolumn{2}{c}{SAITS} & \multicolumn{2}{c}{GPVAE} & \multicolumn{2}{c}{CSDI} & \multicolumn{2}{c}{BRITS} & \multicolumn{2}{c}{SSGAN} \\
                          &                        & MSE          & MAE          & MSE            & MAE            & MSE          & MAE          & MSE             & MAE            & MSE           & MAE          & MSE         & MAE         & MSE         & MAE         & MSE         & MAE        & MSE         & MAE         & MSE         & MAE         \\ \hline
\multirow{3}{*}{Exchange} & 0.2                    & 0.004        & 0.042        & \textbf{0.003} & \textbf{0.037} & 0.468        & 0.381        & 0.130           & 0.242          & 0.006         & 0.049        & 0.049       & 0.175       & 0.611       & 0.655       & 0.010       & 0.070      & 0.091       & 0.242       & 0.371       & 0.523       \\
                          & 0.4                    & 0.008        & 0.062        & \textbf{0.006} & \textbf{0.046} & 0.479        & 0.427        & 0.078           & 0.194          & 0.011         & 0.064        & 0.062       & 0.192       & 0.702       & 0.699       & 0.012       & 0.075      & 0.197       & 0.366       & 0.631       & 0.686       \\
                          & 0.6                    & 0.011        & 0.076        & \textbf{0.008} & \textbf{0.061} & 0.516        & 0.453        & 0.117           & 0.232          & 0.018         & 0.086        & 0.086       & 0.236       & 0.801       & 0.731       & 0.032       & 0.132      & 0.535       & 0.549       & 0.718       & 0.722       \\ \hline
\end{tabular}%
}
\vspace{-3mm}
\caption{Experiment results for Exchange datasets in unsupervised scenarios with the future time-step influence under MNAR conditions. }
\label{tab:future}
\end{table}

\begin{table}[]
\resizebox{\columnwidth}{!}{%
\begin{tabular}{ccc|ccccc}
\hline
\multicolumn{3}{c|}{Parameter}                                                 & 0     & 0.0001 & 0.001 & 0.01  & 0.1   \\ \hline
\multicolumn{1}{c|}{\multirow{4}{*}{z\_weight}} & \multirow{2}{*}{A-MAR}  & DMM-MAR  & 0.871 & 0.880   & 0.910  & 0.913 & 0.905 \\
\multicolumn{1}{c|}{}                           &                         & DMM-MNAR & 0.866 & 0.879  & 0.887 & 0.878 & 0.872 \\ \cline{2-8} 
\multicolumn{1}{c|}{}                           & \multirow{2}{*}{A-MNAR} & DMM-MAR  & 0.860  & 0.869  & 0.870  & 0.864 & 0.874 \\
\multicolumn{1}{c|}{}                           &                         & DMM-MNAR & 0.899 & 0.915  & 0.917 & 0.911 & 0.908 \\ \hline
\multicolumn{1}{c|}{\multirow{4}{*}{c\_weight}} & \multirow{2}{*}{A-MAR}  & DMM-MAR  & 0.872 & 0.894  & 0.910  & 0.906 & 0.892 \\
\multicolumn{1}{c|}{}                           &                         & DMM-MNAR & 0.869 & 0.874  & 0.887 & 0.868 & 0.863 \\ \cline{2-8} 
\multicolumn{1}{c|}{}                           & \multirow{2}{*}{A-MNAR} & DMM-MAR  & 0.864 & 0.871  & 0.870  & 0.875 & 0.868 \\
\multicolumn{1}{c|}{}                           &                         & DMM-MNAR & 0.893 & 0.909  & 0.917 & 0.912 & 0.902 \\ \hline
\end{tabular}%
}
\vspace{-3mm}
\caption{Sensitivity analysis of prior parameters of temporal latent variables \textbf{z} and missing cause variables \textbf{c} under missing rate of 0.4.}
\label{tab:analysis}
\end{table}

\begin{table}[]
\resizebox{\columnwidth}{!}{%
\begin{tabular}{c|c|cccc|cccccccccccccccc}
\hline
\multirow{2}{*}{Dataset} & \multirow{2}{*}{Ratio} & \multicolumn{2}{c}{DMM-MAR} & \multicolumn{2}{c|}{DMM-MNAR}   & \multicolumn{2}{c}{TimeCIB} & \multicolumn{2}{c}{ImputeFormer} & \multicolumn{2}{c}{TimesNet} & \multicolumn{2}{c}{SAITS} & \multicolumn{2}{c}{GPVAE} & \multicolumn{2}{c}{CSDI} & \multicolumn{2}{c}{BRITS} & \multicolumn{2}{c}{SSGAN} \\
                         &                        & MSE          & MAE          & MSE            & MAE            & MSE          & MAE          & MSE             & MAE            & MSE           & MAE          & MSE         & MAE         & MSE         & MAE         & MSE         & MAE        & MSE         & MAE         & MSE         & MAE         \\ \hline
\multirow{3}{*}{MIMIC-III}   & 0.2                    & 0.018        & 0.179        & \textbf{0.012} & \textbf{0.123} & 0.028        & 0.221        & 0.040           & 0.336          & 0.016         & 0.142        & 0.016       & 0.195       & 0.028       & 0.251       & 0.012       & 0.126      & 0.024       & 0.190       & 0.034       & 0.285       \\
                         & 0.4                    & 0.014        & 0.167        & \textbf{0.013} & \textbf{0.155} & 0.033        & 0.241        & 0.040           & 0.346          & 0.016         & 0.184        & 0.019       & 0.213       & 0.031       & 0.273       & 0.014       & 0.161      & 0.029       & 0.214       & 0.040       & 0.338       \\
                         & 0.6                    & 0.027        & 0.271        & \textbf{0.018} & \textbf{0.189} & 0.035        & 0.252        & 0.043           & 0.339          & 0.022         & 0.225        & 0.041       & 0.327       & 0.036       & 0.294       & 0.019       & 0.214      & 0.033       & 0.236       & 0.042       & 0.365       \\ \hline
\end{tabular}%
}
\vspace{-3mm}
\caption{Experiment results (Magnified 10-fold) in unsupervised scenarios for MIMIC with different missing ratios under MNAR conditions. }
\label{tab:medical}
\end{table}

\begin{table}[]
\resizebox{\columnwidth}{!}{%
\begin{tabular}{c|c|cccc|cccccccccccccccc}
\hline
\multirow{2}{*}{Mix\_Ratio} & \multirow{2}{*}{Mask\_Ratio} & \multicolumn{2}{c}{DMM-MAR}     & \multicolumn{2}{c|}{DMM-MNAR}   & \multicolumn{2}{c}{TimeCIB} & \multicolumn{2}{c}{ImputeFormer} & \multicolumn{2}{c}{TimesNet}    & \multicolumn{2}{c}{SAITS} & \multicolumn{2}{c}{GPVAE} & \multicolumn{2}{c}{CSDI} & \multicolumn{2}{c}{BRITS} & \multicolumn{2}{c}{SSGAN} \\
                            &                              & MSE            & MAE            & MSE            & MAE            & MSE          & MAE          & MSE             & MAE            & MSE            & MAE            & MSE         & MAE         & MSE         & MAE         & MSE         & MAE        & MSE         & MAE         & MSE         & MAE         \\ \hline
\multirow{3}{*}{2:1:1}      & 0.2                          & \textbf{0.006} & \textbf{0.038} & 0.008          & 0.045          & 0.318        & 0.258        & 0.130           & 0.248          & 0.007          & 0.042          & 0.041       & 0.155       & 0.612       & 0.591       & 0.012       & 0.078      & 0.321       & 0.328       & 0.364       & 0.526       \\
                            & 0.4                          & 0.009          & 0.061          & 0.010          & 0.068          & 0.409        & 0.285        & 0.058           & 0.180          & \textbf{0.008} & \textbf{0.057} & 0.103       & 0.250       & 0.813       & 0.700       & 0.015       & 0.081      & 0.339       & 0.368       & 0.411       & 0.563       \\
                            & 0.6                          & 0.021          & 0.098          & 0.023          & 0.102          & 0.523        & 0.389        & 0.186           & 0.283          & \textbf{0.018} & \textbf{0.087} & 0.115       & 0.275       & 0.975       & 0.879       & 0.055       & 0.130      & 0.342       & 0.376       & 0.548       & 0.653       \\ \hline
\multirow{3}{*}{1:2:1}      & 0.2                          & \textbf{0.003} & \textbf{0.041} & 0.004          & 0.043          & 0.356        & 0.311        & 0.082           & 0.203          & 0.007          & 0.053          & 0.039       & 0.152       & 0.684       & 0.644       & 0.014       & 0.077      & 0.362       & 0.395       & 0.379       & 0.514       \\
                            & 0.4                          & \textbf{0.008} & \textbf{0.055} & 0.010          & 0.062          & 0.480        & 0.353        & 0.060           & 0.178          & 0.020          & 0.087          & 0.082       & 0.230       & 0.810       & 0.694       & 0.019       & 0.082      & 0.391       & 0.415       & 0.473       & 0.540       \\
                            & 0.6                          & \textbf{0.010} & \textbf{0.067} & 0.094          & 0.225          & 0.531        & 0.407        & 0.139           & 0.270          & 0.021          & 0.096          & 0.167       & 0.332       & 0.943       & 0.883       & 0.061       & 0.137      & 0.412       & 0.476       & 0.513       & 0.630       \\ \hline
\multirow{3}{*}{1:1:2}      & 0.2                          & 0.004          & 0.052          & \textbf{0.003} & \textbf{0.039} & 0.575        & 0.474        & 0.106           & 0.228          & 0.006          & 0.048          & 0.052       & 0.182       & 0.639       & 0.610       & 0.019       & 0.093      & 0.369       & 0.399       & 0.434       & 0.528       \\
                            & 0.4                          & 0.008          & 0.061          & \textbf{0.006} & \textbf{0.052} & 0.608        & 0.545        & 0.129           & 0.249          & 0.015          & 0.078          & 0.075       & 0.215       & 0.850       & 0.712       & 0.025       & 0.095      & 0.380       & 0.421       & 0.491       & 0.564       \\
                            & 0.6                          & 0.015          & 0.076          & \textbf{0.013} & \textbf{0.070} & 0.628        & 0.563        & 0.199           & 0.302          & 0.024          & 0.092          & 0.159       & 0.325       & 0.955       & 0.870       & 0.144       & 0.212      & 0.394       & 0.430       & 0.528       & 0.633       \\ \hline
\multirow{3}{*}{1:1:1}      & 0.2                          & \textbf{0.003} & \textbf{0.040} & 0.004          & 0.047          & 0.325        & 0.269        & 0.082           & 0.210          & 0.007          & 0.052          & 0.042       & 0.161       & 0.640       & 0.602       & 0.016       & 0.091      & 0.364       & 0.427       & 0.398       & 0.531       \\
                            & 0.4                          & \textbf{0.007} & \textbf{0.056} & 0.010          & 0.067          & 0.453        & 0.325        & 0.098           & 0.226          & 0.019          & 0.086          & 0.087       & 0.230       & 0.846       & 0.717       & 0.021       & 0.092      & 0.376       & 0.447       & 0.490       & 0.594       \\
                            & 0.6                          & \textbf{0.012} & \textbf{0.068} & 0.019          & 0.091          & 0.558        & 0.409        & 0.113           & 0.231          & 0.019          & 0.091          & 0.135       & 0.292       & 0.981       & 0.877       & 0.099       & 0.167      & 0.387       & 0.459       & 0.586       & 0.666       \\ \hline
\end{tabular}%
}
\vspace{-3mm}
\caption{Experiment results in unsupervised scenarios for Exchange dataset under a mixture of different missing mechanisms.}
\label{tab:mix}
\end{table}

\clearpage

\end{document}